\newcolumntype{P}[1]{>{\centering\arraybackslash}p{#1}}
\newtheorem{assumption}{Assumption}
\title{Calibrating the Adaptive Learning Rate to Improve Convergence of ADAM}
\author{\name Qianqian Tong \email qianqian.tong@uconn.edu \\
       \name Guannan Liang \email guannan.liang@uconn.edu \\
       \name Jinbo Bi \email jinbo.bi@uconn.edu\\
       \addr Computer Science and Engineering\\University of Connecticut,
       Storrs, CT 06269}
\begin{document}
\editor{ }
\maketitle

\begin{abstract}
Adaptive gradient methods (AGMs) have become popular in optimizing the nonconvex problems in deep learning area. We revisit AGMs and identify that the adaptive learning rate (A-LR) used by AGMs varies significantly across the dimensions of the problem over epochs (i.e., anisotropic scale), which may lead to issues in convergence and generalization. All existing modified AGMs actually represent efforts in revising the A-LR. Theoretically, we provide a new way to analyze the convergence of AGMs and prove that the convergence rate of \textsc{Adam} also depends on its hyper-parameter $\epsilon$, which has been overlooked previously. Based on these two facts, we propose a new AGM by calibrating the A-LR with an activation ({\em softplus}) function, resulting in the  \textsc{Sadam} and \textsc{SAMSGrad} methods. We further prove that these algorithms enjoy better convergence speed under nonconvex, non-strongly convex, and Polyak-{\L}ojasiewicz conditions compared with \textsc{Adam}.  Empirical studies support our observation of the anisotropic A-LR and show that the proposed methods outperform existing AGMs and generalize even better than S-Momentum in multiple deep learning tasks.
\end{abstract}

\section{Introduction}
Many machine learning problems can be formulated as the minimization of an objective function $f$ of the form: $\min_{x\in \mathbb{R}^d}f(x) = \frac{1}{n} \sum_{i=1}^n f_{i}(x),$ where both $f$ and $f_i$ maybe nonconvex in deep learning.
Stochastic gradient descent (SGD), its variants such as SGD with momentum (S-Momentum) \citep{ghadimi2013stochastic,wright1999numerical,wilson2016lyapunov,yang2016unified}, and adaptive gradient methods (AGMs) \citep{duchi2011adaptive,kingma2014adam,zeiler2012adadelta} play important roles in deep learning area due to simplicity and wide applicability. In particular, AGMs often exhibit fast initial progress in training and are easy to implement in solving large scale optimization problems. The updating rule of AGMs can be generally written as:
\begin{equation}
\label{eq1}
    x_{t+1}=x_{t}- \frac{\eta_t}{\sqrt{v_t}}\odot m_t,
\end{equation}
where $\odot$ calculates element-wise product of the first-order momentum $m_t$ and the learning rate (LR) $\frac{\eta_t}{\sqrt{v_t}}$.
There is fairly an agreement on how to compute $m_t$, which is a convex combination of previous $m_{t-1}$ and current stochastic gradient $g_t$, i.e., $m_t=\beta_1 m_{t-1}+(1-\beta_1)g_t$, $\beta_1\in [0,1]$. The LR consists of two parts: the base learning rate (B-LR) $\eta_t$ is a scalar which can be constant or decay over iterations. In our convergence analysis, we consider the B-LR as constant $\eta$. The adaptive learning rate (A-LR), $\frac{1}{\sqrt{v_t}}$, varies adaptively across dimensions of the problem, where $v_t\in \mathbb{R}^d$ is the second-order momentum calculated as a combination of previous and current squared stochastic gradients. Unlike the first-order momentum, the formula to estimate the second-order momentum varies in different AGMs. As the core technique in AGMs, A-LR opens a new regime of controlling LR, and allows the algorithm to move with different step sizes along the search direction at different coordinates.

The first known AGM is \textsc{Adagrad} \citep{duchi2011adaptive} where the second-order momentum is estimated as $v_t = \sum_{i=1}^{t}g_i^2$. It works well in sparse settings, but the A-LR often decays rapidly for dense gradients. To tackle this issue, \textsc{Adadelta} \citep{zeiler2012adadelta}, \textsc{Rmsprop} \citep{tieleman2012lecture}, \textsc{Adam} \citep{kingma2014adam} have been proposed to use exponential moving averages of past squared gradients, i.e., $v_t = \beta_2 v_{t-1}+ (1-\beta_2)g_t^2$, $\beta_2\in [0,1]$ and calculate the A-LR by $\frac{1}{\sqrt{v_t}+\epsilon}$ where $\epsilon>0$ is used in case that $v_t$ vanishes to zero. In particular, \textsc{Adam} has become the most popular optimizer in the deep learning area due to its effectiveness in early training stage. Nevertheless, it has been empirically shown that \textsc{Adam} generalizes worse than S-Momentum to unseen data and leaves a clear generalization gap \citep{he2016deep,zagoruyko2016wide,huang2017densely}, and even fails to converge in some cases \citep{reddi2018convergence,luo2019adaptive}. AGMs decrease the objective value rapidly in early iterations, and then stay at a plateau whereas SGD and S-Momentum continue to show dips in the training error curves, and thus continue to improve test accuracy over iterations. 
It is essential to understand what happens to \textsc{Adam} in the later learning process, so we can revise AGMs to enhance their generalization performance.

Recently, a few modified AGMs have been developed, such as, \textsc{AMSGrad} \citep{reddi2018convergence}, \textsc{Yogi} \citep{zaheer2018adaptive}, and \textsc{AdaBound} \citep{luo2019adaptive}. 
\textsc{AMSGrad} is the first method to theoretically address the non-convergence issue of \textsc{Adam} by taking the largest second-order momentum estimated in the past iterations, i.e., $v_t = max\{v_{t-1}, \tilde{v}_{t}\}$ where $\tilde{v}_t = \beta_2 {\tilde{v}}_{t-1}+ (1-\beta_2)g_t^2$, and proves its convergence in the convex case. The analysis is later extended to other AGMs (such as \textsc{RMSProp} and \textsc{AMSGrad}) in nonconvex settings \citep{zhou2018convergence,chen2018convergence,de2018convergence,staib2019escaping}. 
\textsc{Yogi} claims that the past $g_t^2$'s are forgotten in a fairly fast manner in \textsc{Adam} and proposes $v_t = v_{t-1} - (1-\beta_2)sign( v_{t-1} - g_t^2)g_t^2$ to adjust the decay rate of the A-LR and $\epsilon$ is adjusted to $10^{-3}$ to improve performance. 
\textsc{PAdam}\footnote{The \textsc{PAdam} in \citep{chen2018closing} actually used \textsc{AMSGrad}, and for clear  comparison, we named it \textsc{PAMSGrad}. In our experiments, we also compared with the \textsc{Adam} that used the A-LR formula with $p$, which we named \textsc{PAdam}.} \citep{chen2018closing,zhou2018convergence} claims that the A-LR in \textsc{Adam} and \textsc{AMSGrad} are ``overadapted'', and proposes to replace the A-LR updating formula by $1/((v_t)^p +\epsilon)$ where $p \in (0, 1/2]$.
\textsc{AdaBound} confines the LR to a predefined range by applying $Clip(\frac{\eta}{\sqrt{v_t}}, \eta_l, \eta_r)$, where LR values outside the interval $[\eta_l, \eta_r]$ are clipped to the interval edges. 
However, a more effective way is to softly and smoothly calibrate the A-LR rather than hard-thresholding the A-LR at all coordinates.

Our main contributions are summarized as follows:
\begin{itemize}
  \item We study AGMs from a new perspective: the range of the A-LR. Through experimental studies, we find that the A-LR is always anisotropic. This anisotropy may lead the algorithm to focus on a few dimensions (those with large A-LR), which may exacerbate generalization performance. We analyze the existing modified AGMs to help explain how they close the generalization gap. 
  
  \item Theoretically, we are the first to include hyper-parameter $\epsilon$ into the convergence analysis and clearly show that the convergence rate is upper bounded by a $1/\epsilon^2$ term, verifying prior observations that $\epsilon$ affects performance of \textsc{Adam} empirically. We provide a new approach to convergence analysis of AGMs under the nonconvex, non-strongly convex, or Polyak-${\L}$ojasiewicz (P-L) condition.
 
  \item Based on the above two results, we propose to calibrate the A-LR using an activation function, particularly we implement the {\em softplus} function with a hyper-parameter $\beta$, which can be combined with any AGM. In this work, we combine it with \textsc{Adam} and \textsc{AMSGrad} to form the \textsc{Sadam} and \textsc{SAMSGrad} methods. 
  
  \item We also provide theoretical guarantees of our methods, which enjoy better convergence speed than \textsc{Adam} and recover the same convergence rate as  SGD in terms of the maximum iteration $T$ as $O(1/\sqrt{ T})$ rather than the known result: $O(\log(T) /\sqrt{T})$ in \citep{chen2018convergence}. 
  Empirical evaluations show that our methods obviously increase test accuracy, and outperform many AGMs and even S-Momentum in multiple deep learning models.
\end{itemize}

\section{Preliminary}
\textbf{Notation.} For any vectors $a,b \in \mathbb{R}^d$, we use $a\odot b$ for element-wise product, $a^2$ for element-wise square, $\sqrt{a}$ for element-wise square root, $a/b$ for element-wise division; we use $a^k$ to denote element-wise power of $k$, and $\|a\|$ to denote its $l_2$-norm. We use $\langle a,b \rangle$ to denote their inner product, $max\{a,b\}$ to compute element-wise maximum. $e$ is the Euler number, $\log(\cdot)$ denotes logarithm function with base $e$, and $O(\cdot)$ to hide constants which do not rely on the problem parameters.

\noindent\textbf{Optimization Terminology.}
In convex setting, the optimality gap, $f(x_t)-f^*$, is examined where $x_t$ is the iterate at iteration $t$, and $f^*$ is the optimal value attained at $x^*$ assuming that $f$ does have a minimum. When $f(x_t)-f^* \leq \delta$, it is said that the method reaches an optimal solution with $\delta$-accuracy. However, in the study of AGMs, the average regret $\frac{1}{T}\sum_{t=1}^T (f(x_t) - f^*)$ (where the maximum iteration number $T$ is pre-specified) is used to approximate the optimality gap to define  $\delta$-accuracy. 
Our analysis moves one step further to examine if $f(\frac{1}{T}\sum_{t=1}^T x_t) - f^* \le \delta$ by applying Jensen's inequality to the regret.

In nonconvex setting, finding the global minimum or even local minimum is NP-hard, so optimality gap is not examined. Rather, it is common to evaluate if a first-order stationary point has been achieved \citep{reddi2015variance,reddi2018convergence,zaheer2018adaptive}. More precisely, we evaluate if $E[\|\nabla f(x_t)\|^2] \leq \delta$ (e.g., in the analysis of SGD \citep{ghadimi2013stochastic}). The convergence rate of SGD is $O(1/\sqrt{T})$ in both non-strongly convex and nonconvex settings. Requiring $O(1/\sqrt{T})\leq \delta$ yields the maximum number of iterations $T=O(1/\delta^2)$. Thus, SGD can obtain a $\delta$-accurate solution in $O(1/\delta^2)$ steps in non-strongly convex and nonconvex settings. Our results recover the rate of SGD and S-Momentum in terms of $T$.

\begin{assumption}\label{assumption}
The loss $f_{i}$ and the objective $f$ satisfy:
 \renewcommand{\labelenumii}{\Roman{enumii}} 
 \begin{enumerate}
   \item \textbf{L-smoothness.} $\forall x, y \in \mathbb{R}^d, \forall i \in \{1,...,n\}$, $\|\nabla f_{i}(x)-\nabla f_{i}(y)\|\leq L \|x-y\|$.
   \item \textbf{Gradient bounded.} $\forall x \in \mathbb{R}^d, \forall i \in \{1,...,n\}$, $\|\nabla f_i(x)\|\leq G$, $G\geq 0$.
   \item \textbf{Variance bounded.} $\forall x\in \mathbb{R}^d$, $t\geq 1$, $E[g_t]=\nabla f(x_t)$, $E[\| g_t-\nabla f(x_t)\|^2]\leq \sigma^2$.
 \end{enumerate}
\end{assumption}

\begin{definition}\label{def}
Suppose $f$ has the global minimum, denoted as $f^* = f(x^*)$. Then for any $x, y\in \mathbb{R}^d$,
\renewcommand{\labelenumii}{\Roman{enumii}}
 \begin{enumerate}
   \item \textbf{Non-strongly convex.} $f(y)\geq f(x) + \nabla f(x)^T (y-x)$.
   \item \textbf{Polyak-{\L}ojasiewicz (P-L) condition.} $\exists \lambda >0$ such that $\|\nabla f (x)\|^2 \geq 2\lambda (f(x)-f^*)$.
   \item \textbf{Strongly convex.} $\exists~ \mu >0$ such that $f(y)\geq f(x) + \nabla f(x)^T (y-x) + \frac{\mu}{2}\|y-x\|^2$.
 \end{enumerate}
\end{definition}

\section{Our New Analysis of \textsc{Adam}}

First, we empirically observe that \textsc{Adam} has anisotropic A-LR, which may lead to poor generalization performance. Second,we theoretically show \textsc{Adam} method is sensitive to $\epsilon$, supporting observations in previous work.

\subsection{Anisotropic A-LR.}
We investigate how the A-LR in \textsc{Adam} varies over time and across problem dimensions, and plot four examples in Figure \ref{fig:range} (more figures in Appendix) where we run \textsc{Adam} to optimize a convolutional neural network (CNN) on the MNIST dataset, and ResNets or DenseNets on the CIFAR-10 dataset.
The curves in Figure \ref{fig:range} exhibit very irregular shapes, and the median value is hardly placed in the middle of the range, the range of A-LR across the problem dimensions is anisotropic for AGMs. As a general trend, the A-LR becomes larger when $v_t$ approaches $0$ over iterations.
The elements in the A-LR vary significantly across dimensions and there are always some coordinates in the A-LR of AGMs that reach the maximum $10^8$ determined by $\epsilon$ (because we use $\epsilon = 10^{-8}$ in \textsc{Adam}). 

This anisotropic scale of A-LR across dimensions makes it difficult to determine the B-LR, $\eta$. On the one hand, $\eta$ should be set small enough so that the LR $\frac{\eta}{\sqrt{v_t}+\epsilon}$ is appropriate, or otherwise some coordinates will have very large updates because the corresponding A-LR's are big, likely resulting in performance oscillation \citep{kleinberg2018alternative}.
This may be due to that exponential moving average of past gradients is different, hence the speed of $m_t$ diminishing to zero is different from the speed of $\sqrt{v_t}$ diminishing to zero. Besides, noise generated in stochastic algorithms has nonnegligible influence to the learning process. 
On the other hand, very small $\eta$ may harm the later stage of the learning process  since the small magnitude of $m_t$ multiplying with a small step size (at some coordinates) will be too small to escape sharp local minimal, which has been shown to lead to poor generalization \citep{keskar2016large,chaudhari2016entropy,li2018visualizing}. Further, in many deep learning tasks, stage-wise policies are often taken to decay the LR after several epochs, thus making the LR even smaller. To address the dilemma, it is essential to control the A-LR, especially when stochastic gradients get close to 0.

By analyzing previous modified AGMs that aim to close the generalization gap, we find that all these works can be summarized into one technique: constraining the A-LR, $1/(\sqrt{v_t}+\epsilon)$, to a reasonable range. Based on the observation of anisotropic A-LR, we propose a more effective way to calibrate the A-LR according to an activation function rather than hard-thresholding the A-LR at all coordinates, empirically improve generalization performance with theoretical guarantees of both optimization and generalization error analysis. 

\begin{figure}[t]
  \centering
  \includegraphics[width=0.9\linewidth]{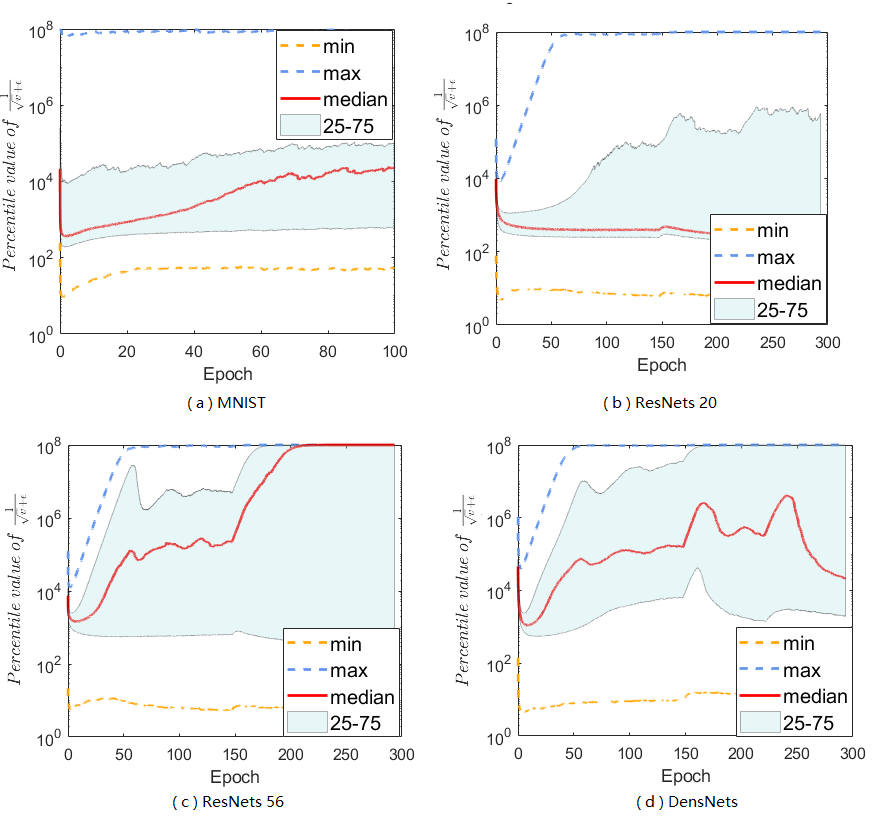}
  \caption{Range of the A-LR in \textsc{Adam} over iterations in four settings: (a) CNN on MNIST, (b) ResNet20 on CIFAR-10, (d) ResNet56 on CIFAR-10, (d) DenseNets on CIFAR-10. We plot the min, max, median, and the 25 and 75 percentiles of the A-LR across dimensions (the elements in  
  $\frac{1}{\sqrt{v_t}+\epsilon}$).}
  \label{fig:range}
\end{figure}

\subsection{Sensitive to $\epsilon$.}
As a hyper-parameter in AGMs, $\epsilon$ is originally introduced to avoid the zero denominator issue when $v_t$ goes to 0, and has never been studied in the convergence analysis of AGMs. However, it has been empirically observed that AGMs can be sensitive to the choice of $\epsilon$ \citep{de2018convergence,zaheer2018adaptive}. As shown in Figure \ref{fig:range}, a smaller $\epsilon = 10^{-8}$ leads to a wide span of the A-LR across the different dimensions, whereas a bigger $\epsilon = 10^{-3}$ as used in \textsc{Yogi}, reduces the span. The setting of $\epsilon$ is the main force causing anisotropy, unsatisfied, there has no theoretical result explains the effect of $\epsilon$ on AGMs. Inspired by our observation, we believe that the current convergence analysis for ADAM is not complete if omitting $\epsilon$. 

Most of the existing convergence analysis follows the line in \citep{reddi2018convergence} to first project the sequence of the iterates into a minimization problem as $x_{t+1} =x_t- \frac{\eta}{\sqrt{v_t}}m_t= \min_x \|v_t^{1/4}\big(x-(x_t- \frac{\eta}{\sqrt{v_t}}m_t)\big)\|$, and then examine if $||v_t^{1/4} (x_{t+1} - x^*)||$ decreases over iterations. Hence, $\epsilon$ is not discussed in this line of proof because it is not included in the step size. In our later convergence analysis section, we introduce an important lemma, bounded A-LR, and by using the bounds of the A-LR (specifically, the lower bound $\mu_1$ and upper bound $\mu_2$ both containing $\epsilon$ for \textsc{Adam}), we give a new general framework of prove (details in Appendix) to show the convergence rate for reaching an $x$ that satisfies $E[\|\nabla f(x_t)\|^2] \leq \delta$ in the nonconvex setting. Then, we also derive the optimality gap from the stationary point in the convex and P-L settings (strongly convex).

\begin{theorem}\label{th1}
\textbf{[Nonconvex]}
Suppose $f(x)$ is a nonconvex function that satisfies Assumption \ref{assumption}. Let $\eta_t = \eta = O(\frac{1}{\sqrt{T}})$, \textsc{ADAM} has
\begin{equation*}
    \min_{t=1,\dots,T}E[\|\nabla f(x_t)\|^2] \leq O( \frac{1}{\epsilon^2 \sqrt{T}}+ \frac{d}{\epsilon T} + \frac{d}{\epsilon^2 T\sqrt{T}}).
\end{equation*}
\end{theorem}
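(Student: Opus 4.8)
The plan is to combine a \emph{bounded adaptive learning rate (A-LR) lemma} with the classical $L$-smoothness descent argument, taking care to keep $\epsilon$ explicit at every step. First I would establish the two-sided bound on the A-LR: since $\|g_t\|\leq G$ by Assumption~\ref{assumption}, each coordinate of $v_t=\beta_2 v_{t-1}+(1-\beta_2)g_t^2$ obeys $0\leq v_{t,i}\leq G^2$, so
\[
\mu_1:=\frac{1}{G+\epsilon}\;\leq\;\frac{1}{\sqrt{v_{t,i}}+\epsilon}\;\leq\;\frac{1}{\epsilon}=:\mu_2\qquad\text{for all }t,i.
\]
Here $1/\mu_1=G+\epsilon$ is a constant while $\mu_2^2=1/\epsilon^2$, so the ratio $\mu_2^2/\mu_1$ already carries the $1/\epsilon^2$ factor that will dominate the final bound. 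I would also use $\|m_t\|\leq G$, which holds because $m_t$ is a convex combination of gradients each bounded by $G$.

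Next I would apply $L$-smoothness to consecutive iterates of \eqref{eq1},
\[
f(x_{t+1})\;\leq\;f(x_t)-\eta\Big\langle \nabla f(x_t),\,\tfrac{m_t}{\sqrt{v_t}+\epsilon}\Big\rangle+\tfrac{L\eta^2}{2}\Big\|\tfrac{m_t}{\sqrt{v_t}+\epsilon}\Big\|^2,
\]
and bound the last term crudely by $\tfrac{L\eta^2}{2}\mu_2^2\|m_t\|^2\leq \tfrac{L\eta^2 G^2}{2\epsilon^2}$ --- this is the origin of the leading $1/\epsilon^2$. The crux is the first-order term. I would (i) decouple the stochastic gradient $g_t$ from the adaptive denominator by replacing $\tfrac{1}{\sqrt{v_t}+\epsilon}$ with $\tfrac{1}{\sqrt{v_{t-1}}+\epsilon}$, which is measurable given $g_1,\dots,g_{t-1}$, so that a conditional expectation turns $g_t$ into $\nabla f(x_t)$ and yields a term at least $(1-\beta_1)\mu_1\|\nabla f(x_t)\|^2$; the swap error is controlled through $\big|\tfrac{1}{\sqrt{v_{t,i}}+\epsilon}-\tfrac{1}{\sqrt{v_{t-1,i}}+\epsilon}\big|\leq\tfrac{|\sqrt{v_{t,i}}-\sqrt{v_{t-1,i}}|}{\epsilon^2}$ together with bounds on $\sum_t\|v_t-v_{t-1}\|_1$, which is where the dimension $d$ enters. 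I would (ii) handle the momentum by unrolling $m_t=(1-\beta_1)\sum_{s\leq t}\beta_1^{t-s}g_s$ and absorbing the stale-gradient mismatch via $\|\nabla f(x_s)-\nabla f(x_t)\|\leq L\|x_s-x_t\|\leq L\eta\mu_2 G\,(t-s)$, which feeds additional $\eta/\epsilon$ factors into the error; residual $\|\nabla f(x_t)\|$ terms and the noise variance $\sigma^2$ are reabsorbed with Young's inequality.

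Finally I would sum the descent inequality over $t=1,\dots,T$, telescope the left side to $f(x_1)-f^*$, move all error terms to the right, and divide by $(1-\beta_1)\mu_1\eta T$ (after possibly shrinking $\mu_1$ by an absolute constant coming from the Young steps). Using $\min_{t}E[\|\nabla f(x_t)\|^2]\leq\tfrac1T\sum_{t}E[\|\nabla f(x_t)\|^2]$ and substituting $\eta$ of order $1/\sqrt T$ produces exactly the three advertised pieces: the $\tfrac{1}{\epsilon^2\sqrt T}$ term from the second-order contribution $\tfrac{L\eta^2\mu_2^2 G^2 T/2}{(1-\beta_1)\mu_1\eta T}=\Theta(\eta/\epsilon^2)$, and the $\tfrac{d}{\epsilon T}$ and $\tfrac{d}{\epsilon^2 T\sqrt T}$ terms from the coordinate-wise A-LR-variation and noise errors after normalization. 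The main obstacle I anticipate is steps (i)--(ii): simultaneously taming the bias created by the correlation between $g_t$ and $v_t$ and the contamination from momentum mixing in stale gradients, and then verifying that after summation these errors are genuinely of the claimed order in $\epsilon$, $d$ and $T$ --- in particular that they do not destroy the $O(1/\sqrt T)$ dependence on $T$. Everything downstream is routine bookkeeping with the constants $\mu_1,\mu_2,G,\sigma,L$.
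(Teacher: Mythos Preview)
Your plan differs from the paper's in two structural ways. First, to handle momentum the paper introduces the auxiliary sequence $z_t = x_t + \frac{\beta_1}{1-\beta_1}(x_t - x_{t-1})$ and applies $L$-smoothness to $z_t$ rather than to $x_t$. The key identity is
\[
z_{t+1}-z_t \;=\; \frac{\eta\beta_1}{1-\beta_1}\Bigl(\tfrac{1}{\sqrt{v_{t-1}}+\epsilon}-\tfrac{1}{\sqrt{v_t}+\epsilon}\Bigr)\odot m_{t-1}\;-\;\tfrac{\eta}{\sqrt{v_t}+\epsilon}\odot g_t,
\]
so the first-order descent term already involves $g_t$ directly rather than $m_t$, and momentum is absorbed without your unrolling step (ii). Second, the paper does \emph{not} perform your swap (i): it lower-bounds $E\bigl[\langle\nabla f(x_t),\tfrac{1}{\sqrt{v_t}+\epsilon}\odot g_t\rangle\bigr]\geq \mu_1\|\nabla f(x_t)\|^2$ by a direct coordinate-wise sign argument, and the $d/(\epsilon T)$ and $d/(\epsilon^2 T\sqrt T)$ pieces then arise purely from \emph{telescoping} the A-LR-difference correction over $t$, e.g.\ $\sum_{t}\sum_{j}\bigl(\tfrac{1}{\sqrt{v_{t-1,j}}+\epsilon}-\tfrac{1}{\sqrt{v_{t,j}}+\epsilon}\bigr)\leq d(\mu_2-\mu_1)$ and the analogous squared version.

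One caveat your sketch must address: both the paper's telescoping and your proposed control of $\sum_t\|v_t-v_{t-1}\|_1$ (really $\sum_t\|\sqrt{v_t}-\sqrt{v_{t-1}}\|_1$, given your swap estimate) require the monotonicity $v_t\geq v_{t-1}$; the paper states this as a standing hypothesis at the start of its proof, and without it your sum is $O(dT)$ rather than $O(d)$, which would destroy the rate. With that assumption in place, the $z_t$ device buys cleaner bookkeeping---no geometric sums over stale gradients and no Young-type reabsorption of residual $\|\nabla f(x_t)\|$---while your swap handles the $g_t$--$v_t$ correlation more carefully than the paper's sign argument. Both routes land on the same three-term bound.
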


\begin{theorem}\label{th2}
\textbf{[Non-strongly Convex]}
Suppose $f(x)$ is a convex function that satisfies Assumption \ref{assumption}. Assume that $\forall t$, $E[\|x_{t}-x^* \| \leq D$, for any $m \ne n$, $E[\|x_{m}-x_{n} \|] \leq D_{\infty}$, let $\eta_t = \eta = O(\frac{1}{\sqrt{T}})$, \textsc{ADAM} has convergence rate $f(\Bar{x}_t)-f^*\leq O(\frac{d}{\epsilon^2\sqrt{T}})$, where $\Bar{x}_t = \frac{1}{T}\sum_{t=1}^T x_t$.
\end{theorem}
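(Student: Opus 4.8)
The plan is to prove Theorem~\ref{th2} through the ``bounded A-LR'' framework outlined above, which keeps $\epsilon$ inside every estimate, rather than through the $v_t^{1/4}$-reduction of \citep{reddi2018convergence}. The first ingredient is the two-sided bound on the adaptive learning rate of \textsc{Adam}: since $\|\nabla f_i(x)\|\le G$ (gradient boundedness in Assumption~\ref{assumption}) and $v_t$ is an exponential moving average of the coordinatewise squares $g_t^2$, we have $0\preceq v_t\preceq G^2\mathbf 1$, hence for every $t$ and coordinate $i$,
\begin{equation*}
\mu_1:=\frac{1}{G+\epsilon}\ \le\ \frac{1}{\sqrt{v_{t,i}}+\epsilon}\ \le\ \frac{1}{\epsilon}=:\mu_2,
\end{equation*}
with the obvious modification if the bias-corrected $v_t/(1-\beta_2^t)$ is used. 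Both endpoints depend on $\epsilon$, and it is $\mu_2=1/\epsilon$, squared, that eventually produces the $1/\epsilon^2$ in the rate.

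Second, I would reduce the statement to a regret bound. Convexity gives $f(x_t)-f^*\le\langle\nabla f(x_t),x_t-x^*\rangle$, so after summing over $t$, dividing by $T$, and applying Jensen's inequality to $f(\bar x_T)=f(\tfrac1T\sum_t x_t)$ it suffices to show $\tfrac1T\sum_{t=1}^T E\big[\langle\nabla f(x_t),x_t-x^*\rangle\big]=O\!\big(\tfrac{d}{\epsilon^2\sqrt T}\big)$. To connect this with the iteration I would expand the $\psi_t$-weighted squared distance $\Phi_t:=\sum_i(\sqrt{v_{t,i}}+\epsilon)(x_{t,i}-x_i^*)^2$, which obeys $\mu_2^{-1}\|x_t-x^*\|^2\le\Phi_t\le\mu_1^{-1}\|x_t-x^*\|^2$; using $x_{t+1}=x_t-\eta\,m_t/(\sqrt{v_t}+\epsilon)$ the adaptive factor cancels in the cross term, giving
\begin{equation*}
\sum_i(\sqrt{v_{t,i}}+\epsilon)(x_{t+1,i}-x_i^*)^2=\Phi_t-2\eta\langle m_t,x_t-x^*\rangle+\eta^2\sum_i\frac{m_{t,i}^2}{\sqrt{v_{t,i}}+\epsilon}.
\end{equation*}
The first-order momentum is removed by unrolling $m_t=(1-\beta_1)\sum_{j\le t}\beta_1^{t-j}g_j$ and exchanging the order of summation (equivalently, passing to the auxiliary sequence $z_t=x_t+\tfrac{\beta_1}{1-\beta_1}(x_t-x_{t-1})$): this rewrites $\sum_t\langle m_t,x_t-x^*\rangle$ as $\sum_t\langle g_t,x_t-x^*\rangle$ plus ``momentum-lag'' terms that are $O(\eta)$ per step because $\|g_t\|\le G$, and then $E[g_t\mid\mathcal F_{t-1}]=\nabla f(x_t)$ turns the leading part into $\sum_t E[\langle\nabla f(x_t),x_t-x^*\rangle]$, which is exactly the quantity to bound.

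Third, summing the display and telescoping $\Phi_t$ leaves three contributions: the initial term $\Phi_1\le\mu_1^{-1}D^2=(G+\epsilon)D^2$; the second-moment term $\eta^2\sum_t\sum_i m_{t,i}^2/(\sqrt{v_{t,i}}+\epsilon)\le\eta^2 T\mu_2 dG^2=\eta^2 TdG^2/\epsilon$; and the ``preconditioner-drift'' term $\sum_t\sum_i(\psi_{t+1,i}-\psi_{t,i})(x_{t+1,i}-x_i^*)^2$ that appears precisely because $\psi_t=\sqrt{v_t}+\epsilon\ne\psi_{t+1}$. For the drift I would use $(x_{t+1,i}-x_i^*)^2\le 2(x_{t,i}-x_i^*)^2+2\eta^2 m_{t,i}^2/\psi_{t,i}^2$ together with $m_{t,i}^2/\psi_{t,i}^2\le G^2/\epsilon^2$ (here the \emph{second} power of $1/\epsilon$ enters, via $\psi_{t,i}\ge\epsilon$), bound the increments $|\psi_{t+1,i}-\psi_{t,i}|=|\sqrt{v_{t+1,i}}-\sqrt{v_{t,i}}|$ through the per-step change of the moving average (controlled by $1-\beta_2$ and $G$), and invoke the hypotheses $E\|x_t-x^*\|\le D$ and $E\|x_m-x_n\|\le D_\infty$ to keep the remaining coefficients bounded. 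Collecting the three contributions, dividing by $2\eta$ and then by $T$, and taking $\eta=\Theta(1/\sqrt T)$ gives $\tfrac1T\sum_t E[f(x_t)-f^*]=O(d/(\epsilon^2\sqrt T))$, and Jensen yields the claimed bound for $\bar x_T$.

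I expect the preconditioner-drift term to be the main obstacle. Unlike \textsc{AMSGrad}, \textsc{Adam}'s $v_t$ — hence $\psi_t$ — is not monotone, so $\sum_t|\psi_{t+1,i}-\psi_{t,i}|$ does not telescope, and a crude per-step bound makes this sum grow with $T$ and would destroy the rate; controlling it requires using the $\epsilon$-floor $\psi_{t,i}\ge\epsilon$ jointly with the $O(1-\beta_2)$ magnitude of consecutive changes in $v_t$ and the iterate-boundedness assumptions, all kept in expectation because $\psi_t$ is itself random, so that the drift contributes only at the level of the $1/\epsilon^2$ constant rather than as a term growing in $T$. This is also the place where the $\epsilon$-dependence of the rate — the qualitative novelty of the theorem relative to $\epsilon$-free analyses — is genuinely produced.
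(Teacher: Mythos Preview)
Your route diverges from the paper's in two coupled ways, and the second of them is a genuine gap.

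First, the Lyapunov function. You track the preconditioned potential $\Phi_t=\sum_i(\sqrt{v_{t,i}}+\epsilon)(x_{t,i}-x_i^*)^2$, which is exactly the $\|v_t^{1/4}(x_t-x^*)\|^2$-style quantity the paper explicitly sets aside. The paper instead works with the \emph{unweighted} squared distance $\|z_{t+1}-x^*\|^2$ for the auxiliary iterate $z_t=x_t+\tfrac{\beta_1}{1-\beta_1}(x_t-x_{t-1})$, and uses the two-sided bound $\mu_1\le 1/(\sqrt{v_{t,j}}+\epsilon)\le\mu_2$ only to extract $-2\eta\mu_1(f(x_t)-f^*)$ from the cross term and to dominate the remaining quadratic pieces by $\mu_2^2$. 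In that decomposition the adaptive drift does not enter as $\psi_{t+1}-\psi_t$ multiplying $(x_{t+1}-x^*)^2$; it enters as $\big(\tfrac{1}{\sqrt{v_{t-1}}+\epsilon}-\tfrac{1}{\sqrt{v_t}+\epsilon}\big)$ multiplying $m_{t-1}$, coming from the $z_t$ update rule. The assumptions $E\|x_t-x^*\|\le D$ and $E\|x_m-x_n\|\le D_\infty$ are then used only to bound $E\|z_t-x^*\|^2\le 2D^2+\tfrac{2\beta_1^2}{(1-\beta_1)^2}D_\infty^2$. The $d/\epsilon^2$ constant arises from terms of the form $\mu_2^2/\mu_1$ and $d(\mu_2^2-\mu_1^2)/\mu_1$, not from your $m_{t,i}^2/\psi_{t,i}^2\le G^2/\epsilon^2$ in the drift correction.

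Second, the drift itself. The paper does not in fact handle the non-monotone case: at the outset of the proof it \emph{assumes} $v_t\ge v_{t-1}$ (so the argument is really for the \textsc{AMSGrad} variant), which makes $\tfrac{1}{\sqrt{v_{t-1,j}}+\epsilon}-\tfrac{1}{\sqrt{v_{t,j}}+\epsilon}\ge 0$ and allows both $(a-b)^2\le a^2-b^2$ and a clean telescope to $d(\mu_2-\mu_1)$ and $d(\mu_2^2-\mu_1^2)$. Your proposal is more honest about vanilla \textsc{Adam}, but the fix you sketch does not close. With $|v_{t+1,i}-v_{t,i}|\le(1-\beta_2)G^2$ you get at best $|\psi_{t+1,i}-\psi_{t,i}|\le\sqrt{1-\beta_2}\,G$ per step, so $\sum_{t}|\psi_{t+1,i}-\psi_{t,i}|$ is $\Theta(T)$ in the worst case; multiplying by $(x_{t+1,i}-x_i^*)^2\le D^2$ and dividing by $2\eta T=\Theta(\sqrt T)$ leaves a contribution growing like $\sqrt T$, which destroys the rate exactly as you feared. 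Neither the $\epsilon$-floor nor the $D,D_\infty$ hypotheses convert this into an $O(1)$ sum. If you want to match the paper, the cleanest repair is to adopt its monotonicity hypothesis $v_t\ge v_{t-1}$ and switch to the unweighted $\|z_t-x^*\|^2$ expansion; then the drift terms telescope and the argument goes through with the stated $O(d/(\epsilon^2\sqrt T))$ constant.
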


\begin{theorem}\label{th3}
\textbf{[P-L Condition]}
Suppose $f(x)$ has P-L condition (with parameter $\lambda$) holds under convex case, satisfying Assumption \ref{assumption}. Let $\eta_t = \eta = O(\frac{1}{T^2})$, \textsc{ADAM} has the convergence rate:
$\;\;E[f(x_{T+1})-f^*]\leq (1-\frac{2\lambda \mu_1}{T^2})^T E[f(x_1)-f^*]+O(\frac{1}{T})$,
\end{theorem}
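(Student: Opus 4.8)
My plan is to combine the descent lemma from $L$-smoothness with the ``bounded A-LR'' lemma announced in the text and then exploit the P-L inequality to turn a one-step decrease of $f$ into a geometric contraction of the optimality gap. The bounded A-LR fact is elementary: since $v_t=\beta_2 v_{t-1}+(1-\beta_2)g_t^2$ with $v_0=0$ and $\|g_t\|\le G$ (Assumption~\ref{assumption}), one has $\sqrt{v_t}\le G$ coordinatewise, hence $\mu_1:=\tfrac{1}{G+\epsilon}\le \tfrac{1}{\sqrt{v_t}+\epsilon}\le \tfrac{1}{\epsilon}=:\mu_2$. I would then apply $L$-smoothness to $x_{t+1}=x_t-\eta\,\tfrac{1}{\sqrt{v_t}+\epsilon}\odot m_t$ to get
\begin{equation*}
f(x_{t+1}) \le f(x_t) - \eta\Big\langle \nabla f(x_t),\ \tfrac{m_t}{\sqrt{v_t}+\epsilon}\Big\rangle + \tfrac{L\eta^2}{2}\Big\|\tfrac{m_t}{\sqrt{v_t}+\epsilon}\Big\|^2 ,
\end{equation*}
where the curvature term is bounded by $\tfrac{L\eta^2}{2}\mu_2^2\|m_t\|^2\le \tfrac{L\eta^2}{2}\mu_2^2 G^2 = O(\eta^2)$.

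The crux is the cross term. Writing $m_t=\beta_1 m_{t-1}+(1-\beta_1)g_t$ and taking conditional expectation, $E[g_t]=\nabla f(x_t)$ isolates the useful part, which is at least $\eta\mu_1\|\nabla f(x_t)\|^2$ after using the lower bound $\mu_1$; the residual splits into (i) the momentum memory $m_{t-1}$, a weighted sum of past stochastic gradients evaluated at earlier iterates, and (ii) the mismatch between the diagonal scalings at steps $t$ and $t-1$. I would control (i) by pairing each past gradient with $\nabla f(x_t)$ via $\|x_t-x_{t-k}\|\le \eta k\mu_2 G$ and $L$-smoothness, and (ii) by a telescoping argument on the coordinatewise change of $1/(\sqrt{v_t}+\epsilon)$ combined with $\|m_t\|\le G$; together with the variance terms of size $O(\eta\mu_2\sigma)$ and $O(\eta^2\mu_2^2\sigma^2)$ coming from $E[\|g_t-\nabla f(x_t)\|^2]\le\sigma^2$, these residuals are $O(\eta)$ (after the $\eta$ prefactor) plus lower order. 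This bookkeeping is essentially the framework used for Theorem~\ref{th1}, which I would reuse.

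Applying the P-L condition $\|\nabla f(x_t)\|^2\ge 2\lambda(f(x_t)-f^*)$ to the useful term and writing $a_t:=E[f(x_t)-f^*]$, the above yields a linear recursion of the form
\begin{equation*}
a_{t+1} \le (1-2\lambda\mu_1\eta)\,a_t + C_1\eta + C_2\eta^2 ,
\end{equation*}
with $C_1,C_2$ depending on $L,G,\sigma,\beta_1,\mu_2$. Choosing $\eta$ of order $1/T^2$ (with the constant calibrated so the contraction coefficient equals $2\lambda\mu_1/T^2<1$) and unrolling over $t=1,\dots,T$ gives
\begin{equation*}
a_{T+1} \le \Big(1-\tfrac{2\lambda\mu_1}{T^2}\Big)^{T} a_1 + (C_1\eta+C_2\eta^2)\sum_{t=0}^{T-1}\Big(1-\tfrac{2\lambda\mu_1}{T^2}\Big)^{t}.
\end{equation*}
Bounding the geometric sum by $T$ (each factor is $\le 1$), the residual is $O(\eta T)=O(1/T)$, which is exactly the claimed $(1-\tfrac{2\lambda\mu_1}{T^2})^T E[f(x_1)-f^*]+O(1/T)$.

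The main obstacle is the cross-term estimate: because the adaptive scaling is a coordinate-dependent diagonal matrix rather than a scalar, and because momentum mixes gradients from different iterates, one cannot directly extract $\|\nabla f(x_t)\|^2$; the care lies in showing every residual is $O(\eta)$ or smaller, so that after multiplying by the $\sum_{t}(1-2\lambda\mu_1\eta)^t\le T$ factor the accumulated error remains $O(1/T)$ rather than growing. The descent lemma, the P-L step, and unrolling the linear recursion are then routine.
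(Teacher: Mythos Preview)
Your outline is workable and reaches the same recursion $a_{t+1}\le(1-2\lambda\mu_1\eta)a_t+O(\eta)$ and the same $O(1/T)$ conclusion, but the route differs from the paper's in one essential respect. The paper does not apply the descent lemma directly to $x_{t+1}=x_t-\eta\,\tfrac{m_t}{\sqrt{v_t}+\epsilon}$; instead it reuses the auxiliary sequence $z_t=x_t+\tfrac{\beta_1}{1-\beta_1}(x_t-x_{t-1})$ from its nonconvex analysis. The point of this transformation is that $z_{t+1}-z_t$ involves only the fresh gradient $g_t$ plus a scaling-change term $(\tfrac{1}{\sqrt{v_{t-1}}+\epsilon}-\tfrac{1}{\sqrt{v_t}+\epsilon})\odot m_{t-1}$, so the cross term is $-\eta\langle\nabla f(x_t),\tfrac{g_t}{\sqrt{v_t}+\epsilon}\rangle$ and delivers the full $\eta\mu_1\|\nabla f(x_t)\|^2$ in one shot, without unpacking the momentum history. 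After applying the P-L inequality, the paper converts the $z$-level inequality back to $f(x_{t+1})-f^*$ by combining the convexity lower bound $f(z_{t+1})\ge f(x_{t+1})+\tfrac{\beta_1}{1-\beta_1}\langle\nabla f(x_{t+1}),x_{t+1}-x_t\rangle$ with the $L$-smooth upper bound on $f(z_t)$; this is precisely where the ``under convex case'' hypothesis in the statement is consumed. Your direct expansion of $m_t$ avoids the $z_t$ device and hence never needs convexity, which is a conceptual plus; the cost is that your ``useful part'' naturally carries a $(1-\beta_1)$ (or $(1-\beta_1^t)$) prefactor unless you also harvest positive contribution from the memory via $\nabla f(x_{t-k})\approx\nabla f(x_t)$, and the correlated noise pieces $\langle\nabla f(x_t),g_{t-k}-\nabla f(x_{t-k})\rangle$ must be bounded crudely by $O(G\sigma\mu_2)$ rather than averaged away. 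Both arguments also rely on the monotonicity $v_t\ge v_{t-1}$ for the telescoping of the scaling change, an assumption the paper states explicitly but you leave implicit.
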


The P-L condition is weaker than strongly convex, and for the strongly-convex case, we also have:

\begin{corollary}\textbf{[Strongly Convex]}
Suppose $f(x)$ is $\mu$-strongly convex function that satisfies Assumption \ref{assumption}. Let $\eta_t = \eta = O(\frac{1}{T^2})$, \textsc{Adam} has the convergence rate:
$E[f(x_{T+1})-f^*]\leq (1-\frac{2\mu \mu_1}{T^2})^T E[f(x_1)-f^*]+O(\frac{1}{T})$
\end{corollary}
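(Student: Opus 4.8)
The plan is to obtain this corollary directly from Theorem \ref{th3} by showing that $\mu$-strong convexity implies the Polyak-{\L}ojasiewicz condition with parameter $\lambda = \mu$. First I would recall the strong convexity inequality from Definition \ref{def}: for all $x, y \in \mathbb{R}^d$,
\begin{equation*}
f(y) \geq f(x) + \nabla f(x)^T(y-x) + \frac{\mu}{2}\|y-x\|^2.
\end{equation*}
Viewing the right-hand side as a quadratic in $y$, it is minimized at $y = x - \frac{1}{\mu}\nabla f(x)$, where it equals $f(x) - \frac{1}{2\mu}\|\nabla f(x)\|^2$. Since the left-hand side is at least $f^*$ for every $y$, in particular for this minimizer, we get $f^* \geq f(x) - \frac{1}{2\mu}\|\nabla f(x)\|^2$, i.e. $\|\nabla f(x)\|^2 \geq 2\mu\big(f(x) - f^*\big)$. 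This is exactly the P-L condition of Definition \ref{def} with $\lambda = \mu$.

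Next I would invoke Theorem \ref{th3}. Its hypotheses are Assumption \ref{assumption} together with convexity and the P-L condition; a $\mu$-strongly convex $f$ satisfying Assumption \ref{assumption} is convex and, by the step above, satisfies P-L with $\lambda = \mu$, so all hypotheses hold. Using the same step size choice $\eta_t = \eta = O(1/T^2)$ and substituting $\lambda = \mu$ into the conclusion of Theorem \ref{th3} yields
\begin{equation*}
E[f(x_{T+1})-f^*] \leq \left(1 - \frac{2\mu \mu_1}{T^2}\right)^T E[f(x_1) - f^*] + O\!\left(\frac{1}{T}\right),
\end{equation*}
which is precisely the claimed bound (with $\mu_1$ the lower bound on the A-LR for \textsc{Adam} from the bounded-A-LR lemma).

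Because the argument is a direct specialization, there is essentially no hard step; the only ``obstacle'' is bookkeeping. One should verify that the constant hidden in the $O(1/T)$ term of Theorem \ref{th3} may depend on $\lambda$ but, upon substituting $\lambda = \mu$, remains a finite constant independent of $T$, so the stated rate is unaffected; and one should confirm that the A-LR bounds $\mu_1,\mu_2$ (which contain $\epsilon$) used in Theorem \ref{th3} are still available here, which they are since they follow only from Assumption \ref{assumption}. With these checks the corollary follows immediately.
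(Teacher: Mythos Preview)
Your proposal is correct and mirrors the paper's own approach: the appendix first proves (Lemma~\ref{SC}) that $\mu$-strong convexity implies the P-L condition with parameter $\lambda=\mu$ via exactly the quadratic-minimization argument you outline, and then obtains the corollary by specializing Theorem~\ref{th3} with $\lambda=\mu$. Your bookkeeping remarks about the $O(1/T)$ constant and the availability of $\mu_1,\mu_2$ are accurate and nothing further is needed.
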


This is the first time to theoretically include $\epsilon$ into analysis. As expected, the convergence rate of \textsc{Adam} is highly related with $\epsilon$. A bigger $\epsilon$ will enjoy a better convergence rate since $\epsilon$ will dominate the A-LR and behaves like S-Momentum; A smaller $\epsilon$ will preserve stronger ``adaptivity'', we need to find a better way to control $\epsilon$. 

\section{The Proposed Algorithms}

We propose to use activation functions to calibrate AGMs, and specifically focus on using {\em softplus} funciton on top of \textsc{Adam} and \textsc{AMSGrad} methods.

\subsection{Activation Functions Help}
Activation functions (such as sigmoid, ELU, $\tanh$) transfer inputs to outputs are widely used in deep learning area. As a well-studied activation function,  $softplus(x)=\frac{1}{\beta}\log(1+e^{\beta x})$ is known to keep large values unchanged (behaved like function $y = x$) while smoothing out small values (see Figure \ref{fig:soft} (a)). The target magnitude to be smoothed out can be adjusted by a hyper-parameter $\beta \in \mathbb{R}$. 
In our new algorithms, we introduce $softplus(\sqrt{v_t})= \frac{1}{\beta}\log(1+e^{\beta\cdot \sqrt{ v_t}})$ to smoothly calibrate the A-LR. This calibration brings the following benefits: (1) constraining extreme large-valued A-LR in some coordinates (corresponding to the small-values in $v_t$) while keeping others untouched with appropriate $\beta$. For the undesirable large values in the A-LR, the {\em softplus} function condenses them smoothly instead of hard thresholding. For other coordinates, the A-LR largely remains unchanged; (2) removing the sensitive parameter $\epsilon$ because the {\em softplus} function can be lower-bounded by a nonzero number when used on non-negative variables, $softplus(\cdot)\geq\frac{1}{\beta} \log2$.

\begin{figure}[h]
  \centering
  \includegraphics[width= 0.85\linewidth]{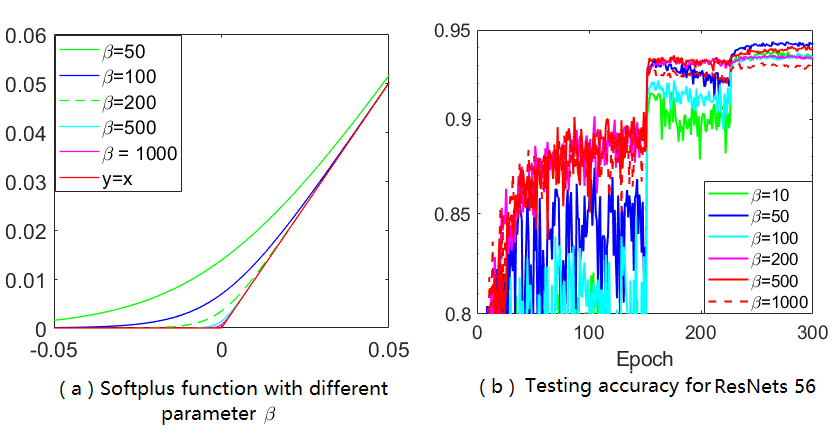}
  \caption{Behavior of the softplus function, and the test performance of our \textsc{Sadam} algorithm.}
  \label{fig:soft}
\end{figure}

After calibrating $\sqrt{v_t}$ with a {\em softplus} function, the anisotropic A-LR becomes much more regulated (see Figure \ref{fig:beta} and Appendix), and we clearly observe improved  test accuracy (Figure \ref{fig:soft} (b) and more figures in Appendix). We name this method ``\textsc{Sadam}'' to represent the calibrated \textsc{Adam} with {\em softplus} function, here we recommend using {\em softplus} function but it is not limited to that, and the later theoretical analysis can be easily extended to other activation functions. More empirical evaluations have shown that the proposed methods significantly improve the generalization performance of \textsc{Adam} and \textsc{AMSGrad}.

\subsection{Calibrated AGMs}

With activation function, we develop two new variants of AGMs: \textsc{Sadam} and \textsc{SAMSGrad} (Algorithms \ref{alg:sadam} and \ref{alg:samsgrad}), which are developed based on \textsc{Adam} and \textsc{AMSGrad} respectively. 

\begin{algorithm}[h]
\caption{\textsc{Sadam}}
\label{alg:sadam}
\begin{algorithmic}[1]
     	\State {\bfseries Input:} $x_1\in \mathbb{R}^d$, learning rate $\{\eta_t\}_{t=1}^T$, parameters $0\leq \beta_1, \beta_2 < 1$, $\beta$.
		\State {\bfseries Initialize} $m_0 = 0$, $v_0 = 0$
		\For {$t = 1$ to $T$}
		\State Compute stochastic gradient $g_t$
		\State $m_t = \beta_1 m_{t-1}+(1-\beta_1)g_t$
		\State  $v_t = \beta_2 v_{t-1}+ (1-\beta_2)g_t^2$  
		\State $x_{t+1}=x_t-\frac{\eta_t}{softplus(\sqrt{v_t})}\odot m_t$
	    \EndFor
\end{algorithmic}
\end{algorithm}
\begin{algorithm}[h]
\caption{\textsc{SAMSGrad}}
\label{alg:samsgrad}
\begin{algorithmic}[1]
     	\State {\bfseries Input:} $x_1\in \mathbb{R}^d$, learning rate $\{\eta_t\}_{t=1}^T$, parameters $0\leq \beta_1, \beta_2 < 1$, $\beta$.
		\State {\bfseries Initialize} $m_0 = 0$, $\tilde{v}_0 = 0$
		\For {$t = 1$ to $T$}
		\State Compute stochastic gradient $g_t$
		\State $m_t = \beta_1 m_{t-1}+(1-\beta_1)g_t$
		\State  $\tilde{v}_t = \beta_2 \tilde{v}_{t-1}+ (1-\beta_2)g_t^2$
		\State $v_t = max\{v_{t-1}, \tilde{v}_{t}\}$ 
		\State $x_{t+1}=x_t-\frac{\eta_t}{softplus(\sqrt{v_t})}\odot m_t$
	    \EndFor 
\end{algorithmic}
\end{algorithm}

The key step lies in the way to design the adaptive functions, instead of using the generalized square root function only, we apply $softplus(\cdot)$ on top of the square root of the second-order momentum, which serves to regulate A-LR's anisotropic behavior and replace the tolerance parameter $\epsilon$ by the hyper-parameter $\beta$ used in the {\em softplus} function.

In our algorithms, the hyper-parameters are recommended as $\beta_1 = 0.9$, $\beta_2=0.999$.
For clarity, we omit the bias correction step proposed in the original \textsc{Adam}. However, our arguments and theoretical analysis are applicable to the bias correction version as well \citep{kingma2014adam,dozat2016incorporating,zaheer2018adaptive}. Using the {\em softplus} function, we introduce a new hyper-parameter $\beta$, which performs as a controller to smooth out anisotropic A-LR, and connect the \textsc{Adam} and S-Momentum methods automatically. When $\beta$ is set to be small, \textsc{Sadam} and \textsc{SAMSGrad} perform similarly to S-Momentum; when $\beta$ is set to be big, $softplus(\sqrt{v_t})= \frac{1}{\beta}\log(1+e^{\beta\cdot \sqrt{ v_t}}) \approx \frac{1}{\beta}\log(e^{\beta\cdot \sqrt{ v_t}})=\sqrt{ v_t}$, and the updating formula becomes $x_{t+1} = x_{t} - \frac{\eta_t}{\sqrt{v_t}}\odot m_t$, which is degenerated into the original AGMs. The hyper-parameter $\beta$ can be well tuned to achieve the best performance for different datasets and tasks. Based on our empirical observations, we recommend to use $\beta = 50$.

As a calibration method, the {\em softplus} function has better adaptive behavior than simply setting $\epsilon$. More precisely, when $\epsilon$ is large or $\beta$ is small, \textsc{Adam} and \text{AMSGrad} amount to S-Momentum, but when $\epsilon$ is small as commonly suggested $10^{-8}$ or $\beta$ is taken large, the two methods are different because comparing Figure \ref{fig:range} and \ref{fig:beta} yields that \textsc{Sadam} has more regulated A-LR distribution. The proposed calibration scheme regulates the massive range of A-LR back down to a moderate scale. The median of A-LR in different dimensions is now well positioned to the middle of the $25$-$75$ percentile zone. 
Our approach opens up a new direction to examine other activation functions (not limited to the {\em softplus} function) to calibrate the A-LR.

\begin{figure}[H]
  \centering
  \includegraphics[width=0.85\linewidth]{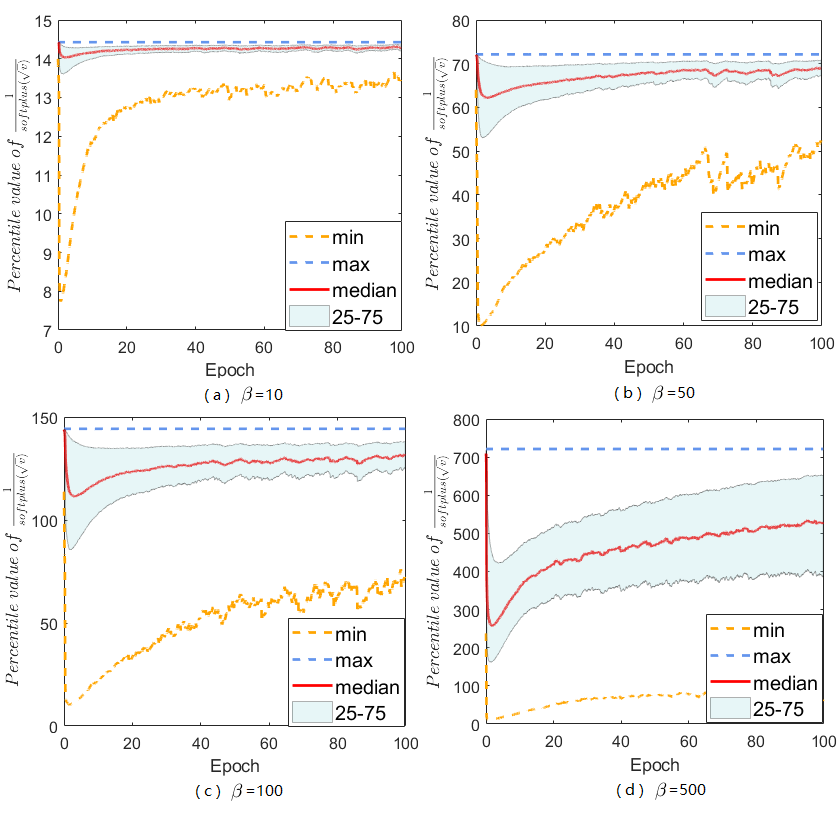}
  \caption{Behavior of the A-LR in the \textsc{Sadam} method with different choices of $\beta$ (CNN on the MNIST data).}
  \label{fig:beta}
\end{figure}

The proposed \textsc{Sadam} and \textsc{SAMSGrad} can be treated as members of a class of AGMs that use the {\em softplus} (or another suitable activation) function to better adapt the step size. It can be readily combined with any other AGM, e.g., \textsc{Rmsrop}, \textsc{Yogi}, and \textsc{PAdam}. These methods may easily go back to the original ones by choosing a big $\beta$. 

\section{Convergence Analysis}

We first demonstrate an important lemma to highlight that every coordinate in the A-LR is both upper and lower bounded at all iterations, which is consistent with empirical observations (Figure 1), and forms the foundation of our proof.

\begin{lemma}\textbf{[Bounded A-LR]}\label{bound}
With Assumption \ref{assumption}, for any $t\geq 1$, $j \in [1, d]$, $\beta_2\in [0,1]$, and $\epsilon$ in \textsc{Adam}, $\beta$ in \textsc{Sadam}, anisotropic A-LR is bounded in AGMs,

\textsc{Adam} has $(\mu_1, \mu_2)$-bounded A-LR:
$$\mu_1 \leq \frac{1}{\sqrt{v_{t,j}}+\epsilon} \leq \mu_2,$$
\textsc{Sadam} has $(\mu_3, \mu_4)$-bounded A-LR: 
$$ \mu_3 \leq \frac{1}{softplus(\sqrt{v_{t,j}})} \leq \mu_4,$$
where $ 0 <\mu_1\leq \mu_2$, and $ 0 <\mu_3\leq \mu_4$. 
\end{lemma}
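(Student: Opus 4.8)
The plan is to reduce everything to a single uniform bound on the coordinates of the second-order momentum, $0\le \sqrt{v_{t,j}}\le G$, and then push that bound through the two (monotone) maps $s\mapsto 1/(s+\epsilon)$ and $s\mapsto 1/softplus(s)$. So the proof splits into three short steps: bound $v_{t,j}$; specialize to \textsc{Adam}; specialize to \textsc{Sadam}.

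First I would bound $v_{t,j}$. By the gradient-bounded part of Assumption~\ref{assumption} the stochastic gradient inherits the bound $\|g_t\|\le G$ (for a minibatch this is just the triangle inequality over $\|\nabla f_i\|\le G$), hence $g_{t,j}^2\le G^2$ for every coordinate $j$. Unrolling the recursion $v_t=\beta_2 v_{t-1}+(1-\beta_2)g_t^2$ from $v_0=0$ gives $v_{t,j}=(1-\beta_2)\sum_{i=1}^t \beta_2^{t-i}g_{i,j}^2$, i.e.\ a nonnegative combination of the $g_{i,j}^2$ whose weights sum to $(1-\beta_2)\frac{1-\beta_2^{t}}{1-\beta_2}=1-\beta_2^{t}\le 1$. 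Therefore $0\le v_{t,j}\le G^2$, uniformly over $t\ge 1$ and $j\in[1,d]$, and the same bound holds for \textsc{SAMSGrad} because $v_{t,j}=\max\{v_{t-1,j},\tilde v_{t,j}\}$ is a coordinatewise maximum of terms each obeying this bound. Consequently $\sqrt{v_{t,j}}\in[0,G]$.

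The two remaining steps are immediate from monotonicity. For \textsc{Adam}, $\sqrt{v_{t,j}}+\epsilon\in[\epsilon,\,G+\epsilon]$, so taking reciprocals yields $\tfrac{1}{G+\epsilon}\le \tfrac{1}{\sqrt{v_{t,j}}+\epsilon}\le \tfrac{1}{\epsilon}$; thus the claim holds with $\mu_1=\tfrac{1}{G+\epsilon}>0$ and $\mu_2=\tfrac{1}{\epsilon}$, both depending on $\epsilon$ as advertised. For \textsc{Sadam}, $softplus(s)=\tfrac1\beta\log(1+e^{\beta s})$ is strictly increasing and strictly positive on $s\ge 0$, so $softplus(\sqrt{v_{t,j}})\in[softplus(0),softplus(G)]=[\tfrac1\beta\log 2,\ \tfrac1\beta\log(1+e^{\beta G})]$, and reciprocating gives $\tfrac{\beta}{\log(1+e^{\beta G})}\le \tfrac{1}{softplus(\sqrt{v_{t,j}})}\le \tfrac{\beta}{\log 2}$; hence the claim holds with $\mu_3=\tfrac{\beta}{\log(1+e^{\beta G})}$ and $\mu_4=\tfrac{\beta}{\log 2}$, the latter being free of any tolerance parameter, which is exactly the structural point the calibration is meant to exploit.

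There is no deep obstacle here; the lemma is essentially bookkeeping. The one place to be careful is uniformity in $t$: when unrolling the exponential moving average one must not accumulate a factor that grows with $t$, which is why I use the convex-combination/geometric-series argument (weights summing to $\le 1$) rather than a crude term-by-term sum. The boundary cases $\beta_2\in\{0,1\}$ are handled automatically ($\beta_2=1$ freezes $v_t\equiv v_0=0$; $\beta_2=0$ gives $v_t=g_t^2$), and the only modeling input used is that the per-sample bound $G$ transfers to $g_t$, which we take from Assumption~\ref{assumption}.
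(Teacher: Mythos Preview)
Your proof is correct and follows essentially the same route as the paper: bound $\sqrt{v_{t,j}}$ uniformly in $t,j$ and then push that interval through the two monotone maps $s\mapsto 1/(s+\epsilon)$ and $s\mapsto 1/softplus(s)$. The only substantive difference is in the constant you obtain for the upper bound on $v_{t,j}$: you use the pointwise inequality $g_{t,j}^2\le \|g_t\|^2\le G^2$ (from Assumption~\ref{assumption}(2)) to get $v_{t,j}\le G^2$ and hence $\mu_1=1/(G+\epsilon)$, $\mu_3=\beta/\log(1+e^{\beta G})$, whereas the paper sets $\mu_1=1/(\sqrt{\sigma^2+G^2}+\epsilon)$ and $\mu_3=\beta/\log(1+e^{\beta\sqrt{\sigma^2+G^2}})$, effectively invoking the bound $E[v_{t,j}]\le \sigma^2+G^2$ from its Lemma on $v_t$ as if it were pointwise. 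Your derivation is thus slightly tighter and more carefully justified, and your handling of the edge cases $\beta_2\in\{0,1\}$ and of \textsc{SAMSGrad} via the coordinatewise maximum is a small bonus the paper leaves implicit; but structurally the two arguments coincide.
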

        
\begin{remark}
Besides the square root function and {\em softplus} function, the A-LR calibrated by any positive monotonically increasing function can be bounded. All of the bounds can be shown to be related to $\epsilon$ or $\beta$ (see Appendix). Bounded A-LR is an essential foundation in our analysis, we provide a different way of proof from previous works, and the proof procedure can be easily extended to other gradient methods as long as bounded LR is satisfied.          
\end{remark}

\begin{remark}
These bounds can be applied to all AGMs, including \textsc{Adagrad}. In fact, the lower bounds actually are not the same in \textsc{Adam} and \textsc{Adagrad}, because \textsc{Adam} will have smaller $\sqrt{v_{t,j}}$ due to moment decay parameter $\beta_2$. To achieve a unified result, we use the same relaxation to derive the fixed lower bound $\mu_1$.
\end{remark}

We now describe our main results of \textsc{Sadam} (and \textsc{SAMSGrad}) in the nonconvex case, we clearly show that similar to Theorem \ref{th1}, the convergence rate of \textsc{Sadam} is related to the bounds of the A-LR. Our methods have improved the convergence rate of \textsc{Adam} when comparing self-contained parameters $\epsilon$ and $\beta$. 

\begin{theorem}\textbf{[Nonconvex]}
Suppose $f(x)$ is a nonconvex function that satisfies Assumption \ref{assumption}. Let $\eta_t = \eta = O(\frac{1}{\sqrt{T}})$, \textsc{Sadam} method has 
\begin{align*}
    \min_{t=1,\dots,T}E[\|\nabla f(x_t)\|^2] &\leq O( \frac{\beta^2}{\sqrt{T}}+ \frac{d \beta}{T} + \frac{d \beta^2}{ T\sqrt{T}}).
\end{align*}
\end{theorem}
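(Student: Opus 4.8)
The plan is to mimic the standard descent-lemma argument for adaptive methods, but to carry the bounds $\mu_3 \le 1/\mathrm{softplus}(\sqrt{v_{t,j}}) \le \mu_4$ from Lemma~\ref{bound} through the analysis in place of the $(\mu_1,\mu_2)$ bounds used for \textsc{Adam} in Theorem~\ref{th1}. First I would apply $L$-smoothness to consecutive iterates $x_{t+1} = x_t - \eta_t D_t m_t$, where $D_t = \mathrm{diag}\big(1/\mathrm{softplus}(\sqrt{v_{t,j}})\big)$, to get
\[
f(x_{t+1}) \le f(x_t) - \eta_t \langle \nabla f(x_t), D_t m_t\rangle + \tfrac{L}{2}\eta_t^2 \|D_t m_t\|^2 .
\]
The quadratic term is controlled immediately: $\|D_t m_t\|^2 \le \mu_4^2 \|m_t\|^2 \le \mu_4^2 G^2 d$ by the gradient bound in Assumption~\ref{assumption} and the fact that $m_t$ is a convex combination of past gradients. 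The cross term is the delicate one, and I would handle it exactly as in the proof of Theorem~\ref{th1}: decompose $m_t$ via its recursion (or telescope the momentum), split $\langle \nabla f(x_t), D_t m_t\rangle$ into a "signal" part that is lower-bounded by $\mu_3\|\nabla f(x_t)\|^2$ (up to momentum/bias-correction error terms) and "error" parts coming from (i) the gap between $m_t$ and $\nabla f(x_t)$, (ii) the variance $\sigma^2$, and (iii) the change of the preconditioner $D_t - D_{t-1}$ across steps.

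Next I would sum the per-step inequality from $t=1$ to $T$, use the telescoping of $f(x_t)-f^*$ on the left, and bound $\sum_t \|D_t - D_{t-1}\|$ by a constant: since $\mathrm{softplus}$ is monotone and $v_{t,j}$ is bounded, each coordinate of $1/\mathrm{softplus}(\sqrt{v_{t,j}})$ lives in $[\mu_3,\mu_4]$ and is "eventually monotone enough" that $\sum_{t=1}^{T}\sum_j |D_{t,j}-D_{t-1,j}| \le d(\mu_4-\mu_3)$ (this is the analogue of the $\sum \|v_t^{-1/4}-v_{t-1}^{-1/4}\|_1$ bound in \textsc{AMSGrad}-style proofs; for \textsc{SAMSGrad} it is exactly monotone, for \textsc{Sadam} one relaxes to the worst case). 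Dividing through by $T$ and by $\eta_t \mu_3 = \Theta(\mu_3/\sqrt{T})$ yields
\[
\min_{t\le T} E[\|\nabla f(x_t)\|^2] \le O\!\Big(\frac{1}{\mu_3 \sqrt{T}} + \frac{\mu_4^2 d}{\mu_3 \sqrt{T}} \cdot \frac{1}{\sqrt T}\cdot L + \frac{\mu_4 d}{\mu_3 T}\Big),
\]
and then I would substitute the explicit forms of the bounds from the Appendix: $\mu_3 = \Theta(1/\beta)$ (because $\mathrm{softplus}(\sqrt{v_{t,j}}) \le \mathrm{softplus}(G) \lesssim G + \tfrac{\log 2}{\beta}$, which is $\Theta(1)$ for large $\beta$ but whose reciprocal, after the unified relaxation used for $\mu_1$, scales like $\beta$) and $\mu_4 = \Theta(\beta)$ (since $\mathrm{softplus}(\sqrt{v_{t,j}}) \ge \mathrm{softplus}(0) = \tfrac{\log 2}{\beta}$). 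Collecting, the three terms become $O(\beta^2/\sqrt T)$, $O(d\beta^2/(T\sqrt T))$, and $O(d\beta/T)$ respectively, matching the claimed bound.

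The main obstacle is the cross-term bookkeeping: making precise how the momentum error, the stochastic-gradient variance, and the preconditioner drift combine so that everything is absorbed into the three advertised terms without hidden $T$-dependence — in particular verifying that the $\log T$ factor appearing in \citep{chen2018convergence} does not resurface here. This is where the $O(1/\sqrt T)$-versus-$O(\log T/\sqrt T)$ improvement is won or lost, and it hinges on the preconditioner-drift sum being $O(1)$ rather than $O(\log T)$, which is exactly what the two-sided boundedness of the calibrated A-LR (Lemma~\ref{bound}), combined with monotonicity, delivers. The remaining steps — the descent lemma, the quadratic-term bound, and the final substitution of $\mu_3,\mu_4$ in terms of $\beta$ — are routine given Assumption~\ref{assumption} and Lemma~\ref{bound}.
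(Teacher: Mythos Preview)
Your proposal diverges from the paper's argument at the very first step, and the divergence is not cosmetic. You apply $L$-smoothness directly to the iterates $x_t$, which leaves you with the cross term $\langle \nabla f(x_t), D_t m_t\rangle$. Because $m_t$ is a convex combination of stochastic gradients evaluated at \emph{past} iterates $x_1,\dots,x_t$, this term is not lower-bounded by $\mu_3\|\nabla f(x_t)\|^2$ without substantial extra work that you only gesture at (``decompose $m_t$ via its recursion''). The paper instead introduces the auxiliary sequence
\[
z_t \;=\; x_t + \tfrac{\beta_1}{1-\beta_1}(x_t - x_{t-1}),
\]
applies $L$-smoothness to $z_t$, and exploits the identity (their Lemma~\ref{z_t})
\[
z_{t+1}-z_t \;=\; \tfrac{\eta\beta_1}{1-\beta_1}(D_{t-1}-D_t)m_{t-1} \;-\; \eta\, D_t\, g_t .
\]
The whole point is that the second piece involves the raw stochastic gradient $g_t$, not $m_t$; since $E[g_t]=\nabla f(x_t)$, the paper can then lower-bound $E[\langle \nabla f(x_t), D_t g_t\rangle]\ge \mu_3\|\nabla f(x_t)\|^2$ coordinate-wise. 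The first piece is exactly the preconditioner-drift term you anticipate, and under the monotonicity hypothesis $v_t\ge v_{t-1}$ (i.e.\ strictly the \textsc{SAMSGrad} variant) it telescopes to $d(\mu_4-\mu_3)$ and $d(\mu_4^2-\mu_3^2)$. Without the $z_t$ device your ``signal'' extraction is not justified, and you also lose the clean split $-\langle\nabla f(z_t)-\nabla f(x_t),\eta D_t g_t\rangle$ that the paper bounds separately via $L$-smoothness (their Lemma~\ref{product}).

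There is also a numerical slip in your final substitution. From the Appendix, $\mu_3 = 1/\mathrm{softplus}(\sqrt{\sigma^2+G^2})$, which for moderate or large $\beta$ is $\Theta(1)$ (softplus of a fixed positive argument tends to that argument), not $\Theta(1/\beta)$. The $\beta^2$ in the leading $1/\sqrt{T}$ term of the theorem comes from $\mu_4^2/\mu_3 = \Theta(\beta^2)$, arising in the paper from the quadratic bound on $\|z_{t+1}-z_t\|^2$ and the cross-term estimate, not from $1/\mu_3$ alone. Your displayed intermediate bound is missing exactly this $\mu_4^2/\mu_3$ contribution in the $1/\sqrt{T}$ term.
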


\begin{remark}
Compared with the rate in Theorem \ref{th1}, the convergence rate of \textsc{Sadam} relies on $\beta$, which can be a much smaller number ($\beta=50$ as recommended) than $\frac{1}{\epsilon}$ (commonly $\epsilon=10^{-8}$ in AGMs), showing that our methods have a better convergence rate than \textsc{Adam}. When $\beta$ is huge, \textsc{Sadam}'s rate is comparable to the classic \textsc{Adam}. When $\beta$ is small, the convergence rate will be  $O(\frac{1}{\sqrt{T}})$ which recovers that of SGD \citep{ghadimi2013stochastic}.
\end{remark}

\begin{corollary}
Treat $\epsilon$ or $\beta$ as a constant, then the \textsc{Adam}, \textsc{Sadam} (and \textsc{SAMSGrad}) methods with fixed $ L, \sigma, G, \beta_1$, and $\eta = O(\frac{1}{\sqrt{T}})$, have complexity of $O(\frac{1}{\sqrt{T}})$, and thus call for $O(\frac{1}{\delta^2})$ iterations to achieve $\delta$-accurate solutions.
\end{corollary}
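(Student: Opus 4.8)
The plan is to derive this directly from the nonconvex convergence bounds already established: Theorem~\ref{th1} for \textsc{Adam}, the analogous nonconvex theorem for \textsc{Sadam}, and the corresponding analysis for \textsc{SAMSGrad}. The key observation is that once $\epsilon$ (respectively $\beta$) is fixed together with $L$, $\sigma$, $G$, $\beta_1$, and the dimension $d$, every coefficient multiplying a power of $1/T$ in those bounds collapses into an absolute constant absorbed by the $O(\cdot)$ notation, so only the dependence on $T$ remains.

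First I would substitute $\eta = O(1/\sqrt{T})$ into Theorem~\ref{th1}, which gives
\[
\min_{t=1,\dots,T} E[\|\nabla f(x_t)\|^2] \le O\!\left(\frac{1}{\sqrt{T}} + \frac{1}{T} + \frac{1}{T\sqrt{T}}\right),
\]
and do the same with the \textsc{Sadam} bound, which produces the identical three-term expression after fixing $\beta$ (and likewise for \textsc{SAMSGrad}). Second, I would note that for every $T\ge 1$ one has $\frac{1}{T\sqrt{T}} \le \frac{1}{T} \le \frac{1}{\sqrt{T}}$, so each of the three terms is bounded by a constant multiple of $1/\sqrt{T}$, whence the right-hand side is $O(1/\sqrt{T})$. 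This establishes the claimed $O(1/\sqrt{T})$ complexity for all three methods.

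Finally, to convert this into an iteration complexity, I would impose $\delta$-accuracy on the stationarity measure: since $\min_{t} E[\|\nabla f(x_t)\|^2] \le C/\sqrt{T}$ for some constant $C$ depending only on the fixed quantities, it suffices to choose $T \ge C^2/\delta^2$, i.e.\ $T = O(1/\delta^2)$ iterations suffice to reach a $\delta$-accurate first-order stationary point, matching the rate of SGD and S-Momentum in terms of $T$. If instead optimality-gap accuracy is desired, the same bookkeeping applied to Theorems~\ref{th2} and \ref{th3} yields the analogous conclusion in the convex and P-L settings.

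There is essentially no hard step: the corollary is a bookkeeping consequence of the preceding theorems. The only point requiring care is the implicit assumption that the lower-order terms $d/(\epsilon T)$ and $d/(\epsilon^2 T\sqrt{T})$ (and their $\beta$-analogues) do not dominate — which is exactly why $d$, $\epsilon$, and $\beta$ must be held constant and why the statement is asymptotic in $T$; for moderate $T$ these terms still contribute, but they are dominated by the $1/\sqrt{T}$ term as $T$ grows.
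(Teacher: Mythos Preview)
Your proposal is correct and matches the paper's reasoning: the corollary is stated as an immediate bookkeeping consequence of the preceding nonconvex theorems, with no separate proof given beyond the observation (already made in the Optimization Terminology section) that requiring $O(1/\sqrt{T})\le\delta$ yields $T=O(1/\delta^2)$. Your explicit step of dominating the $1/T$ and $1/(T\sqrt{T})$ terms by $1/\sqrt{T}$ and then inverting is exactly this argument spelled out.
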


\begin{theorem}\textbf{[Non-strongly Convex]}
Suppose $f(x)$ is a convex function that satisfies Assumption \ref{assumption}. Assume that $E[\|x_{t}-x^* \|] \leq D$, $\forall t$, and $E[\|x_{m}-x_{n} \|] \leq D_{\infty}$, $\forall$ $m \neq n$, let $\eta_t = \eta = O(\frac{1}{\sqrt{T}})$, \textsc{Sadam} has  $f(\Bar{x}_t)-f^*\leq O(\frac{1}{\sqrt{T}})$, where $\Bar{x}_t = \frac{1}{T}\sum_{t=1}^T x_t$.
\end{theorem}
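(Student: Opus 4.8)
The plan is to establish a regret bound $\sum_{t=1}^{T} \mathbb{E}[f(x_t)-f^*] = O(\sqrt{T})$ and then apply Jensen's inequality to the convex $f$, so that $f(\bar x_t)-f^* \le \frac1T\sum_{t=1}^T(f(x_t)-f^*) = O(1/\sqrt T)$. The whole argument runs over the diagonal preconditioner $V_t = \mathrm{diag}\!\big(softplus(\sqrt{v_t})\big)$, each of whose entries together with its inverse is pinned into a fixed interval by Lemma~\ref{bound}: $\mu_3 \le 1/softplus(\sqrt{v_{t,j}}) \le \mu_4$. Because $\mu_3,\mu_4$ depend only on $\beta$ and $G$ — not on $\epsilon$ — this is exactly the mechanism that turns the $d/\epsilon^2$ factor of Theorem~\ref{th2} (\textsc{Adam}) into an $O(1)$ constant here.

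First I would use convexity, $\mathbb{E}[f(x_t)-f^*] \le \mathbb{E}[\langle \nabla f(x_t), x_t-x^*\rangle]$, and split $\nabla f(x_t) = m_t + (\nabla f(x_t)-m_t)$ so the first piece is tied to the actual step $x_{t+1}-x_t = -\eta V_t^{-1} m_t$. For that piece I would use the elementary identity
\[
\langle m_t, x_t-x^*\rangle = \tfrac{1}{2\eta}\big(\|x_t-x^*\|_{V_t}^2 - \|x_{t+1}-x^*\|_{V_t}^2\big) + \tfrac{\eta}{2}\langle m_t, V_t^{-1}m_t\rangle ,
\]
with $\|a\|_{V_t}^2 = \langle a, V_t a\rangle$, valid since $V_t$ is diagonal and positive. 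Summing in $t$: the last term is $\le \tfrac{\eta}{2}\mu_4\|m_t\|^2 \le \tfrac{\eta}{2}\mu_4 G^2$ (since $m_t$ is a convex combination of gradients, $\|m_t\|\le G$); the first term telescopes to $\|x_1-x^*\|_{V_1}^2 + \sum_{t\ge 2}\langle x_t-x^*, (V_t-V_{t-1})(x_t-x^*)\rangle$, where $\|x_1-x^*\|_{V_1}^2 \le D^2/\mu_3$ and the ``drift'' sum is controlled using the bounded iterates ($D,D_\infty$) together with the range $[\,softplus(0),\,1/\mu_3\,]$ of the entries of $V_t$ — for \textsc{SAMSGrad} this is immediate because $v_t$, hence $V_t$, is coordinatewise nondecreasing, so the drift sum collapses to $\le D^2\sum_j softplus(\sqrt{v_{T,j}}) \le D^2\,d\,softplus(G)$. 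For the remaining piece $\langle \nabla f(x_t)-m_t, x_t-x^*\rangle$ I would unroll $m_t = (1-\beta_1)\sum_{k\le t}\beta_1^{t-k} g_k$, cancel the martingale-difference terms via $\mathbb{E}[g_k\mid\mathcal F_k]=\nabla f(x_k)$, bound the leftover ``staleness'' inner products using $L$-smoothness and $\|x_t-x_k\| \le (t-k)\eta\mu_4 G$ (consecutive iterates move by $O(\eta)$), and sum the geometric weights; this contributes $O(\eta T)$ plus a few $T$-independent constants. Collecting everything yields $\sum_t \mathbb{E}[f(x_t)-f^*] \le C_1/\eta + C_2\eta T + C_3$; with $\eta = \Theta(1/\sqrt T)$ this is $O(\sqrt T)$, and dividing by $T$ and applying Jensen closes the argument.

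I expect the main obstacle to be the coupling of the momentum with the \emph{time-varying} preconditioner, i.e., handling $\sum_t\langle \nabla f(x_t)-m_t, x_t-x^*\rangle$ and $\sum_t\langle x_t-x^*,(V_t-V_{t-1})(x_t-x^*)\rangle$ simultaneously: the crude bounds ($\|x_t-x_k\|\le D_\infty$, or a constant bound on $|V_{t,j}-V_{t-1,j}|$) only give $O(T)$ and must be sharpened to $O(\eta T)$ by exploiting that each step has size $O(\eta)$ and that the dominant momentum weight sits on recent iterates. The genuinely delicate case is \textsc{Sadam} itself, since its $v_t$ need not be monotone; there one must control the drift through $v_t = \beta_2 v_{t-1}+(1-\beta_2)g_t^2$ and the regularity of $softplus(\sqrt{\cdot})$ — leaning on the boundedness assumptions $D,D_\infty$ and Lemma~\ref{bound} — rather than through a free telescoping as for \textsc{SAMSGrad}. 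Everything downstream (the choice of $\eta$, the Jensen step) is routine.
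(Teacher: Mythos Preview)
Your plan is sound and would go through, but it takes a \emph{different} route from the paper. The paper does not work with the weighted norms $\|\cdot\|_{V_t}$ or the split $\nabla f(x_t)=m_t+(\nabla f(x_t)-m_t)$ at all. Instead it introduces the auxiliary sequence $z_t=x_t+\tfrac{\beta_1}{1-\beta_1}(x_t-x_{t-1})$ (Lemma~\ref{z_t}), whose update is a preconditioned SGD step $z_{t+1}=z_t-\eta\,V_t^{-1}g_t$ plus a small correction $\tfrac{\eta\beta_1}{1-\beta_1}(V_{t-1}^{-1}-V_t^{-1})\odot m_{t-1}$; it then expands $E\|z_{t+1}-x^*\|^2$, extracts $-2\eta E[\langle V_t^{-1}g_t,x_t-x^*\rangle]\le -2\eta\mu_3(f(x_t)-f^*)$ via convexity and the A-LR lower bound, and controls all remaining terms with the iterate bounds $D,D_\infty$. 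The $z_t$ device replaces your momentum unrolling by a single algebraic transformation, so the paper never has to touch martingale-difference terms or staleness estimates; your route, on the other hand, is closer to the classical AdaGrad/AMSGrad weighted-telescoping analysis and makes the drift $V_t-V_{t-1}$ explicit. Both proofs finish the same way (sum, choose $\eta=\Theta(1/\sqrt T)$, Jensen), and both lean on $v_t\ge v_{t-1}$: the paper states its convex argument under that monotonicity as well, so the ``genuinely delicate'' non-monotone \textsc{Sadam} case you flag is not actually resolved in the paper either---its \textsc{Sadam} result is obtained by replacing $(\mu_1,\mu_2)$ with $(\mu_3,\mu_4)$ in the \textsc{Adam} computation, still under $v_t\ge v_{t-1}$.

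One point in your outline to tighten: ``cancel the martingale-difference terms via $E[g_k\mid\mathcal F_k]=\nabla f(x_k)$'' inside $\langle\,\cdot\,,x_t-x^*\rangle$ is not immediate for $k<t$, since $x_t$ depends on $g_k$. You need the extra split $x_t-x^*=(x_k-x^*)+(x_t-x_k)$: the inner product of $g_k-\nabla f(x_k)$ with $x_k-x^*$ then has zero mean, while the inner product with $x_t-x_k$ is controlled by your deterministic step bound $\|x_t-x_k\|\le(t-k)\eta\mu_4 G$ and the geometric weights, giving the $O(\eta T)$ contribution you claim.
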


The accurate convergence rate will be $O(\frac{d}{\epsilon^2\sqrt{T}})$ for \textsc{Adam} and $O(\frac{d\beta^2}{\sqrt{T}})$ for \textsc{Sadam} with fixed $L$, $\sigma$, $G$, $\beta_1$, $D$, $D_{\infty}$. Some works may specify additional sparsity assumptions on stochastic gradients, and in other words, require $\sum_{t=1}^T \sum_{j=1}^d \|g_{t,j}\| \ll \sqrt{dT}$  \citep{duchi2011adaptive,reddi2018convergence,zhou2018convergence,chen2018closing} to reduce the order from $d$ to $\sqrt{d}$. Some works may use the element-wise bounds $\sigma_j$ or $G_j$, and apply $\sum_{j=1}^d \sigma_j= \sigma$, and $\sum_{j=1}^d G_j= G$ to hide $d$. In our work, we do not assume sparsity, so we use $\sigma$ and $G$ throughout the proof. Otherwise, those techniques can also be used to hide $d$ from our convergence rate.

\begin{corollary}
If $\epsilon$ or $\beta$ is treated as constants, then \textsc{Adam}, \textsc{Sadam} (and \textsc{SAMSGrad}) methods with fixed $ L, \sigma, G, \beta_1$, and $\eta = O(\frac{1}{\sqrt{T}})$ in the convex case will call for $O(\frac{1}{\delta^2})$ iterations to achieve $\delta$-accurate solutions.
\end{corollary}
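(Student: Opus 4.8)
The plan is to derive this purely as a corollary of the non-strongly convex convergence theorems for \textsc{Adam} and \textsc{Sadam} (and the analogous bound for \textsc{SAMSGrad}), by absorbing every quantity that is held fixed into the $O(\cdot)$ notation and then inverting the resulting rate.

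First I would invoke the non-strongly convex theorem: with $\eta_t = \eta = O(1/\sqrt{T})$ and under Assumption \ref{assumption} together with the bounded-iterate assumptions $E[\|x_t - x^*\|] \le D$ and $E[\|x_m - x_n\|] \le D_\infty$, \textsc{Adam} satisfies $f(\bar{x}_T) - f^* \le O(d/(\epsilon^2\sqrt{T}))$ and \textsc{Sadam} satisfies $f(\bar{x}_T) - f^* \le O(d\beta^2/\sqrt{T})$, where $\bar{x}_T = \frac{1}{T}\sum_{t=1}^T x_t$. The corresponding bound for \textsc{SAMSGrad} follows from the same argument via Lemma \ref{bound}, since the $\max$ operation only alters the explicit values of the A-LR bounds $\mu_3,\mu_4$ but not their functional dependence on $\beta$.

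Next, treating $\epsilon$ (resp. $\beta$) as a constant and also holding $L,\sigma,G,\beta_1,d,D,D_\infty$ fixed, all problem-dependent factors multiplying $1/\sqrt{T}$ collapse into a single absolute constant $C$, so that in each case $f(\bar{x}_T) - f^* \le C/\sqrt{T}$. The one point that requires care is checking that $C$ is genuinely independent of $T$: the only way $T$ could leak into the constant is through the prescription $\eta = O(1/\sqrt{T})$, but the proof of the non-strongly convex theorem has already folded this choice of $\eta$ into the stated $1/\sqrt{T}$ rate, so no residual $T$-dependence remains in $C$.

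Finally, to guarantee a $\delta$-accurate solution in the sense $f(\bar{x}_T) - f^* \le \delta$, it suffices to require $C/\sqrt{T} \le \delta$, i.e. $T \ge C^2/\delta^2$, hence $T = O(1/\delta^2)$ iterations suffice for all three methods, matching the known iteration complexity of SGD and S-Momentum in the non-strongly convex setting. I do not expect a genuine obstacle here; the work is purely bookkeeping — being explicit about which quantities are absorbed as constants so that the $O(\cdot)$ is with respect to $T$ and $\delta$ alone, and recalling that it is Jensen's inequality applied to the averaged regret that licenses replacing the regret bound by the optimality gap evaluated at $\bar{x}_T$.
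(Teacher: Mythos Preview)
Your proposal is correct and matches the paper's intended reasoning: the paper gives no explicit proof of this corollary, treating it as an immediate consequence of the non-strongly convex theorems by absorbing the fixed quantities into the constant in front of $1/\sqrt{T}$ and inverting. Your observation that $d$, $D$, and $D_\infty$ must also be held fixed (beyond the listed $L,\sigma,G,\beta_1$) is a helpful clarification that the paper leaves implicit in the discussion preceding the corollary.
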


\begin{theorem}\textbf{[P-L Condition]}
Suppose $f(x)$ satisfies the P-L condition (with parameter $\lambda$) and Assumption \ref{assumption} in the convex case. Let $\eta_t = \eta = O(\frac{1}{T^2})$,
\textsc{Sadam} has:
$$\;\;E[f(x_{T+1})-f^*]\leq (1-\frac{2\lambda \mu_3}{T^2})^T E[f(x_1)-f^*]+O(\frac{1}{T}).$$
\end{theorem}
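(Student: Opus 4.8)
\emph{Proof strategy.} The plan is to reproduce the argument behind Theorem~\ref{th3} (the P-L result for \textsc{Adam}) almost verbatim, exploiting the fact that \textsc{Sadam} and \textsc{Adam} share the same first-order momentum recursion and differ only in how the A-LR is built; Lemma~\ref{bound} already supplies the one ingredient the proof actually needs, namely that $1/softplus(\sqrt{v_{t,j}})$ lies in $[\mu_3,\mu_4]$ coordinatewise, exactly as $1/(\sqrt{v_{t,j}}+\epsilon)\in[\mu_1,\mu_2]$. So every place where the \textsc{Adam} proof cites $\mu_1$ (resp.\ $\mu_2$) we substitute $\mu_3$ (resp.\ $\mu_4$), while the P-L constant $\lambda$, smoothness $L$, gradient bound $G$, and variance bound $\sigma^2$ enter identically.

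Concretely, I would start from the $L$-smoothness descent inequality for consecutive iterates, $f(x_{t+1}) \le f(x_t) + \langle \nabla f(x_t),\, x_{t+1}-x_t\rangle + \tfrac{L}{2}\|x_{t+1}-x_t\|^2$, and plug in the \textsc{Sadam} step $x_{t+1}-x_t = -\eta\, m_t/softplus(\sqrt{v_t})$. The quadratic term is bounded by $\tfrac{L}{2}\eta^2\mu_4^2\|m_t\|^2 \le \tfrac{L}{2}\eta^2\mu_4^2 G^2$ using $\|g_t\|\le G$ and convexity of the $m_t$ average. The delicate term is $-\eta\langle \nabla f(x_t),\, m_t/softplus(\sqrt{v_t})\rangle$: since $m_t$ is a running average rather than $\nabla f(x_t)$, I would expand $m_t=\beta_1 m_{t-1}+(1-\beta_1)g_t$, take conditional expectation so $g_t$ is replaced by $\nabla f(x_t)$ at the price of a variance term ($\sigma^2$) and a bias term tied to the change in $v_t$, and then use Lemma~\ref{bound} coordinatewise to lower-bound the ``signal'' contribution by $\mu_3\|\nabla f(x_t)\|^2$ while upper-bounding the residual momentum and noise pieces by quantities of order $\eta\mu_4 G^2$, $\eta\mu_4\sigma^2$, and $\eta^2$; the convexity of $f$ assumed in this theorem can serve as an auxiliary tool here, or one may instead pass to the momentum potential $z_t = x_t + \tfrac{\beta_1}{1-\beta_1}(x_t-x_{t-1})$ as in the nonconvex analysis. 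Taking full expectation, these steps produce a one-step inequality $E[f(x_{t+1})-f^*] \le E[f(x_t)-f^*] - \eta\mu_3\, E[\|\nabla f(x_t)\|^2] + R_t$, with $R_t$ the collected error.

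Next I would apply the P-L inequality $\|\nabla f(x_t)\|^2 \ge 2\lambda(f(x_t)-f^*)$ to get the contraction $E[f(x_{t+1})-f^*] \le (1-2\lambda\eta\mu_3)\,E[f(x_t)-f^*] + R_t$, unroll from $t=1$ to $T$ to obtain $E[f(x_{T+1})-f^*] \le (1-2\lambda\eta\mu_3)^T E[f(x_1)-f^*] + \sum_{t=1}^{T}(1-2\lambda\eta\mu_3)^{T-t} R_t$, and bound $(1-2\lambda\eta\mu_3)^{T-t}\le 1$ so the tail is at most $\sum_t R_t$. Substituting $\eta=O(1/T^2)$ turns the leading factor into $(1-2\lambda\mu_3/T^2)^T$ and makes $\sum_t R_t = O(1/T)$ (the dominant pieces of $R_t$ are of order $\eta$ times a problem constant, summed over $T$ steps), which is precisely the claimed bound.

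The main obstacle is the one that motivates the paper's new framework: controlling the momentum/stochastic cross-terms so that the coefficient multiplying $\|\nabla f(x_t)\|^2$ is cleanly $\eta\mu_3$ and the leftover $R_t$ collapses to $O(1/T)$ without introducing a hidden factor of $1/\mu_3$ (equivalently $\beta$) that would corrupt the $T$-dependence; this is where the two-sided A-LR bound of Lemma~\ref{bound}, together with the boundedness $softplus(\cdot)\ge \tfrac{1}{\beta}\log 2$ that keeps $\mu_4$ finite, does the heavy lifting. Everything else is routine bookkeeping identical to the \textsc{Adam} P-L proof with $\mu_1\mapsto\mu_3$.
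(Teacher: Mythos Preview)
Your high-level plan---derive a one-step contraction $E[f(x_{t+1})-f^*]\le(1-2\lambda\eta\mu_3)E[f(x_t)-f^*]+R_t$, unroll, and set $\eta=1/T^2$---matches the paper, and your remark that the \textsc{Adam} argument carries over with $(\mu_1,\mu_2)\mapsto(\mu_3,\mu_4)$ is exactly how the paper concludes. But the primary route you describe (apply $L$-smoothness directly at $x_t$ and expand $m_t=\beta_1 m_{t-1}+(1-\beta_1)g_t$) does not deliver the signal coefficient $\eta\mu_3$ you claim. After conditional expectation the $(1-\beta_1)g_t$ part contributes only $(1-\beta_1)\eta\mu_3\|\nabla f(x_t)\|^2$; the $\beta_1 m_{t-1}$ piece has no fixed sign, and if you throw it into $R_t$ as you propose, the contraction factor becomes $(1-2\lambda(1-\beta_1)\eta\mu_3)$, which does not reproduce the stated $(1-2\lambda\mu_3/T^2)^T$.

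The paper recovers the full $\eta\mu_3$ precisely via the route you mention only as an alternative: it works with $z_t=x_t+\tfrac{\beta_1}{1-\beta_1}(x_t-x_{t-1})$, whose update carries $-\eta\, g_t/softplus(\sqrt{v_t})$ with coefficient $\eta$ rather than $(1-\beta_1)\eta$, so the nonconvex lemma already gives $E[f(z_{t+1})-f(z_t)]\le -\eta\mu_3\|\nabla f(x_t)\|^2+\text{errors}$. The step you are missing is how this $z$-level inequality is converted back to $x$: the paper uses convexity to lower-bound $f(z_{t+1})\ge f(x_{t+1})+\tfrac{\beta_1}{1-\beta_1}\langle\nabla f(x_{t+1}),x_{t+1}-x_t\rangle$ and $L$-smoothness to upper-bound $f(z_t)\le f(x_t)+\tfrac{\beta_1}{1-\beta_1}\langle\nabla f(x_t),x_t-x_{t-1}\rangle+\tfrac{L}{2}\bigl(\tfrac{\beta_1}{1-\beta_1}\bigr)^2\|x_t-x_{t-1}\|^2$. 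The two inner-product terms that appear are then bounded crudely by $(\sigma^2+G^2)\mu_4$ each and absorbed into $\Theta_t$; summed over $T$ steps they contribute $O(\eta T)=O(1/T)$. So convexity and the $z_t$ device are not alternatives, as your ``or'' suggests---both are used, for different purposes (the first to transfer the recursion from $z_t$ to $x_t$, the second to secure the full $\mu_3$ in the contraction).
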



\begin{corollary}\textbf{[Strongly Convex]}
Suppose $f(x)$ is $\mu$-strongly convex function that satisfies Assumption \ref{assumption}.  Let $\eta_t = \eta = O(\frac{1}{T^2})$,
\textsc{Sadam} has the convergence rate:
$$E[f(x_{T+1})-f^*]\leq (1-\frac{2\mu \mu_3}{T^2})^T E[f(x_1)-f^*]+O(\frac{1}{T}).$$
\end{corollary}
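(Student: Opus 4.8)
The plan is to obtain this corollary as an immediate specialization of the preceding P-L-condition theorem for \textsc{Sadam}, using the classical fact that $\mu$-strong convexity implies the Polyak-{\L}ojasiewicz condition with parameter $\lambda=\mu$. So the only real work is the reduction; the quantitative content already lives in the P-L theorem.

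First I would verify the implication. Starting from the $\mu$-strong convexity inequality of Definition \ref{def}, namely $f(y)\geq f(x)+\nabla f(x)^T(y-x)+\frac{\mu}{2}\|y-x\|^2$ for all $x,y$, I minimize both sides over $y\in\mathbb{R}^d$. The left-hand side attains infimum $f^*$ since $f$ has a global minimum by hypothesis; the right-hand side is a strictly convex quadratic in $y$ minimized at $y=x-\frac{1}{\mu}\nabla f(x)$ with value $f(x)-\frac{1}{2\mu}\|\nabla f(x)\|^2$. Hence $f^*\geq f(x)-\frac{1}{2\mu}\|\nabla f(x)\|^2$ for every $x$, i.e. $\|\nabla f(x)\|^2\geq 2\mu\,(f(x)-f^*)$, which is precisely the P-L condition with $\lambda=\mu$. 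Since $\mu$-strong convexity also implies plain convexity, and Assumption \ref{assumption} is assumed throughout, all hypotheses required by the P-L-condition theorem for \textsc{Sadam} hold with $\lambda=\mu$.

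Second I would invoke that theorem directly. I would note that the A-LR lower bound $\mu_3$ supplied by Lemma \ref{bound} is independent of any convexity structure — it is determined only by Assumption \ref{assumption}, $\beta_2$, and the hyper-parameter $\beta$ — so the very same constant $\mu_3$ enters here. Keeping the identical step-size schedule $\eta_t=\eta=O(1/T^2)$ and substituting $\lambda=\mu$ into the P-L bound $E[f(x_{T+1})-f^*]\leq (1-\tfrac{2\lambda\mu_3}{T^2})^T E[f(x_1)-f^*]+O(\tfrac{1}{T})$ yields exactly the claimed inequality.

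The only point needing care — and it is minor — is to confirm that the proof of the P-L theorem exploits the convexity/strong-convexity hypothesis only through the P-L inequality itself (together with $L$-smoothness and the bounded-A-LR lemma), so that replacing ``$\mu$-strongly convex'' by ``convex and P-L with $\lambda=\mu$'' loses nothing. Once that is checked, there is no further obstacle: this corollary is simply the strong-convexity instance of the P-L result.
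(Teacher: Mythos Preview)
Your proposal is correct and matches the paper's approach: the corollary is obtained as an immediate specialization of the P-L theorem for \textsc{Sadam} by invoking (what the paper records as Lemma \ref{SC}) that $\mu$-strong convexity implies the P-L inequality with parameter $\lambda=\mu$, and then substituting $\lambda=\mu$ into the P-L bound. One small clarification on your caveat: the paper's P-L proof does use plain convexity at one step (to lower-bound $f(z_{t+1})$ in terms of $f(x_{t+1})$), not solely the P-L inequality, but you already noted that strong convexity supplies ordinary convexity, so nothing is missing.
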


In summary, our methods share the same convergence rate as \textsc{Adam}, and enjoy even better convergence speed if comparing the common values chosen for the parameters $\epsilon$ and $\beta$. Our convergence rate recovers that of SGD  and S-Momentum in terms of $T$ for a small $\beta$.

\section{Experiments}
We compare \textsc{Sadam} and \textsc{SAMSGrad} against several state-of-the-art optimizers including S-Momentum, \textsc{Adam}, \textsc{AMSGrad}, \textsc{Yogi}, \textsc{PAdam}, \textsc{PAMSGrad}, \textsc{AdaBound}, and \textsc{AmsBound}. More results and architecture details are in Appendix.

\textbf{Experimental Setup.} We use three datasets for image classifications: MNIST, CIFAR-10 and CIFAR-100 and two datasets for LSTM language models:  Penn Treebank dataset (PTB) and the WikiText-2 (WT2) dataset.  The MNIST dataset is tested on a CNN with 5 hidden layers. The CIFAR-10 dataset is tested on Residual Neural Network with 20 layers (ResNets 20) and 56 layers (ResNets 56) \citep{he2016deep}, and DenseNets with 40 layers \citep{huang2017densely}. The CIFAR-100 dataset is tested on VGGNet \citep{simonyan2014very} and Residual Neural Network with 18 layers (ResNets 18) \citep{he2016deep}. The Penn Treebank dataset (PTB) and the WikiText-2 (WT2) dataset are tested on 3-layer LSTM models \citep{merityRegOpt}. 

We train CNN on the MNIST data for 100 epochs,  ResNets/DenseNets on CIFAR-10 for 300 epochs, with a weight decay factor of $5\times 10^{-4}$ and a batch size of 128, VGGNet/ResNets on CIFAR-100 for 300 epochs, with a weight decay factor of $0.025$ and a batch size of 128 and LSTM language models on 200 epochs. For the CIFAR tasks, we use a fixed multi-stage LR decaying scheme: the B-LR decays by $0.1$ at the $150$-th epoch and $225$-th epoch, which is a popular decaying scheme and used in many works \citep{keskar2017improving,staib2019escaping}. For the language tasks, we use a fixed multi-stage LR decaying scheme: the B-LR decays by $0.1$ at the $100$-th epoch and $150$-th epoch. All algorithms perform grid search for hyper-parameters to choose from $\{10, 1, 0.1, 0.01, 0.001, 0.0001\}$ for B-LR, $\{0.9, 0.99\}$ for $\beta_1$ and $\{0.99, 0.999\}$ for $\beta_2$.  For algorithm-specific hyper-parameters, they are tuned around the recommended values, such as  $p \in \{ \frac{1}{8}, \frac{1}{16}\}$in \textsc{PAdam} and \textsc{PAMSGrad}.
For our algorithms, $\beta$ is selected from  $\{10,50,100\}$ in \textsc{Sadam} and \textsc{SAMSGrad}, though we do observe fine-tuning $\beta$ can achieve better test accuracy most of time. All experiments on CIFAR tasks are repeated for 6 times to obtain the mean and standard deviation for each algorithm.

\textbf{Image Classification Tasks.} 
As a sanity check, experiment on MNIST has been done and its results are in Figure \ref{fig:mnist}, which shows the learning curve for all baseline algorithms and our algorithms on both training and test datasets. As expected, all methods can reach  the zero loss quickly, while for test accuracy, our \textsc{SAMSGrad} shows increase in test accuracy and outperforms competitors within 50 epochs.

\begin{figure}[H]
 \centering
  \includegraphics[width=0.85\linewidth]{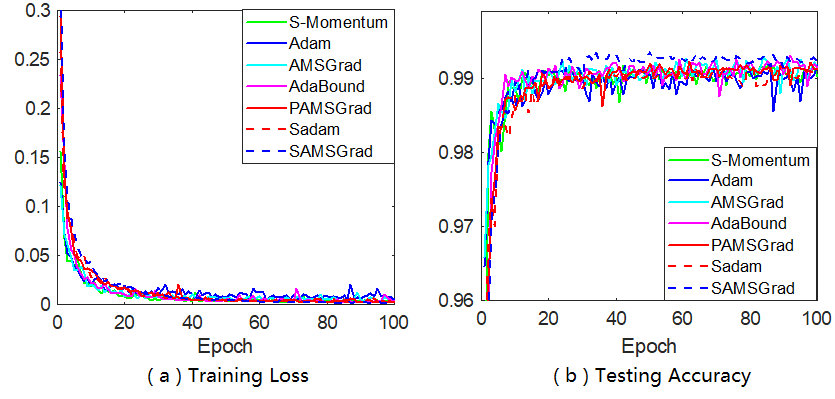}
  \caption{Training loss and test accuracy on MNIST.}
  \label{fig:mnist}
\end{figure}

\begin{table*}[t]
   \centering 
   \caption{Test Accuracy(\%) of CIFAR-10 for ResNets 20, ResNets 56 and DenseNets.} \label{tab:res}
    \begin{tabular}{|p{5cm}|c c c c c c|}
        \hline
         Method & B-LR & $\epsilon$ & $\beta$ & ResNets 20 & ResNets 56 & DenseNets\\
         \hline\hline
         S-Momentum \citep{he2016deep,huang2017densely} & - & - & -  & 91.25 & 93.03 &94.76 \\
         \textsc{Adam} \citep{zaheer2018adaptive} & $10^{-3}$  & $10^{-3}$ & - & $92.56 \pm 0.14$ &  $93.42 \pm 0.16$ & $93.35 \pm 0.21$ \\
        \textsc{Yogi} \citep{zaheer2018adaptive} & $10^{-2}$ & $10^{-3}$  & - & $92.62 \pm 0.17$ & $93.90 \pm 0.21$ & $94.38 \pm 0.26$ \\
         \hline
         S-Momentum & $10^{-1}$ & - &  -  & $92.73 \pm 0.05$ & $94.11\pm 0.15$ &  $95.03 \pm 0.15$\\
         \textsc{Adam} & $10^{-3}$ & $10^{-8}$ &  - & $91.68 \pm 0.12$ &  $92.82 \pm 0.09$ &  $93.32 \pm 0.06$\\
        \textsc{AMSGrad}& $10^{-3}$ &  $10^{-8}$ &  - & $91.7 \pm 0.12$ &  $93.10 \pm 0.11$ &  $93.71 \pm 0.05$\\
        \textsc{PAdam} & $10^{-1}$ & $10^{-8}$ & - & $92.7 \pm 0.10$ & $94.12 \pm 0.12$ & $95.06 \pm 0.06$\\
        \textsc{PAMSGrad} & $10^{-1}$ &  $10^{-8}$ &  - & $92.74 \pm 0.12$ &  $94.18 \pm 0.06$ & $\textbf{95.21} \pm 0.10$ \\
        \textsc{AdaBound} & $10^{-2}$ & $10^{-8}$ & - & $91.59 \pm 0.24$ &  $93.09 \pm 0.14$ &  $94.16 \pm 0.10$ \\
       \textsc{AmsBound} & $10^{-2}$ &  $10^{-8}$ &  - & $91.76 \pm 0.16$ &  $93.08 \pm 0.09$ & $94.03\pm 0.11$ \\
       \hline
       \textsc{Sadam} & $10^{-2}$ &  -  & 50 & $\textbf{93.01} \pm 0.16$ &  $\textbf{94.26} \pm 0.10$& $ 95.19 \pm 0.18$ \\
       \textsc{SAMSGrad} & $10^{-2}$ & - & 50 & $\textbf{92.88} \pm 0.10$ & $\textbf{94.32} \pm 0.18$ & $\textbf{95.31} \pm 0.15$ \\
       \hline
   \end{tabular}
\end{table*}

\begin{figure}[h] 
  \centering
  \includegraphics[width=0.8\linewidth]{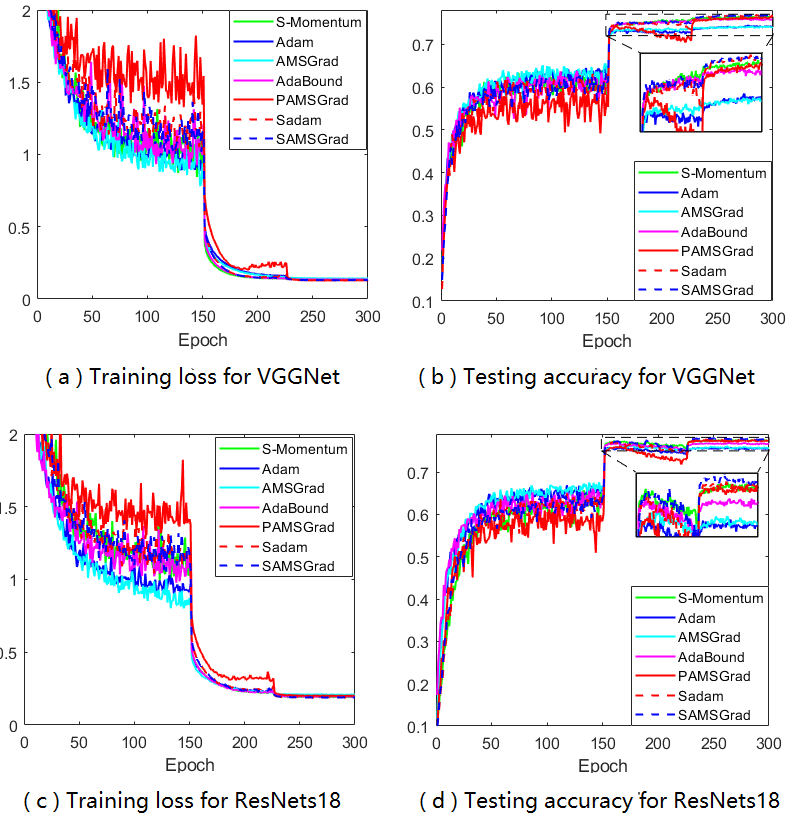}
  \caption{Training loss and test accuracy of two CNN architectures on CIFAR-100. 
  }
  \label{fig:cifar100}
  \vspace{-15pt}
\end{figure}

\begin{figure}[h]
  \centering
  \includegraphics[width=0.8\linewidth]{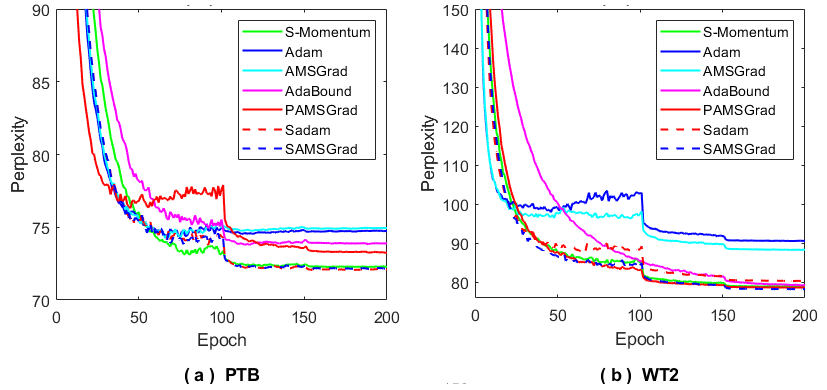}
  \caption{Perplexity curves on the test set on 3-layer LSTM models over PTB and WT2 datasets}
  \label{LSTM_results}
  \vspace{-10pt}
\end{figure}

Using the PyTorch framework, we first run the ResNets 20 model on CIFAR10 and results are shown in Table \ref{tab:res}. The original \textsc{Adam} and \textsc{AMSGrad} have lower test accuracy in comparison with S-Momentum, leaving a clear generalization gap exactly same as what is previously reported.
For our methods, \textsc{Sadam} and \textsc{SAMSGrad} clearly close the gap, and \textsc{Sadam} achieves the best test accuracy among competitors.
We further test all methods with CIFAR10 on ResNets 56 with greater network depth, and the overall performance of each algorithm has been improved. For the experiments with DenseNets, we use a DenseNet with $40$ layers and a growth rate $k = 12$ without bottleneck, channel reduction, or dropout. The results are reported in the last column of Table \ref{tab:res}, \textsc{SAMSGrad} still achieves the best test performance, and the proposed two methods largely improve the performance of \textsc{Adam} and \textsc{AMSGrad} and close the gap with S-Momentum. 

Furthermore, two popular CNN architectures: VGGNet \citep{simonyan2014very} and ResNets18 \citep{he2016deep} are tested on CIFAR-100 dataset to compare different algorithms. Results can be found in Figure \ref{fig:cifar100} and repeated results are in Appendix. Our proposed methods again perform slightly better than S-Momentum in terms of test accuracy.

\textbf{LSTM Language Models.}
Observing the significant improvements in deep neural networks for image classification tasks, we further conduct experiments on the language models with LSTM. For comparing the efficiency of our proposed methods, two LSTM models over the Penn Treebank dataset (PTB) \citep{mikolov2010recurrent} and the WikiText-2
(WT2) dataset \citep{bradbury2016quasi} are tested.
We present the single-model perplexity results for both our proposed methods and other competitive methods in Figure \ref{LSTM_results} and our methods achieve both fast convergence and best generalization performance.

In summary, our proposed methods show great efficacy on several standard benchmarks in both training and testing results, and outperform most optimizers in terms of generalization performance. 

\section{Conclusion}
In this paper, we study adaptive gradient methods from a new perspective that is driven by the observation that the adaptive learning rates are anisotropic at each iteration. Inspired by this observation, we propose to calibrate the adaptive learning rates using an activation function, and in this work, we  examine {\em softplus} function. We combine this calibration scheme with \textsc{Adam} and \textsc{AMSGrad} methods and empirical evaluations show obvious improvement on their generalization performance in multiple deep learning tasks. Using this calibration scheme, we replace the hyper-parameter $\epsilon$ in the original methods by a new parameter $\beta$ in the {\em softplus} function.
A new mathematical model has been proposed to analyze the convergence of adaptive gradient methods. Our analysis shows that the convergence rate is related to $\epsilon$ or $\beta$, which has not been previously revealed, and the dependence on $\epsilon$ or $\beta$ helps us justify the advantage of the proposed methods. 
In the future, the calibration scheme can be designed based on other suitable activation functions, and used in conjunction with any other adaptive gradient method to improve generalization performance. 

\newpage
\bibliographystyle{plain}
\bibliography{ADAMref}

\begin{thebibliography}{30}
\providecommand{\natexlab}[1]{#1}
\providecommand{\url}[1]{\texttt{#1}}
\expandafter\ifx\csname urlstyle\endcsname\relax
  \providecommand{\doi}[1]{doi: #1}\else
  \providecommand{\doi}{doi: \begingroup \urlstyle{rm}\Url}\fi

\bibitem[Bradbury et~al.(2016)Bradbury, Merity, Xiong, and
  Socher]{bradbury2016quasi}
James Bradbury, Stephen Merity, Caiming Xiong, and Richard Socher.
\newblock Quasi-recurrent neural networks.
\newblock \emph{arXiv preprint arXiv:1611.01576}, 2016.

\bibitem[Chaudhari et~al.(2016)Chaudhari, Choromanska, Soatto, LeCun, Baldassi,
  Borgs, Chayes, Sagun, and Zecchina]{chaudhari2016entropy}
Pratik Chaudhari, Anna Choromanska, Stefano Soatto, Yann LeCun, Carlo Baldassi,
  Christian Borgs, Jennifer Chayes, Levent Sagun, and Riccardo Zecchina.
\newblock Entropy-sgd: Biasing gradient descent into wide valleys.
\newblock \emph{arXiv preprint arXiv:1611.01838}, 2016.

\bibitem[Chen and Gu(2018)]{chen2018closing}
Jinghui Chen and Quanquan Gu.
\newblock Closing the generalization gap of adaptive gradient methods in
  training deep neural networks.
\newblock \emph{arXiv preprint arXiv:1806.06763}, 2018.

\bibitem[Chen et~al.(2018)Chen, Liu, Sun, and Hong]{chen2018convergence}
Xiangyi Chen, Sijia Liu, Ruoyu Sun, and Mingyi Hong.
\newblock On the convergence of a class of adam-type algorithms for non-convex
  optimization.
\newblock \emph{arXiv preprint arXiv:1808.02941}, 2018.

\bibitem[De et~al.(2018)De, Mukherjee, and Ullah]{de2018convergence}
Soham De, Anirbit Mukherjee, and Enayat Ullah.
\newblock Convergence guarantees for rmsprop and adam in non-convex
  optimization and an empirical comparison to nesterov acceleration.
\newblock 2018.

\bibitem[Dozat(2016)]{dozat2016incorporating}
Timothy Dozat.
\newblock Incorporating nesterov momentum into adam.
\newblock 2016.

\bibitem[Duchi et~al.(2011)Duchi, Hazan, and Singer]{duchi2011adaptive}
John Duchi, Elad Hazan, and Yoram Singer.
\newblock Adaptive subgradient methods for online learning and stochastic
  optimization.
\newblock \emph{Journal of Machine Learning Research}, 12\penalty0
  (Jul):\penalty0 2121--2159, 2011.

\bibitem[Ghadimi and Lan(2013)]{ghadimi2013stochastic}
Saeed Ghadimi and Guanghui Lan.
\newblock Stochastic first-and zeroth-order methods for nonconvex stochastic
  programming.
\newblock \emph{SIAM Journal on Optimization}, 23\penalty0 (4):\penalty0
  2341--2368, 2013.

\bibitem[He et~al.(2016)He, Zhang, Ren, and Sun]{he2016deep}
Kaiming He, Xiangyu Zhang, Shaoqing Ren, and Jian Sun.
\newblock Deep residual learning for image recognition.
\newblock In \emph{Proceedings of the IEEE conference on computer vision and
  pattern recognition}, pages 770--778, 2016.

\bibitem[Huang et~al.(2017)Huang, Liu, Van Der~Maaten, and
  Weinberger]{huang2017densely}
Gao Huang, Zhuang Liu, Laurens Van Der~Maaten, and Kilian~Q Weinberger.
\newblock Densely connected convolutional networks.
\newblock In \emph{Proceedings of the IEEE conference on computer vision and
  pattern recognition}, pages 4700--4708, 2017.

\bibitem[Keskar and Socher(2017)]{keskar2017improving}
Nitish~Shirish Keskar and Richard Socher.
\newblock Improving generalization performance by switching from adam to sgd.
\newblock \emph{arXiv preprint arXiv:1712.07628}, 2017.

\bibitem[Keskar et~al.(2016)Keskar, Mudigere, Nocedal, Smelyanskiy, and
  Tang]{keskar2016large}
Nitish~Shirish Keskar, Dheevatsa Mudigere, Jorge Nocedal, Mikhail Smelyanskiy,
  and Ping Tak~Peter Tang.
\newblock On large-batch training for deep learning: Generalization gap and
  sharp minima.
\newblock \emph{arXiv preprint arXiv:1609.04836}, 2016.

\bibitem[Kingma and Ba(2014)]{kingma2014adam}
Diederik~P Kingma and Jimmy Ba.
\newblock Adam: A method for stochastic optimization.
\newblock \emph{arXiv preprint arXiv:1412.6980}, 2014.

\bibitem[Kleinberg et~al.(2018)Kleinberg, Li, and
  Yuan]{kleinberg2018alternative}
Robert Kleinberg, Yuanzhi Li, and Yang Yuan.
\newblock An alternative view: When does sgd escape local minima?
\newblock In \emph{International Conference on Machine Learning}, pages
  2703--2712, 2018.

\bibitem[Li et~al.(2018)Li, Xu, Taylor, Studer, and
  Goldstein]{li2018visualizing}
Hao Li, Zheng Xu, Gavin Taylor, Christoph Studer, and Tom Goldstein.
\newblock Visualizing the loss landscape of neural nets.
\newblock In \emph{Advances in Neural Information Processing Systems}, pages
  6389--6399, 2018.

\bibitem[Luo et~al.(2019)Luo, Xiong, Liu, and Sun]{luo2019adaptive}
Liangchen Luo, Yuanhao Xiong, Yan Liu, and Xu~Sun.
\newblock Adaptive gradient methods with dynamic bound of learning rate.
\newblock \emph{arXiv preprint arXiv:1902.09843}, 2019.

\bibitem[Merity et~al.(2017)Merity, Keskar, and Socher]{merityRegOpt}
Stephen Merity, Nitish~Shirish Keskar, and Richard Socher.
\newblock {Regularizing and Optimizing LSTM Language Models}.
\newblock \emph{arXiv preprint arXiv:1708.02182}, 2017.

\bibitem[Mikolov et~al.(2010)Mikolov, Karafi{\'a}t, Burget, {\v{C}}ernock{\`y},
  and Khudanpur]{mikolov2010recurrent}
Tom{\'a}{\v{s}} Mikolov, Martin Karafi{\'a}t, Luk{\'a}{\v{s}} Burget, Jan
  {\v{C}}ernock{\`y}, and Sanjeev Khudanpur.
\newblock Recurrent neural network based language model.
\newblock In \emph{Eleventh annual conference of the international speech
  communication association}, 2010.

\bibitem[Reddi et~al.(2015)Reddi, Hefny, Sra, Poczos, and
  Smola]{reddi2015variance}
Sashank~J Reddi, Ahmed Hefny, Suvrit Sra, Barnabas Poczos, and Alexander~J
  Smola.
\newblock On variance reduction in stochastic gradient descent and its
  asynchronous variants.
\newblock In \emph{Advances in Neural Information Processing Systems}, pages
  2647--2655, 2015.

\bibitem[Reddi et~al.(2018)Reddi, Kale, and Kumar]{reddi2018convergence}
Sashank~J Reddi, Satyen Kale, and Sanjiv Kumar.
\newblock On the convergence of adam and beyond.
\newblock 2018.

\bibitem[Simonyan and Zisserman(2014)]{simonyan2014very}
Karen Simonyan and Andrew Zisserman.
\newblock Very deep convolutional networks for large-scale image recognition.
\newblock \emph{arXiv preprint arXiv:1409.1556}, 2014.

\bibitem[Staib et~al.(2019)Staib, Reddi, Kale, Kumar, and
  Sra]{staib2019escaping}
Matthew Staib, Sashank~J Reddi, Satyen Kale, Sanjiv Kumar, and Suvrit Sra.
\newblock Escaping saddle points with adaptive gradient methods.
\newblock \emph{arXiv preprint arXiv:1901.09149}, 2019.

\bibitem[Tieleman and Hinton(2012)]{tieleman2012lecture}
Tijmen Tieleman and Geoffrey Hinton.
\newblock Lecture 6.5-rmsprop: Divide the gradient by a running average of its
  recent magnitude.
\newblock \emph{COURSERA: Neural networks for machine learning}, 4\penalty0
  (2):\penalty0 26--31, 2012.

\bibitem[Wilson et~al.(2016)Wilson, Recht, and Jordan]{wilson2016lyapunov}
Ashia~C Wilson, Benjamin Recht, and Michael~I Jordan.
\newblock A lyapunov analysis of momentum methods in optimization.
\newblock \emph{arXiv preprint arXiv:1611.02635}, 2016.

\bibitem[Wright and Nocedal(1999)]{wright1999numerical}
Stephen~J Wright and Jorge Nocedal.
\newblock Numerical optimization.
\newblock \emph{Springer Science}, 35\penalty0 (67-68):\penalty0 7, 1999.

\bibitem[Yang et~al.(2016)Yang, Lin, and Li]{yang2016unified}
Tianbao Yang, Qihang Lin, and Zhe Li.
\newblock Unified convergence analysis of stochastic momentum methods for
  convex and non-convex optimization.
\newblock \emph{arXiv preprint arXiv:1604.03257}, 2016.

\bibitem[Zagoruyko and Komodakis(2016)]{zagoruyko2016wide}
Sergey Zagoruyko and Nikos Komodakis.
\newblock Wide residual networks.
\newblock \emph{arXiv preprint arXiv:1605.07146}, 2016.

\bibitem[Zaheer et~al.(2018)Zaheer, Reddi, Sachan, Kale, and
  Kumar]{zaheer2018adaptive}
Manzil Zaheer, Sashank Reddi, Devendra Sachan, Satyen Kale, and Sanjiv Kumar.
\newblock Adaptive methods for nonconvex optimization.
\newblock In \emph{Advances in Neural Information Processing Systems}, pages
  9815--9825, 2018.

\bibitem[Zeiler(2012)]{zeiler2012adadelta}
Matthew~D Zeiler.
\newblock Adadelta: an adaptive learning rate method.
\newblock \emph{arXiv preprint arXiv:1212.5701}, 2012.

\bibitem[Zhou et~al.(2018)Zhou, Tang, Yang, Cao, and Gu]{zhou2018convergence}
Dongruo Zhou, Yiqi Tang, Ziyan Yang, Yuan Cao, and Quanquan Gu.
\newblock On the convergence of adaptive gradient methods for nonconvex
  optimization.
\newblock \emph{arXiv preprint arXiv:1808.05671}, 2018.

\end{thebibliography}

\newpage

\section*{Appendix}

\section{Architecture Used in Our Experiments}

Here we mainly introduce the MNIST architecture with Pytorch used in our empirical study, ResNets and DenseNets are well-known architectures used in many works and we do not include details here.

\begin{center}
 \begin{tabular}{||c | c ||} 
 \hline
 layer & layer setting \\ [0.5ex] 
 \hline\hline
 F.relu(self.conv1(x)) & self.conv1 = nn.Conv2d(1, 6, 5)\\ 
 \hline
 F.max\_pool2d(x, 2, 2) & \\
 \hline
 F.relu(self.conv2(x))  &self.conv2 = nn.Conv2d(6, 16, 5) \\
 \hline
 x.view(-1, 16*4)  & \\
 \hline
 F.relu(self.fc1(x)) & self.fc1 = nn.Linear(16*4*4, 120)\\ 
 \hline
 x= F.relu(self.fc2(x)) &self.fc2 = nn.Linear(120, 84) \\ 
 \hline
  x = self.fc3(x)& self.fc3 = nn.Linear(84, 10)\\
  \hline
  F.log\_softmax(x, dim=1) & \\
 \hline
\end{tabular}
\end{center}

\section{More Empirical Results}

In this section, we perform multiply experiments to study the property of anisotropic A-LR exsinting in AGMs and the performance of {\em softplus} function working on A-LR. We first show the A-LR range of popular \textsc{Adam}-type methods, then present how the parameter $\beta$ in \textsc{Sadam} and \textsc{SAMSGrad} reduce the range of A-LR and improve both training and testing performance.

\subsection{ A-LR Range of AGMs}

Besides the A-LR range of \textsc{Adam} method, which has shown in main paper, we further want to study more other \textsc{Adam}-type methods, and do experiments focus on \textsc{AMSGrad}, \textsc{PAdam}, and \textsc{PAMSGrad} on different tasks (Figure \ref{fig:mnist_range}, \ref{fig:resnet20_range}, and \ref{fig:desenet_range}). \textsc{AMSGrad} also has extreme large-valued coordinates, and will encounter the ``small learning rate dilemma" as well as \textsc{Adam}. With partial parameter $p$, the value range of A-LR can be largely narrow down, and the maximum range will be reduced around $10^2$ with \textsc{PAdam}, and less than $10^2$ with \textsc{PAMSGrad}. This reduced range, avoiding the ``small learning rate dilemma", may help us understand what ``trick" works on \textsc{Adam}'s A-LR can indeed improve the generalization performance. Besides, the range of A-LR in \textsc{Yogi}, \textsc{AdaBound} and \textsc{AmsBound} will be reduced or controlled by specific $\epsilon$ or $clip$ function, we don't show more information here.

\begin{figure}[H]
 \centering
  \includegraphics[width=0.85\linewidth]{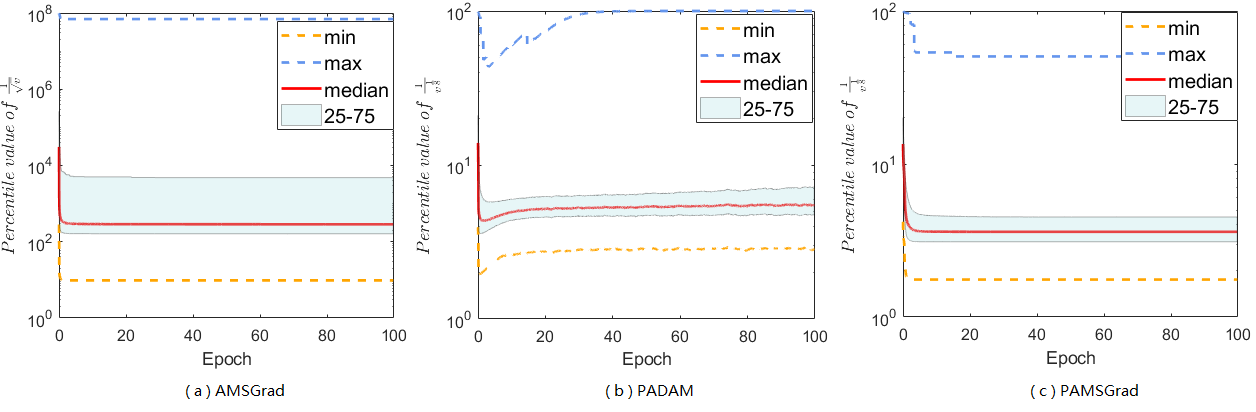}
  \caption{A-LR range of \textsc{AMSGrad} (a), \textsc{PAdam} (b), and \textsc{PAMSGrad} (c) on MNIST.}
  \label{fig:mnist_range}
\end{figure}

\begin{figure}[H]
 \centering
  \includegraphics[width=0.85\linewidth]{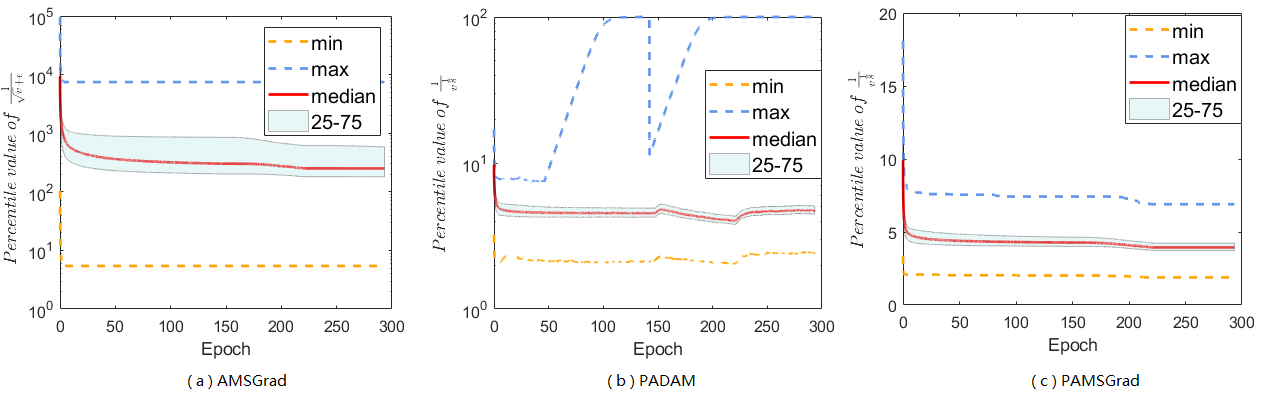}
  \caption{A-LR range of \textsc{AMSGrad} (a), \textsc{PAdam} (b), and \textsc{PAMSGrad} (c) on ResNets 20.}
  \label{fig:resnet20_range}
\end{figure}

\begin{figure}[H]
 \centering
  \includegraphics[width=0.85\linewidth]{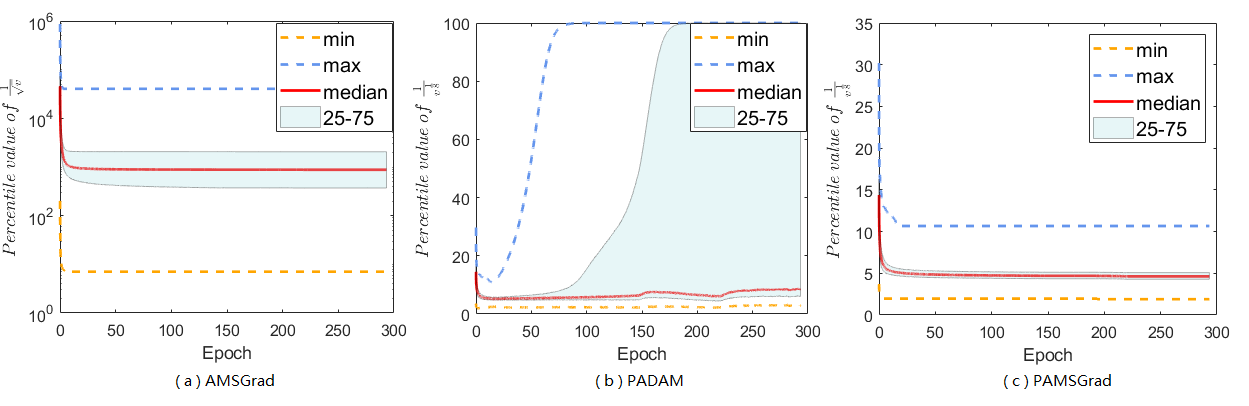}
  \caption{A-LR range of \textsc{AMSGrad} (a), \textsc{PAdam} (b), and \textsc{PAMSGrad} (c) on DenseNets.}
  \label{fig:desenet_range}
\end{figure}

\subsection{Parameter $\beta$ Reduces the Range of A-LR}

The main paper has discussed about {\em softplus} function, and mentions that it does help to constrain large-valued coordinates in A-LR while keep others untouched, here we give more empirical support. No matter how does $\beta$ set, the modified A-LR will have a reduced range. By setting various $\beta$'s, we can find an appropriate $\beta$ that performs the best for specific tasks on datasets. Besides the results of A-LR range of \textsc{Sadam} on MNIST with different choices of $\beta$, we also study \textsc{Sadam} and \textsc{SAMSGrad} on ResNets 20 and DenseNets. 

\begin{figure}[H]
 \centering
  \includegraphics[width=0.9\linewidth]{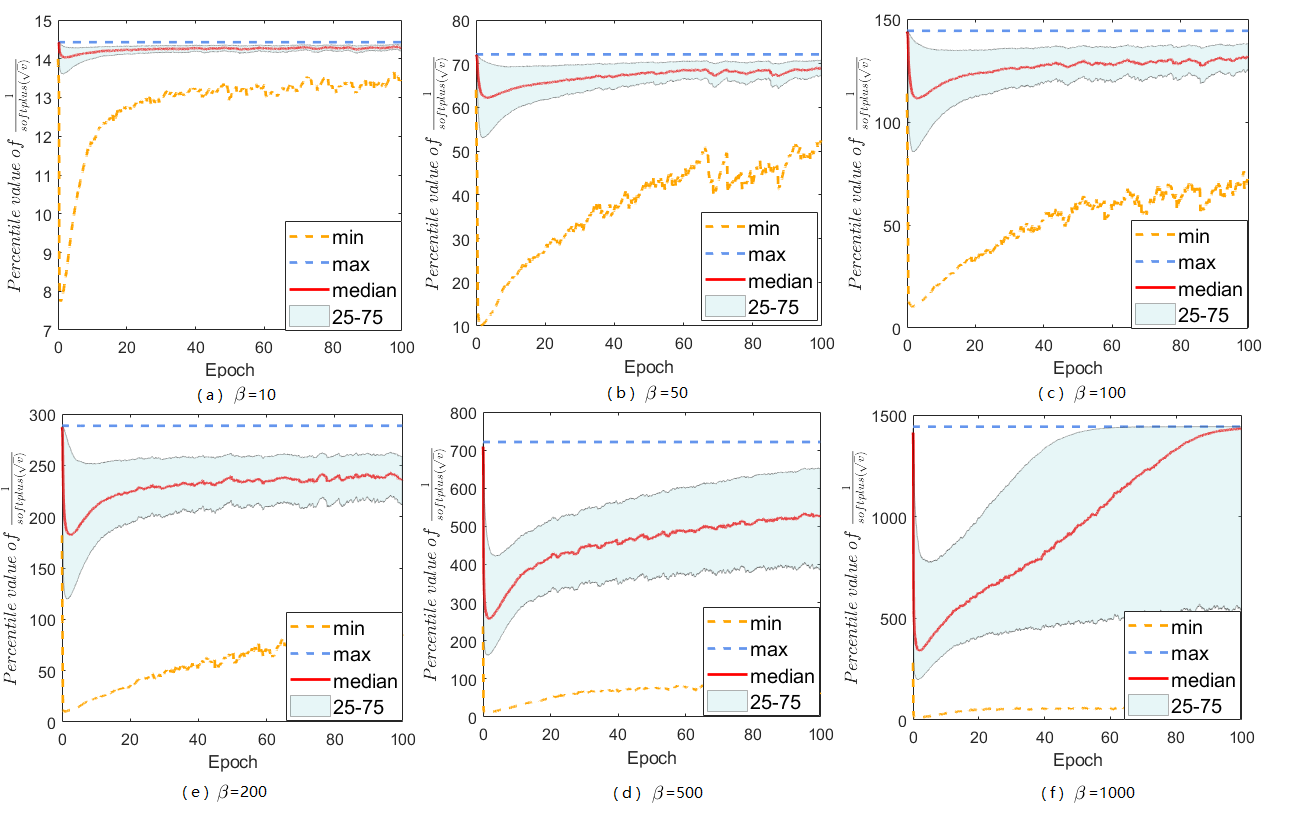}
  \caption{The range of A-LR: $1/softplus( \sqrt{v_t})$ over iterations for \textsc{Sadam} on MNIST with different choices of $\beta$. The maximum ranges in all figures are compressed to a reasonable smaller value compared with $10^{8}$. }
  \label{fig:mnist_Sadam}
\end{figure}

\begin{figure}[H]
 \centering
  \includegraphics[width=0.9\linewidth]{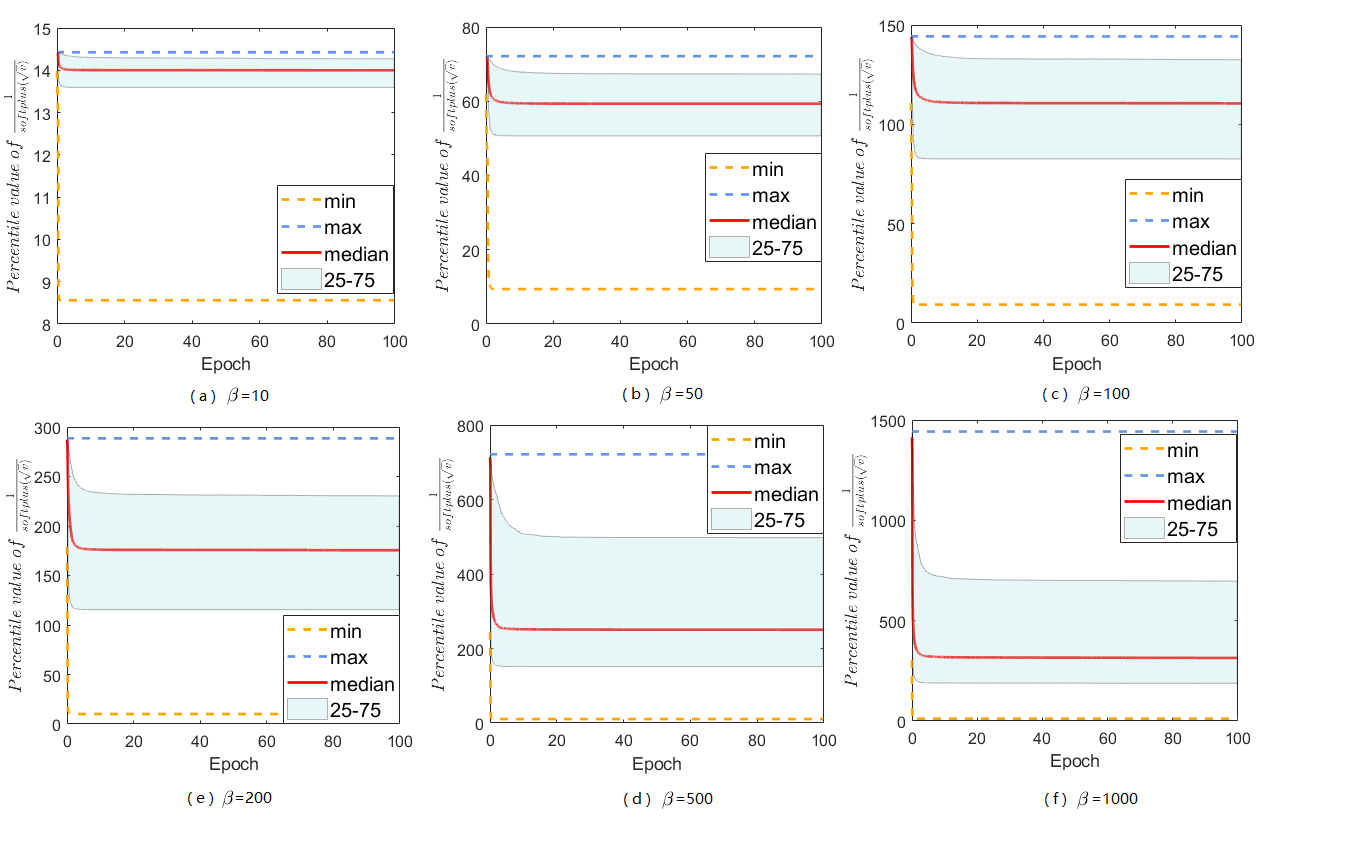}
  \caption{The range of A-LR: $1/softplus( \sqrt{v_t})$, $v_t = max \{v_{t-1}, \tilde{v}_{t}\}$ over iterations for \textsc{SAMSGrad} on MNIST with different choice of $\beta$.  The maximum ranges in all figures are compressed to a reasonable smaller value compared with those of  \textsc{AMSGrad} on MNIST.}
  \label{fig:mnist_Samsgrag}
\end{figure}

\begin{figure}[H]
 \centering
  \includegraphics[width=0.9\linewidth]{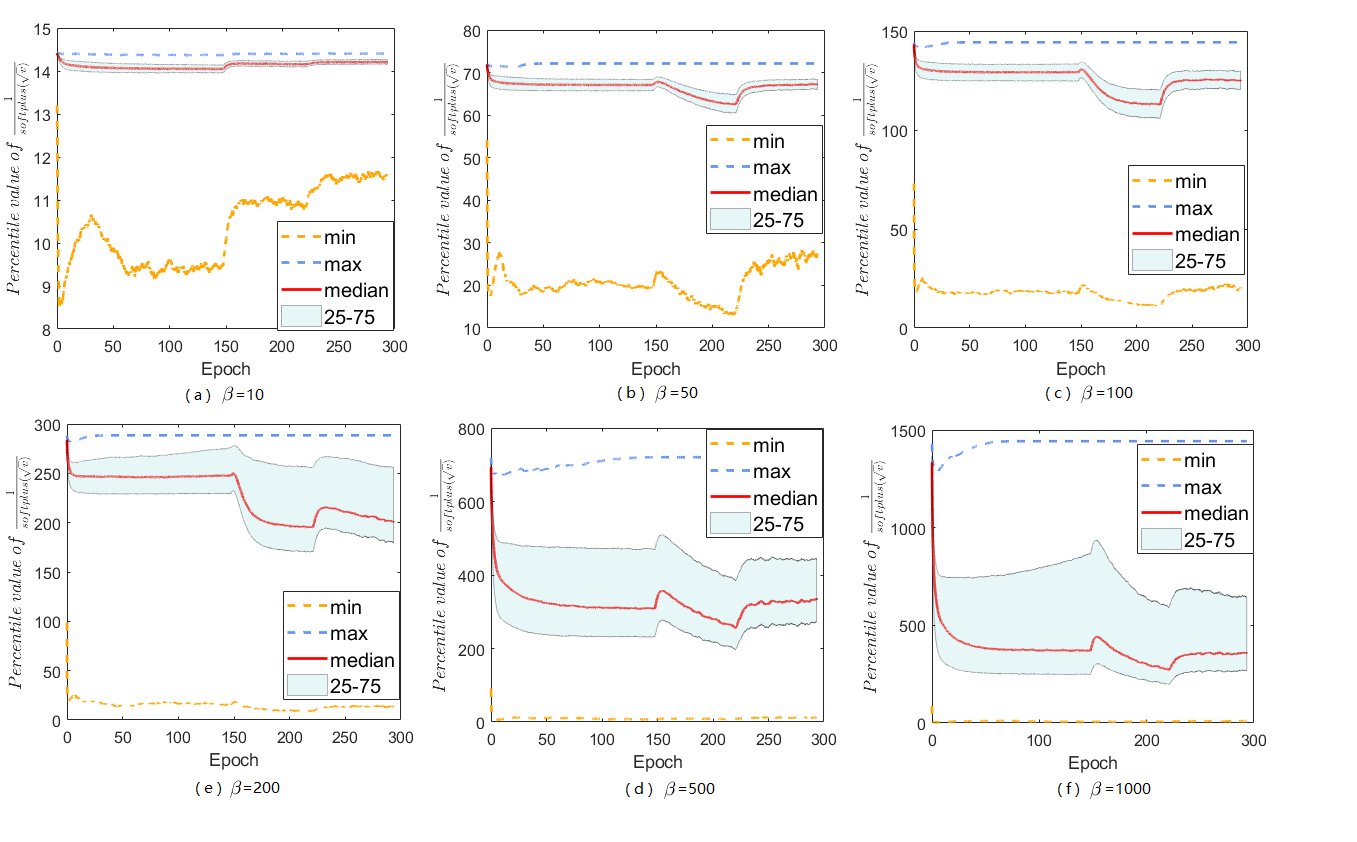}
  \caption{The range of A-LR: $1/softplus( \sqrt{v_t})$ over iterations for \textsc{Sadam} on ResNets 20 with different choices of $\beta$.}
  \label{fig:resnet20_Sadam}
\end{figure}

\begin{figure}[H]
 \centering
  \includegraphics[width=0.9\linewidth]{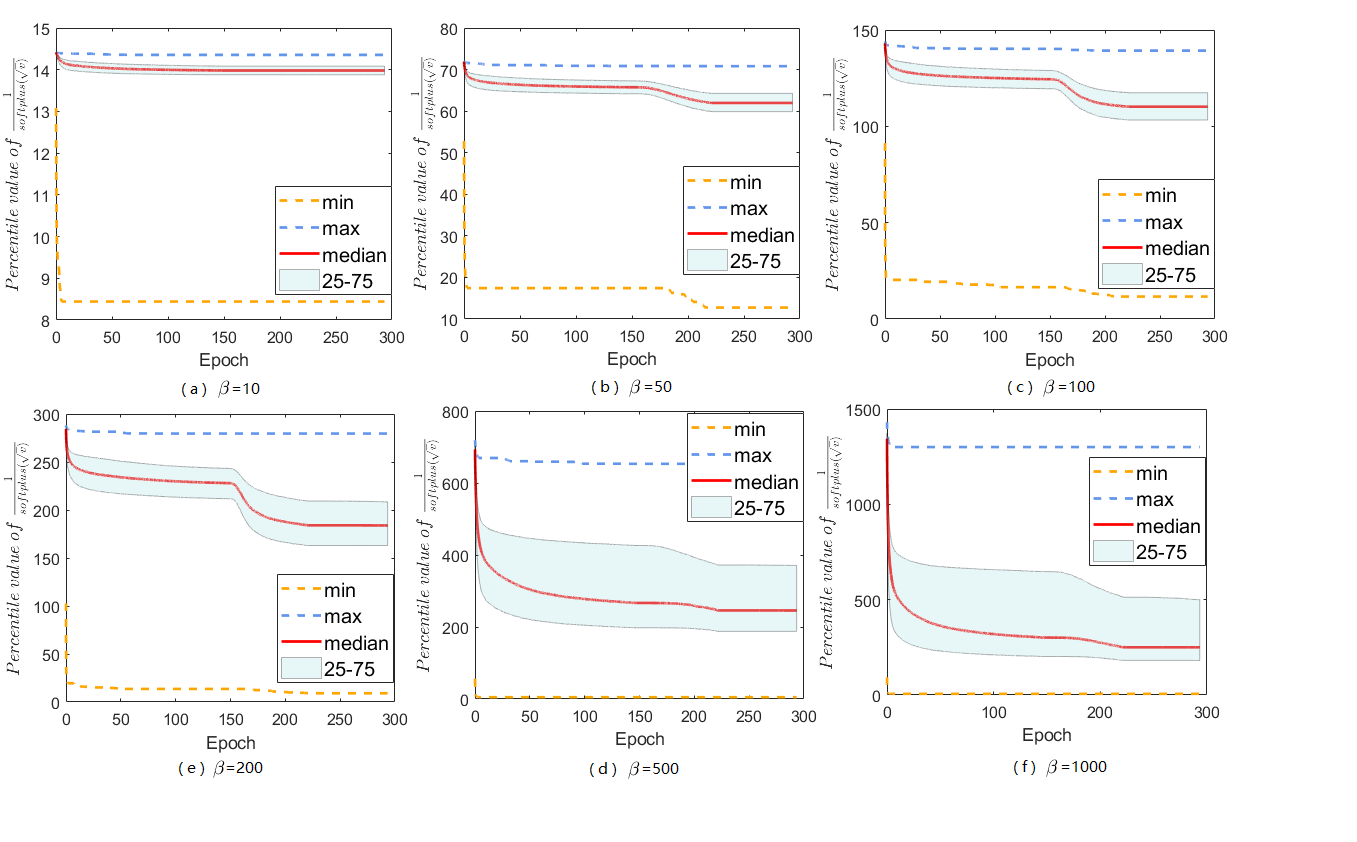}
  \caption{The range of A-LR: $1/softplus( \sqrt{ v_t})$, $v_t = max \{ v_{t-1}, \tilde{v}_{t}\}$ over iterations for \textsc{SAMSGrad} on ResNets 20 with different choices of $\beta$.}
  \label{fig:resnet20_Samsgrag}
\end{figure}

\begin{figure}[H]
 \centering
  \includegraphics[width=0.9\linewidth]{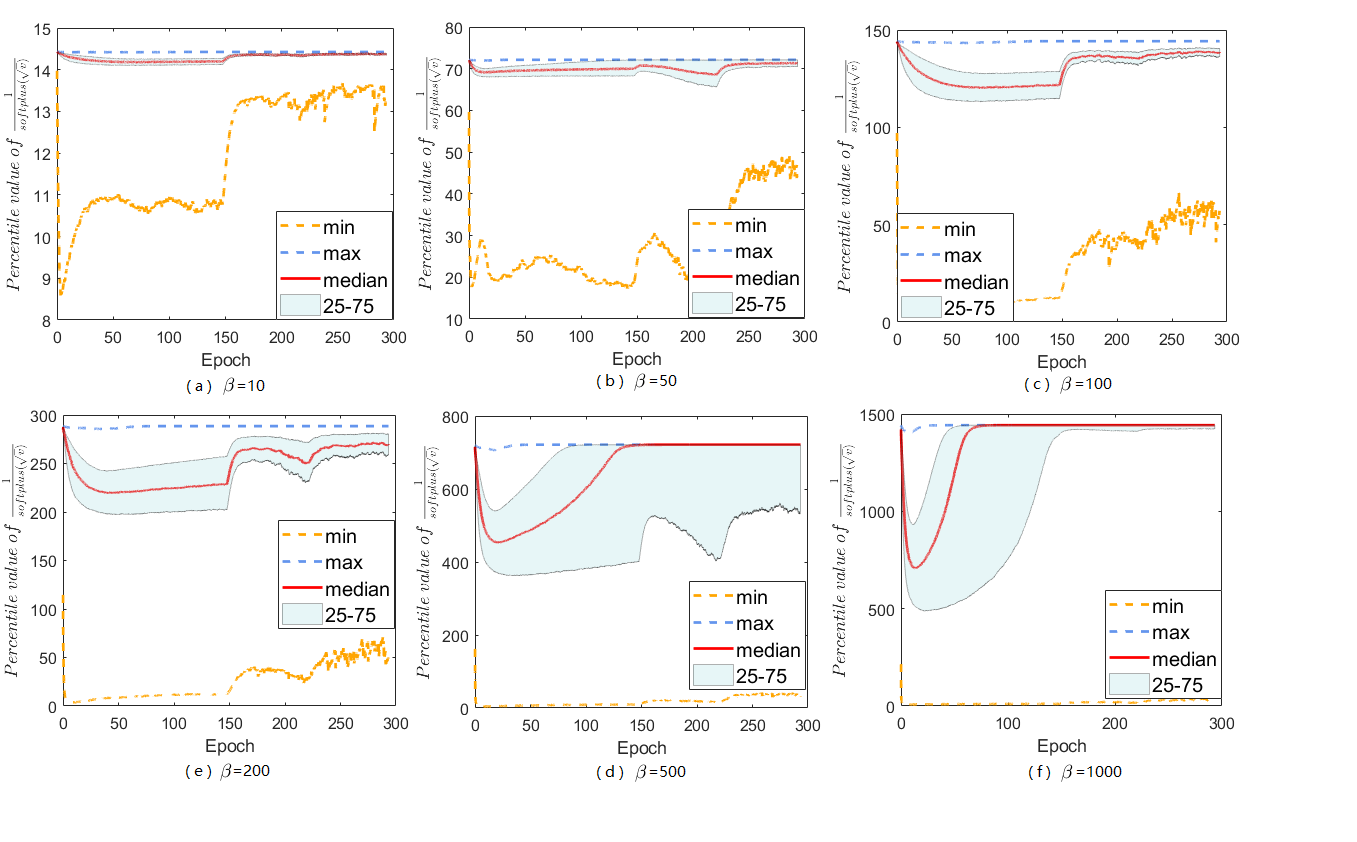}
  \caption{The range of A-LR: $1/softplus( \sqrt{v_t})$ over iterations for \textsc{Sadam} on DenseNets with different choice of $\beta$.}
  \label{fig:dens_Sadam}
\end{figure}

\begin{figure}[H]
 \centering
  \includegraphics[width=0.9\linewidth]{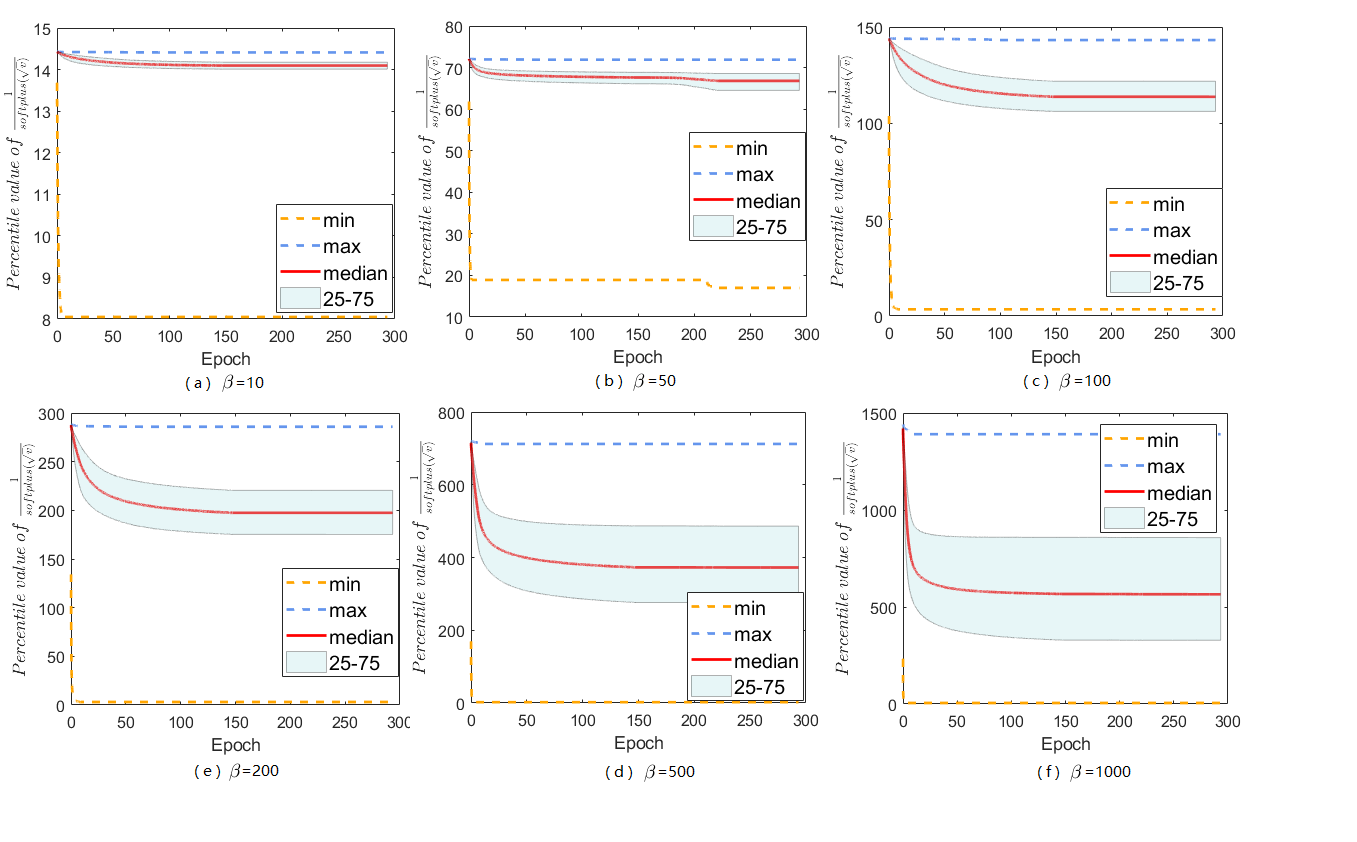}
  \caption{The range of A-LR: $1/softplus( \sqrt{ v_t})$, $v_t = max \{ v_{t-1}, \tilde{v}_{t}\}$ over iterations for \textsc{SAMSGrad} on DenseNets with different choices of $\beta$.}
  \label{fig:dens_Samsgrag}
\end{figure}

Here we do grid search to choose appropriate $\beta$ from $\{10, 50, 100, 200, 500, 1000\}$.
In summary, with {\em softplus} fuction, \textsc{Sadam} and \textsc{SAMSGrad} will narrow down the range of A-LR, make the A-LR vector more regular, avoiding "small learning rate dilemma" and finally achieve better performance.

\subsection{Parameter $\beta$ Matters in Both Training and Testing}

After studying existing \textsc{Adam}-type methods, and effect of different $\beta$ in adjusting A-LR, we focus on the training and testing accuracy of our {\em softplus} framework, especially \textsc{Sadam} and \textsc{SAMSGrad}, with different choices of $\beta$. 

\begin{figure}[H]
 \centering
  \includegraphics[width=0.8\linewidth]{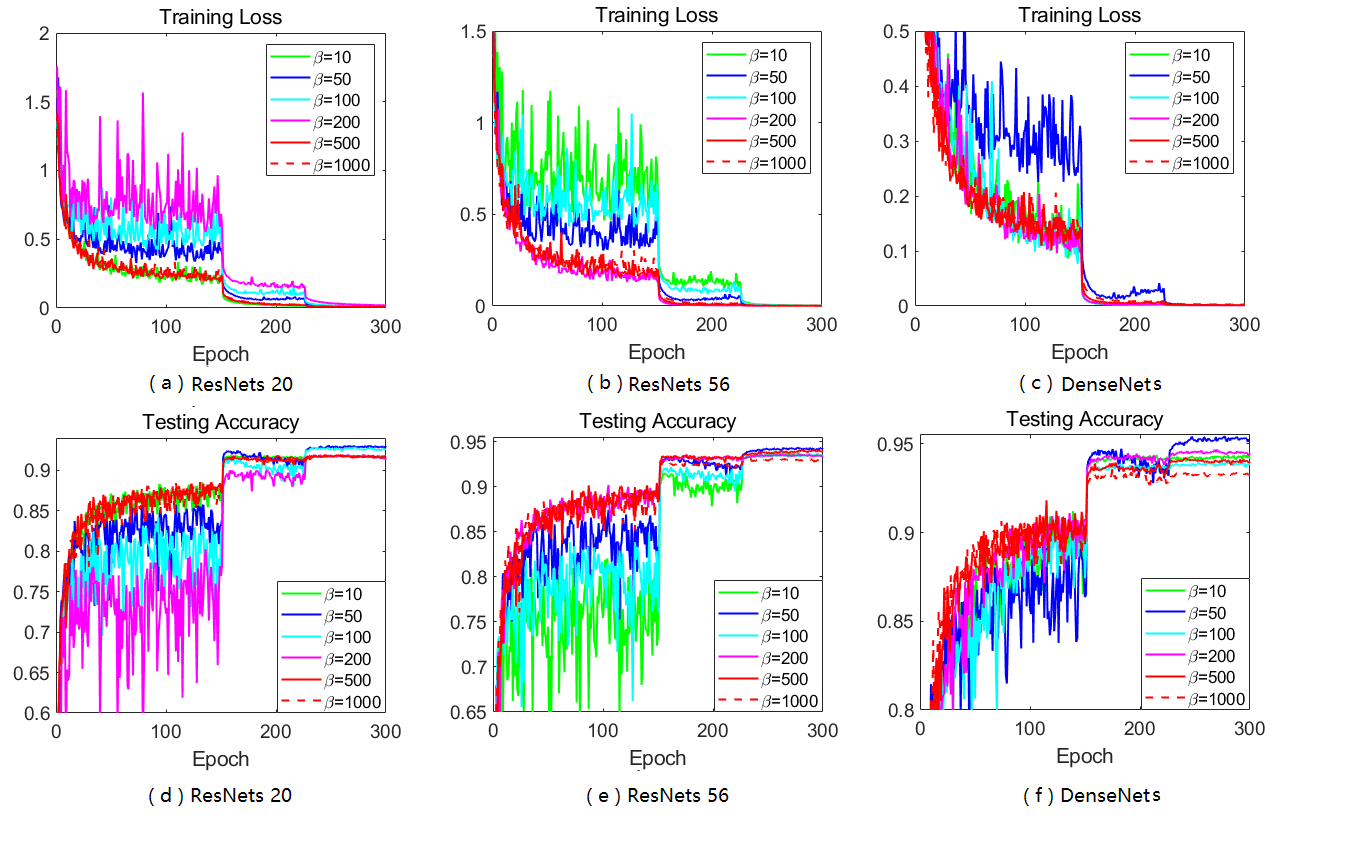}
  \caption{Performance of \textsc{Sadam} on CIFAR-10 with different choice of $\beta$.}
  \label{fig:cifar_Sadam}
\end{figure}

\begin{figure}[H]
 \centering
  \includegraphics[width=0.8\linewidth]{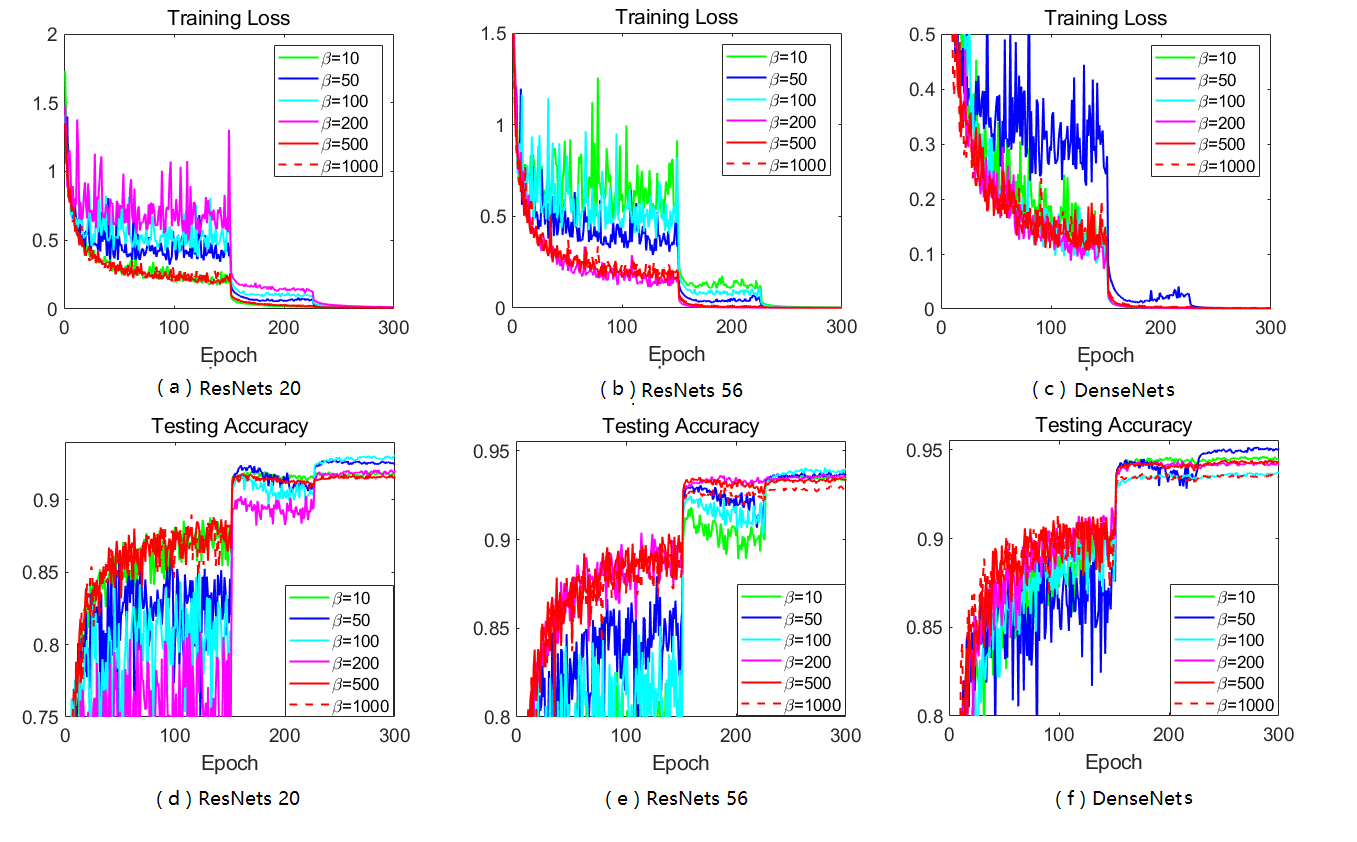}
  \caption{Performance of \textsc{SAMSGrad} on CIFAR-10 with different choice of $\beta$.}
  \label{fig:cifar_Samsgrag}
\end{figure}

\section{CIFAR100}

Two popular CNN architectures are tested on CIFAR-100 dataset to compare different algorithms: VGGNet \citep{simonyan2014very} and ResNets18  \citep{he2016deep}. Besides the figures in main text, we have repeated experiments and show results as follows. Our proposed methods again perform slightly better than S-Momentum in terms of test accuracy.

\begin{table}[H]
    \centering 
    \caption{Test Accuracy(\%) of CIFAR100 for VGGNet.} \label{tab:cifarvgg}
    \begin{tabular}{|c| c c c c|}
        \hline
         Method &50th epoch & 150th epoch & 250th epoch & best perfomance\\
         \hline
         S-Momentum  & $59.09 \pm 2.09$ & $61.25\pm 1.51$ &  $76.14 \pm 0.12$ & $76.43 \pm 0.15$\\
         \textsc{Adam} & $60.21 \pm 0.81$ &  $62.98 \pm 0.10$ &  $73.81 \pm 0.17$& $ 74.18 \pm 0.15$\\
         \textsc{AMSGrad} & $61.00 \pm 1.17$ &  $63.27 \pm 1.18$ &  $74.04 \pm 0.16$& $ 74.26 \pm 0.18$\\
         \textsc{PAdam}  & $53.62 \pm 1.70$ & $56.02 \pm 0.86$ & $75.85 \pm 0.20$& $ 76.36\pm 0.16$\\
         \textsc{PAMSGrad}  & $52.49 \pm 3.07$ &  $57.39 \pm 1.40$ & $75.82 \pm 0.31$& $76.26 \pm 0.30$ \\
        \textsc{AdaBound}  & $60.27 \pm 0.99$ &  $60.36 \pm 1.71$ &  $75.86 \pm 0.23$&  $76.10 \pm 0.22$ \\
        \textsc{AmsBound}  & $59.88 \pm 0.56$ &  $60.11 \pm 1.92$ & $75.74\pm 0.23$&  $75.99 \pm 0.20$ \\
        \hline
        \textsc{Sadam}  & ${58.59} \pm 1.60$ &  $61.27 \pm 1.67$& $ 76.35 \pm 0.18$ & $ \textbf{76.64} \pm 0.18$ \\
        \textsc{SAMSgrad}  & $59.16 \pm 1.20$ & $60.86 \pm 0.39$ & $76.27 \pm 0.23$& $ 76.47 \pm 0.26$ \\
        \hline
    \end{tabular}
\end{table}

\begin{table}[H]
    \centering 
    \caption{Test Accuracy(\%) of CIFAR100 for ResNets18.} \label{tab:cifarres}
    \begin{tabular}{|c| c c c c|}
        \hline
         Method &50th epoch & 150th epoch & 250th epoch & best perfomance\\
         \hline
         S-Momentum  & $59.98 \pm 1.31$ & $63.32\pm 1.61$ &  $77.19 \pm 0.36$ & $77.50 \pm 0.25$\\
         \textsc{Adam} & $63.40 \pm 1.42$ &  $66.18 \pm 1.02$ &  $75.68 \pm 0.49$& $ 76.14 \pm 0.24$\\
         \textsc{AMSGrad} & $63.16 \pm 0.47$ &  $66.59 \pm 1.42$ &  $75.92 \pm 0.26$& $ 76.32 \pm 0.11$\\
         \textsc{PAdam}  & $56.28 \pm 0.87$ & $58.71 \pm 1.66$ & $77.18 \pm 0.21$& $ 77.51\pm 0.19$\\
         \textsc{PAMSGrad}  & $54.34 \pm 2.21$ &  $58.81 \pm 1.95$ & $77.41 \pm 0.17$& $77.67 \pm 0.14$ \\
        \textsc{AdaBound}  & $61.13 \pm 0.84$ &  $64.30 \pm 1.84$ &  $77.18 \pm 0.38$&  $77.50\pm 0.29$ \\
        \textsc{AmsBound}  & $61.05 \pm1.59$ &  $62.04 \pm 2.10$ & $77.08\pm 0.19$&  $77.34 \pm 0.13$ \\
        \hline
        \textsc{Sadam}  & ${59.00} \pm 1.09$ &  $62.75 \pm 1.03$& $ 77.26 \pm 0.30$ & $ {77.61} \pm 0.19$ \\
        \textsc{SAMSgrad}  & $59.63 \pm 1.27$ & $63.44 \pm 1.84$ & $77.31 \pm 0.40$& $ \textbf{77.70} \pm 0.31$ \\
        \hline
    \end{tabular}
\end{table}

\section{Theoretical Analysis Details}
We analyze the convergence rate of \textsc{Adam} and \textsc{Sadam} under different cases, and derive competitive results of our methods. The following table gives an overview of stochastic gradient methods convergence rate under various conditions, in our work we provide a different way of proof compared with previous works and also associate the analysis with hyperparameters of \textsc{Adam} methods.

\subsection{Prepared Lemmas}
We have a series of prepared lemmas to help with optimization convergence rate analysis, and some of them maybe also used in generalization error bound analysis.

\begin{lemma}\label{lemma:elementwise}
For any vectors $a, b, c \in \mathbb{R}^d$, $<a, b\odot c> = <a \odot b, c> = <a\odot \sqrt{b}, c\odot \sqrt{b}>$, here $\odot$ is element-wise product,$\sqrt{b}$ is element-wise square root.
\end{lemma}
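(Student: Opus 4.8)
The plan is to show that all three expressions equal the single scalar $\sum_{j=1}^{d} a_j b_j c_j$ by unwinding the definitions coordinatewise, after which the identity follows from commutativity and associativity of multiplication in $\mathbb{R}$. Writing $a=(a_1,\dots,a_d)$, $b=(b_1,\dots,b_d)$, $c=(c_1,\dots,c_d)$, the element-wise product gives $(b\odot c)_j = b_j c_j$, so by the definition of the inner product $\langle a, b\odot c\rangle = \sum_{j=1}^{d} a_j (b_j c_j)$. Likewise $(a\odot b)_j = a_j b_j$, hence $\langle a\odot b, c\rangle = \sum_{j=1}^{d} (a_j b_j) c_j$. These two sums agree term by term, which settles the first equality.

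For the third expression I would first note that $\sqrt{b}$ is taken entrywise and presumes $b_j \ge 0$ for every $j$ — a condition that holds in every later invocation of the lemma, since $b$ is always instantiated with a nonnegative quantity such as $v_t$ or a factor of the A-LR. Then $(a\odot\sqrt{b})_j = a_j\sqrt{b_j}$ and $(c\odot\sqrt{b})_j = c_j\sqrt{b_j}$, so $\langle a\odot\sqrt{b}, c\odot\sqrt{b}\rangle = \sum_{j=1}^{d} \big(a_j\sqrt{b_j}\big)\big(c_j\sqrt{b_j}\big) = \sum_{j=1}^{d} a_j b_j c_j$, using $\sqrt{b_j}\cdot\sqrt{b_j} = b_j$. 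Since this matches the common value of the first two sums, all three inner products coincide.

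There is essentially no technical obstacle: the statement is a routine bookkeeping identity. The only point meriting a word of care is the implicit nonnegativity of $b$ needed for $\sqrt{b}$ to be real-valued, which I would flag once and then use silently. The purpose of recording this lemma is operational — it lets one shuffle the adaptive-learning-rate factor across an inner product (for instance rewriting a term like $\langle \nabla f(x_t), \tfrac{1}{\sqrt{v_t}}\odot m_t\rangle$ as $\langle \nabla f(x_t)\odot v_t^{-1/4},\, m_t\odot v_t^{-1/4}\rangle$) inside the descent-lemma estimates of the subsequent convergence proofs.
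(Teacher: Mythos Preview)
Your proof is correct and follows essentially the same coordinatewise computation as the paper's own proof: both reduce each of the three inner products to the common sum $\sum_{j=1}^d a_j b_j c_j$. Your additional remark flagging the implicit nonnegativity assumption on $b$ (needed for $\sqrt{b}$ to be real) is a valid clarification that the paper's proof omits.
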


\begin{proof}
\begin{align*}
   <a, b\odot c>  &= <\begin{pmatrix}
a_1\\
\vdots\\
a_d\\
\end{pmatrix},\begin{pmatrix}
b_1 c_1\\
\vdots\\
b_d c_d\\
\end{pmatrix}>
=a_1b_1c_1+ \dots + a_d b_d c_d\\
 <a\odot b,  c>  &= <\begin{pmatrix}
a_1b_1\\
\vdots\\
a_d b_d\\
\end{pmatrix},\begin{pmatrix}
c_1\\
\vdots\\
c_d\\
\end{pmatrix}>
=a_1b_1c_1+ \dots + a_d b_d c_d\\
 <a\odot \sqrt{b},  c\odot \sqrt{b}>  &= <\begin{pmatrix}
a_1 \sqrt{b_1}\\
\vdots\\
a_d \sqrt{b_d}\\
\end{pmatrix},\begin{pmatrix}
\sqrt{b_1} c_1\\
\vdots\\
\sqrt{b_d} c_d\\
\end{pmatrix}>
=a_1b_1c_1+ \dots + a_d b_d c_d
\end{align*}
\end{proof}

\begin{lemma}\label{infi_norm}
	For any vector a, we have
	\begin{equation}
	\|a^2\|_{\infty} \leq \|a\|^2.
	\end{equation}
\end{lemma}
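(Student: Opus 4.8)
The statement to prove is Lemma \ref{infi_norm}: for any vector $a$, $\|a^2\|_\infty \leq \|a\|^2$, where $a^2$ denotes element-wise square.

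\textbf{Proof plan.} The plan is to unfold both sides in coordinates and observe that the left-hand side is a single term appearing among the (nonnegative) terms summed on the right-hand side. First I would write $\|a^2\|_\infty = \max_{j \in [d]} |a_j^2| = \max_{j \in [d]} a_j^2$, using that each $a_j^2 \geq 0$. Let $k \in [d]$ be an index attaining this maximum, so $\|a^2\|_\infty = a_k^2$. Next I would write $\|a\|^2 = \sum_{j=1}^d a_j^2$. Since every summand $a_j^2$ is nonnegative, dropping all terms except $j = k$ only decreases the sum, giving $\sum_{j=1}^d a_j^2 \geq a_k^2 = \|a^2\|_\infty$. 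This chain of (in)equalities is the whole argument.

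There is essentially no obstacle here — the result is an elementary consequence of the definitions of the $\ell_\infty$ and $\ell_2$ norms together with nonnegativity of squares. The only thing to be careful about is the notational convention (confirmed in the paper's Notation paragraph) that $a^2$ means the element-wise square $(a_1^2,\dots,a_d^2)$ and $\|\cdot\|$ without subscript is the $\ell_2$-norm, so that the inequality is genuinely $\max_j a_j^2 \leq \sum_j a_j^2$ rather than anything requiring a bound with a dimension factor. No smoothness, convexity, or algorithm-specific assumptions are needed; the statement holds for an arbitrary vector in $\mathbb{R}^d$.
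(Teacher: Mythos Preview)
Your proof is correct. The paper does not supply a proof for this lemma at all (it is stated and immediately followed by the next lemma), so your coordinate-wise argument --- $\|a^2\|_\infty = \max_j a_j^2 \leq \sum_j a_j^2 = \|a\|^2$ --- is exactly the elementary verification the authors presumably had in mind and left implicit.
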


\begin{lemma}\label{g_t}
	For unbiased stochastic gradient, we have
	\begin{equation}
	E[\|g_t\|^2]\leq \sigma^2+G^2.
	\end{equation}
\end{lemma}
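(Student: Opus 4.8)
The plan is to apply the standard bias--variance decomposition of the second moment, using unbiasedness to kill the cross term and then invoking the two relevant parts of Assumption~\ref{assumption}. Concretely, I would write $g_t = (g_t - \nabla f(x_t)) + \nabla f(x_t)$ and expand the squared norm, so that
\begin{equation*}
    E[\|g_t\|^2] = E[\|g_t - \nabla f(x_t)\|^2] + 2\,E[\langle g_t - \nabla f(x_t),\, \nabla f(x_t)\rangle] + \|\nabla f(x_t)\|^2 .
\end{equation*}
Here $\nabla f(x_t)$ is deterministic given $x_t$, so it can be pulled out of the expectation in the last two terms.

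The second step is to observe that the cross term vanishes: since $E[g_t] = \nabla f(x_t)$ by part~III of Assumption~\ref{assumption}, we get $E[\langle g_t - \nabla f(x_t), \nabla f(x_t)\rangle] = \langle E[g_t] - \nabla f(x_t), \nabla f(x_t)\rangle = 0$. The third step is to bound the remaining two terms: the variance term is at most $\sigma^2$ directly from part~III, and for the mean term I would note that $\nabla f(x_t) = \frac{1}{n}\sum_{i=1}^n \nabla f_i(x_t)$, so by the triangle inequality and part~II, $\|\nabla f(x_t)\| \le \frac{1}{n}\sum_{i=1}^n \|\nabla f_i(x_t)\| \le G$, hence $\|\nabla f(x_t)\|^2 \le G^2$. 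Combining yields $E[\|g_t\|^2] \le \sigma^2 + G^2$.

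There is no genuine obstacle here; the only point requiring a moment's care is the conditioning, i.e., making sure the expectation is understood with respect to the stochasticity of $g_t$ (at iteration $t$, conditioned on $x_t$) so that $\nabla f(x_t)$ behaves as a constant and the unbiasedness identity applies cleanly. If one prefers to avoid conditioning language, the same computation goes through verbatim by taking expectations over all randomness and using the tower property, since $\|\nabla f(x_t)\|^2 \le G^2$ holds pointwise.
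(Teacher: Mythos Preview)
Your proof is correct and follows exactly the same bias--variance decomposition as the paper: write $g_t = (g_t - \nabla f(x_t)) + \nabla f(x_t)$, use unbiasedness to drop the cross term, and bound the two remaining pieces by $\sigma^2$ and $G^2$ respectively. Your version is simply more explicit about the cross term vanishing and about deducing $\|\nabla f(x_t)\|\le G$ from the per-$f_i$ bound, whereas the paper states these steps without justification.
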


\begin{proof}
	From gradient bounded assumption and variance bounded assumption,
	\begin{align*}
	E[\|g_t\|^2]& =E[\|g_t - \nabla f(x_t) +\nabla f(x_t)\|^2]\\
	&= E[\|g_t - \nabla f(x_t) \|^2] +\|\nabla f(x_t)\|^2\\
	&\leq \sigma^2+G^2.
	\end{align*}
\end{proof}

\begin{lemma}\label{m_t}
All momentum-based optimizers using first momentum $m_t=\beta_1 m_{t-1}+(1-\beta_1)g_t$ will satisfy 
\begin{equation}
    E[\|m_t\|^2]\leq \sigma^2+G^2.
\end{equation}
\end{lemma}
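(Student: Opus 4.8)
The plan is to prove $E[\|m_t\|^2] \le \sigma^2 + G^2$ by induction on $t$, exploiting the convexity of the squared norm together with the recursion $m_t = \beta_1 m_{t-1} + (1-\beta_1) g_t$ and Lemma~\ref{g_t}. First I would set the base case: with $m_0 = 0$ we get $m_1 = (1-\beta_1)g_1$, so $E[\|m_1\|^2] = (1-\beta_1)^2 E[\|g_1\|^2] \le (1-\beta_1)^2(\sigma^2+G^2) \le \sigma^2+G^2$ since $\beta_1 \in [0,1)$. For the inductive step, assume $E[\|m_{t-1}\|^2] \le \sigma^2+G^2$.

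The key step is to view $m_t$ as a convex combination of $m_{t-1}$ and $g_t$ with weights $\beta_1$ and $1-\beta_1$, and apply Jensen's inequality (convexity of $x \mapsto \|x\|^2$): $\|m_t\|^2 = \|\beta_1 m_{t-1} + (1-\beta_1) g_t\|^2 \le \beta_1 \|m_{t-1}\|^2 + (1-\beta_1)\|g_t\|^2$. Taking expectations and using the inductive hypothesis together with Lemma~\ref{g_t} gives $E[\|m_t\|^2] \le \beta_1(\sigma^2+G^2) + (1-\beta_1)(\sigma^2+G^2) = \sigma^2+G^2$, which closes the induction. An alternative, non-inductive route is to unroll the recursion to $m_t = (1-\beta_1)\sum_{k=1}^t \beta_1^{t-k} g_k$, observe that the coefficients $(1-\beta_1)\beta_1^{t-k}$ sum to $1-\beta_1^t \le 1$, normalize them into a genuine convex combination, and apply Jensen once; I would probably present the inductive version since it is cleanest and mirrors the structure the paper already uses elsewhere.

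I do not anticipate a genuine obstacle here — the statement is a routine consequence of convexity and the per-step second-moment bound. The only minor point to be careful about is the handling of expectations: the inequality $\|m_t\|^2 \le \beta_1\|m_{t-1}\|^2 + (1-\beta_1)\|g_t\|^2$ holds pointwise (deterministically) before any expectation is taken, so one can take $E[\cdot]$ of both sides directly without worrying about conditioning or the correlation between $m_{t-1}$ and $g_t$; this is what makes the argument go through so smoothly. One should also note explicitly that the claim is stated for all momentum-based optimizers using this first-moment update, so the proof should not invoke anything specific to \textsc{Adam} or \textsc{Sadam} beyond Assumption~\ref{assumption} and the shared recursion, which the above argument respects.
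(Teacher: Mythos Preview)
Your inductive argument is correct. The paper, however, takes precisely the ``alternative, non-inductive route'' you sketch at the end: it unrolls the recursion to $m_t = (1-\beta_1)\sum_{i=1}^t \beta_1^{t-i} g_i$, introduces the normalizer $\Gamma_t = \sum_{i=1}^t \beta_1^{t-i} = (1-\beta_1^t)/(1-\beta_1)$, applies Jensen's inequality once to the full sum, and then uses Lemma~\ref{g_t} on each $E[\|g_i\|^2]$. Your inductive version is cleaner in that it sidesteps the bookkeeping of normalizing the weights and tracking powers of $\beta_1$; it also makes transparent that the bound is maintained step by step, which is pedagogically nice. The paper's unrolled version has the minor advantage of exposing the explicit representation $m_t = (1-\beta_1)\sum_{i=1}^t \beta_1^{t-i} g_i$, which can be reused elsewhere, but for the purpose of this lemma alone your approach is at least as good and arguably more elementary.
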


\begin{proof}
From the updating rule of first momentum estimator, we can derive 
\begin{equation}
    m_t = \Sigma_{i=1}^t(1-\beta_1)\beta_1^{t-i}g_i.
\end{equation}

Let $\Gamma_t=\Sigma_{i=1}^t\beta_1^{t-i}= \frac{1-\beta_1^t}{1-\beta_1}$, by Jensen inequality and Lemma \ref{g_t} ,
\begin{align*}
    E[\|m_t\|^2] &= E[\|\Sigma_{i=1}^t(1-\beta_1)\beta_1^{t-i}g_i\|^2]
    = \Gamma_t^2 E[\|\Sigma_{i=1}^t\frac{(1-\beta_1)\beta_1^{t-i}}{\Gamma_t}g_i\|^2] \\
    &\leq \Gamma_t^2 \Sigma_{i=1}^t(1-\beta_1)^2\frac{\beta_1^{t-i}}{\Gamma_t}E[\|g_i\|^2] \leq \Gamma_t(1-\beta_1)^2\Sigma_{i=1}^t\beta_1^{t-i} ( \sigma^2+G^2)\\
    &\leq  \sigma^2+G^2.
\end{align*}
\end{proof}

\begin{lemma}\label{v_t}
Each coordinate of vector $v_t=\beta_2 v_{t-1}+(1-\beta_2)g_t^2$ will satisfy 
\begin{align*}
    E[v_{t,j}]\leq  \sigma^2+G^2,
\end{align*}
where $j \in [1,d]$ is the coordinate index.
\end{lemma}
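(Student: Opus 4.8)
The plan is to mimic the proof of Lemma~\ref{m_t} almost verbatim, unrolling the exponential moving average for $v_t$ into an explicit finite sum and then applying Jensen's inequality coordinate-wise. First I would write out the recursion $v_t = \beta_2 v_{t-1} + (1-\beta_2)g_t^2$ with the initialization $v_0 = 0$, which telescopes to the closed form $v_t = \sum_{i=1}^t (1-\beta_2)\beta_2^{t-i} g_i^2$. Reading off the $j$-th coordinate gives $v_{t,j} = \sum_{i=1}^t (1-\beta_2)\beta_2^{t-i} g_{i,j}^2$.

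Next I would take expectations. Since each $g_{i,j}^2 \le \|g_i\|^2$ coordinate-wise (this is essentially Lemma~\ref{infi_norm}, $\|a^2\|_\infty \le \|a\|^2$), we get $E[v_{t,j}] \le \sum_{i=1}^t (1-\beta_2)\beta_2^{t-i} E[\|g_i\|^2]$. By Lemma~\ref{g_t}, $E[\|g_i\|^2] \le \sigma^2 + G^2$, so $E[v_{t,j}] \le (\sigma^2+G^2)(1-\beta_2)\sum_{i=1}^t \beta_2^{t-i}$. Finally, the geometric sum $\sum_{i=1}^t \beta_2^{t-i} = \frac{1-\beta_2^t}{1-\beta_2} \le \frac{1}{1-\beta_2}$ (using $\beta_2 \in [0,1)$), so the $(1-\beta_2)$ factors cancel and the bound collapses to $E[v_{t,j}] \le \sigma^2 + G^2$, as claimed.

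This lemma is genuinely routine — there is no real obstacle. The only point requiring a sliver of care is the coordinate-wise bound $g_{i,j}^2 \le \|g_i\|^2$ used to pass from the per-coordinate statement to the norm bound of Lemma~\ref{g_t}; strictly one could instead keep everything coordinate-wise and invoke a coordinate-wise gradient bound, but using $\|a^2\|_\infty \le \|a\|^2$ keeps the argument clean and self-contained with the assumptions already in place. I would also note in passing that the identical computation with $\tilde v_t$ and then $v_t = \max\{v_{t-1},\tilde v_t\}$ does \emph{not} immediately yield the same bound for the \textsc{SAMSGrad}/\textsc{AMSGrad} running-maximum version, since the maximum over iterations need not preserve the per-iteration expectation bound; that case (if needed later) requires the cruder deterministic bound $v_{t,j} \le G^2$ coming from the uniform gradient bound in Assumption~\ref{assumption}.
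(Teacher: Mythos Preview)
Your proposal is correct and essentially identical to the paper's own proof: both unroll the recursion to $v_{t,j} = \sum_{i=1}^t (1-\beta_2)\beta_2^{t-i} g_{i,j}^2$, bound $E[g_{i,j}^2]$ via Lemma~\ref{g_t}, and sum the geometric series $(1-\beta_2)\sum_{i=1}^t \beta_2^{t-i} = 1-\beta_2^t \le 1$. The one minor point is that you announce Jensen's inequality in your plan but (correctly) never use it---linearity of expectation and the trivial bound $g_{i,j}^2 \le \|g_i\|^2$ suffice, exactly as in the paper.
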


\begin{proof}
	From the updating rule of second momentum estimator, we can derive 
	\begin{equation}
	v_{t,j} = \Sigma_{i=1}^t(1-\beta_2)\beta_2^{t-i}g_{i,j}^2 \geq 0.
	\end{equation}
	Since the decay parameter $\beta_2 \in [0,1)$, $\Sigma_{i=1}^t(1-\beta_2)\beta_2^{t-i}=1-\beta_2^t \leq 1$. From Lemma \ref{g_t},
	\begin{align*}
	E[v_{t,j}]=E[\Sigma_{i=1}^t(1-\beta_2)\beta_2^{t-i}g_{i,j}^2]
	\leq \Sigma_{i=1}^t(1-\beta_2)\beta_2^{t-i}( \sigma^2+G^2)
	\leq  \sigma^2+G^2.
	\end{align*}
\end{proof}


And we can derive the following important lemma:

\begin{lemma}\label{bound1}\textbf{[Bounded A-LR]}
For any $t\geq 1$, $j \in [1, d]$, $\beta_2\in [0,1]$, and fixed $\epsilon$ in \textsc{Adam} and $\beta$ defined in softplus function in \textsc{Sadam},  the following bounds always hold:

\textsc{Adam} has $(\mu_1,\mu_2)-$ bounded A-LR:
\begin{equation} \label{eq:sqrtv}
    \mu_1 \leq \frac{1}{\sqrt{v_{t,j}}+\epsilon} \leq \mu_2;
\end{equation}

\textsc{Sadam} has $(\mu_3,\mu_4)-$ bounded A-LR:
\begin{equation} \label{eq:softv}
    \mu_3 \leq \frac{1}{softplus(\sqrt{v_{t,j}} )}\leq \mu_4;
\end{equation}
where $ 0 <\mu_1\leq \mu_2$, $ 0 <\mu_3\leq \mu_4$. 
For brevity, we use $\mu_l, \mu_u$ denoting the lower bound and upper bound respectively, and both \textsc{Adam} and \textsc{Sadam} will be analysis with the help of $(\mu_l, \mu_u)$.
\end{lemma}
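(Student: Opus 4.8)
The plan is to establish both the upper and lower bounds coordinate-wise by exploiting the monotonicity of the functions involved together with the fact that each $v_{t,j}$ is sandwiched between $0$ and a deterministic constant (after taking expectations, via Lemma \ref{v_t}). First I would observe that $v_{t,j} = \sum_{i=1}^t (1-\beta_2)\beta_2^{t-i} g_{i,j}^2 \ge 0$, and moreover $v_{t,j} \le \max_{i\le t} g_{i,j}^2 \le G^2$ by the gradient-bounded part of Assumption \ref{assumption} (using $\sum_{i=1}^t(1-\beta_2)\beta_2^{t-i} = 1-\beta_2^t \le 1$). This gives the deterministic sandwich $0 \le \sqrt{v_{t,j}} \le G$, which is what drives everything; note that we do not even need the expectation here since the gradient bound is almost sure.

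For \textsc{Adam}, the map $s \mapsto \frac{1}{\sqrt{s}+\epsilon}$ is positive and decreasing on $[0,\infty)$, so plugging in the endpoints $v_{t,j}\in[0,G^2]$ yields $\frac{1}{G+\epsilon} \le \frac{1}{\sqrt{v_{t,j}}+\epsilon} \le \frac{1}{\epsilon}$; hence we may take $\mu_1 = \frac{1}{G+\epsilon}$ and $\mu_2 = \frac{1}{\epsilon}$, both strictly positive and both depending on $\epsilon$ as claimed in the remark. For \textsc{Sadam}, I would use that $softplus(x) = \frac{1}{\beta}\log(1+e^{\beta x})$ is positive, strictly increasing, and satisfies $softplus(x) \ge \frac{1}{\beta}\log 2$ for all $x \ge 0$ (since $1+e^{\beta x} \ge 2$ when $x\ge 0$) and $softplus(x) \le \frac{1}{\beta}\log(2 e^{\beta x}) = x + \frac{\log 2}{\beta}$. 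Applying this with $x = \sqrt{v_{t,j}} \in [0,G]$ gives $\frac{1}{\beta}\log 2 \le softplus(\sqrt{v_{t,j}}) \le G + \frac{\log 2}{\beta}$, so that $\mu_3 = \frac{1}{G + (\log 2)/\beta}$ and $\mu_4 = \frac{\beta}{\log 2}$, again strictly positive and controlled by $\beta$. The final sentence of the lemma just names $\mu_l := \mu_1$ (or $\mu_3$) and $\mu_u := \mu_2$ (or $\mu_4$) so that later proofs can be stated uniformly.

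I do not expect a serious obstacle here — the statement is essentially a bookkeeping lemma — but the one point that needs care is which bound on $v_{t,j}$ to use: if one wants the bound to hold almost surely (as it should, since $\mu_l,\mu_u$ appear multiplying iterate quantities inside expectations later), one must use the almost-sure bound $\|g_{i,j}\| \le \|\nabla f_i(x_i)\| \le G$ rather than the in-expectation bound $E[\|g_t\|^2]\le \sigma^2+G^2$ from Lemma \ref{g_t}. A secondary subtlety is the \textsc{AMSGrad}/\textsc{SAMSGrad} case flagged in the second remark: there $v_{t,j} = \max_{s\le t}\tilde v_{s,j}$, but since each $\tilde v_{s,j}\in[0,G^2]$ the maximum still lies in $[0,G^2]$, so the identical bounds apply and one may legitimately use a common (relaxed) $\mu_1$ across \textsc{Adam}, \textsc{Adagrad}, and their variants. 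With these observations in place the proof is just substitution of endpoints into monotone functions.
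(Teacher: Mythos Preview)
Your proposal is correct and follows the same approach as the paper --- bound $v_{t,j}$ above and below, then invoke monotonicity of $s\mapsto 1/(\sqrt{s}+\epsilon)$ and of $softplus$. The only substantive difference is in the constants: the paper sets $\mu_1=\frac{1}{\sqrt{\sigma^2+G^2}+\epsilon}$ and $\mu_3=\big(\tfrac{1}{\beta}\log(1+e^{\beta\sqrt{\sigma^2+G^2}})\big)^{-1}$, relying on Lemma~\ref{v_t}'s bound $E[v_{t,j}]\le\sigma^2+G^2$, whereas you use the almost-sure bound $v_{t,j}\le G^2$ directly from the gradient-bounded assumption and obtain $\mu_1=\frac{1}{G+\epsilon}$ and $\mu_3=\frac{1}{G+(\log 2)/\beta}$. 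Your constants are both tighter and more rigorously justified: as you correctly flag, the bounds $\mu_l,\mu_u$ are used pointwise inside expectations in the downstream lemmas (e.g.\ Lemmas~\ref{square}, \ref{product}), so they must hold almost surely, and an in-expectation bound on $v_{t,j}$ does not by itself deliver that. Your handling of the \textsc{AMSGrad}/\textsc{SAMSGrad} variant via the observation that $\max_{s\le t}\tilde v_{s,j}\in[0,G^2]$ is also correct and matches the spirit of the paper's second remark.
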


\begin{proof}
For \textsc{Adam}, let $\mu_1 =\frac{1}{\sqrt{\sigma^2+G^2}+\epsilon}$, $\mu_2 = \frac{1}{\epsilon}$, then we can get the result in (\ref{eq:sqrtv}).

For \textsc{Sadam}, notice that $softplus(\cdot)$ is a monotone increasing function, and $\sqrt{v_{t,j}}$ is both upper-bounded and lower-bounded, then we have (\ref{eq:softv}), where $\mu_3 =\frac{1}{ \frac{1}{\beta}\log(1+e^{\beta\cdot \sqrt{\sigma^2+G^2}})}$, $\mu_4 =\frac{1}{ \frac{1}{\beta}\log(1+e^{\beta\cdot 0}) }=\frac{\beta}{\log2}$.
\end{proof}

\begin{lemma}\label{z_t}
Define $z_t = x_t + \frac{\beta_1}{1-\beta_1}(x_t - x_{t-1}), \forall t \geq 1$ $\beta_1 \in [0,1)$. Let $\eta_t = \eta$, then the following updating formulas hold:

Gradient-based optimizer 
\begin{equation}
    z_t = x_t,  \quad z_{t+1}=z_t - \eta g_t;
\end{equation}

\textsc{Adam} optimizer 
\begin{equation}
    z_{t+1}=z_t + \frac{\eta \beta_1}{1-\beta_1}
    (\frac{1}{\sqrt{v_{t-1}}+\epsilon}-\frac{1}{\sqrt{v_{t}}+\epsilon})\odot m_{t-1}-\frac{\eta}{\sqrt{v_t}+\epsilon}\odot g_t;
\end{equation}

\textsc{Sadam} optimizer 
\begin{equation}
    z_{t+1}=z_t +\frac{\eta\beta_1}{1-\beta_1}(\frac{1}{softplus(\sqrt{v_{t-1}})}-\frac{1}{softplus(\sqrt{v_{t}})})\odot m_{t-1}-\frac{\eta}{softplus(\sqrt{v_t})}\odot g_t. 
\end{equation}
\end{lemma}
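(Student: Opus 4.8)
The plan is to verify the three recursions by a direct telescoping computation of $z_{t+1}-z_t$, substituting the $x$-update and then unfolding the first-order momentum recursion $m_t=\beta_1 m_{t-1}+(1-\beta_1)g_t$. First I would fix a boundary convention: the algorithms never reference $x_0$, and since $m_0=0$ we may set $x_0:=x_1$, so that $z_1=x_1$ and all three recursions hold at $t=1$ (the ``extra'' $\odot m_{t-1}$ terms then carry the factor $m_0=0$ and vanish). For the gradient-based optimizer one takes $\beta_1=0$, so the definition of $z_t$ collapses to $z_t=x_t$ and the claim reduces to the SGD update $x_{t+1}=x_t-\eta g_t$; the same telescoping below also shows that a momentum-SGD iterate obeys $z_{t+1}=z_t-\eta g_t$, which is exactly why $z_t$ is the natural quantity to track.

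For \textsc{Adam}, I would start from
\[
z_{t+1}-z_t=\frac{1}{1-\beta_1}(x_{t+1}-x_t)-\frac{\beta_1}{1-\beta_1}(x_t-x_{t-1}),
\]
which is immediate from $z_t=x_t+\tfrac{\beta_1}{1-\beta_1}(x_t-x_{t-1})$. Substituting $x_{t+1}-x_t=-\tfrac{\eta}{\sqrt{v_t}+\epsilon}\odot m_t$ and $x_t-x_{t-1}=-\tfrac{\eta}{\sqrt{v_{t-1}}+\epsilon}\odot m_{t-1}$, and then replacing $m_t$ by $\beta_1 m_{t-1}+(1-\beta_1)g_t$ inside the first term, yields three pieces: $-\tfrac{\eta\beta_1}{1-\beta_1}\tfrac{m_{t-1}}{\sqrt{v_t}+\epsilon}$, $\;-\eta\tfrac{g_t}{\sqrt{v_t}+\epsilon}$, and $\;+\tfrac{\eta\beta_1}{1-\beta_1}\tfrac{m_{t-1}}{\sqrt{v_{t-1}}+\epsilon}$. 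The first and third combine into $\tfrac{\eta\beta_1}{1-\beta_1}\bigl(\tfrac{1}{\sqrt{v_{t-1}}+\epsilon}-\tfrac{1}{\sqrt{v_t}+\epsilon}\bigr)\odot m_{t-1}$, and what remains is precisely $-\tfrac{\eta}{\sqrt{v_t}+\epsilon}\odot g_t$; together these give the stated identity.

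The \textsc{Sadam} case is the same computation line for line, with every occurrence of $\sqrt{v_{\cdot}}+\epsilon$ replaced by $softplus(\sqrt{v_{\cdot}})$; the only facts used about the denominator are that it is a fixed, strictly positive vector at each iteration (so the element-wise reciprocal is well-defined --- cf.\ Lemma \ref{bound1}) and that $\odot$ and element-wise division act coordinate by coordinate, which is what lets the two $m_{t-1}$-terms be merged into a single $\odot$ product. There is no genuine obstacle here: the only things requiring care are the bookkeeping of the $\beta_1/(1-\beta_1)$ coefficients and the $t=1$ boundary term. I would therefore present the \textsc{Adam} computation in full and obtain the other two statements as the specializations $\beta_1=0$ and ``$\sqrt{\cdot}+\epsilon\mapsto softplus(\sqrt{\cdot})$,'' respectively.
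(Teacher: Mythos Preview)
Your proposal is correct and follows essentially the same route as the paper: compute $z_{t+1}-z_t=\tfrac{1}{1-\beta_1}(x_{t+1}-x_t)-\tfrac{\beta_1}{1-\beta_1}(x_t-x_{t-1})$, substitute the respective $x$-updates, expand $m_t=\beta_1 m_{t-1}+(1-\beta_1)g_t$, and specialize to the gradient-based and \textsc{Sadam} cases. Your explicit treatment of the boundary convention $x_0:=x_1$ (so that $z_1=x_1$ and the $m_0$-terms vanish) is handled by the paper only later, in the nonconvex theorem's proof, so including it here is fine.
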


\begin{proof}
We consider the \textsc{Adam} optimizer and let $\beta_1= 0$, we can easily derive the gradient-based case.
\begin{align*}
    z_{t+1} &=x_{t+1} + \frac{\beta_1}{1-\beta_1}(x_{t+1} - x_{t})\\
    z_{t+1} &= z_t +\frac{1}{1-\beta_1}(x_{t+1} - x_{t})-\frac{\beta_1}{1-\beta_1}(x_{t} - x_{t-1})\\
    &=z_t -\frac{1}{1-\beta_1}\frac{\eta}{\sqrt{v_t}+\epsilon} \odot m_t +\frac{\beta_1}{1-\beta_1}\frac{\eta}{\sqrt{v_{t-1}}+\epsilon}\odot  m_{t-1}\\
    &=z_t + \frac{\eta \beta_1}{1-\beta_1}
    (\frac{1}{\sqrt{v_{t-1}}+\epsilon}-\frac{1}{\sqrt{v_{t}}+\epsilon})\odot m_{t-1}-\frac{\eta}{\sqrt{v_t}+\epsilon}\odot g_t.
\end{align*}

Similarly, consider the \textsc{Sadam} optimizer:
\begin{align*}
z_{t+1} &= z_t +\frac{1}{1-\beta_1}(x_{t+1} - x_{t})-\frac{\beta_1}{1-\beta_1}(x_{t} - x_{t-1})\\
&=z_t -\frac{1}{1-\beta_1}\frac{\eta}{softplus(\sqrt{v_t})}\odot m_t +\frac{\beta_1}{1-\beta_1}\frac{\eta}{softplus(\sqrt{v_{t-1})}}\odot m_{t-1}\\
&=z_t +\frac{\eta\beta_1}{1-\beta_1}(\frac{1}{softplus(\sqrt{v_{t-1}})}-\frac{1}{softplus(\sqrt{v_{t}})})\odot m_{t-1}-\frac{\eta}{softplus(\sqrt{v_t})}\odot g_t.
\end{align*}
\end{proof}

\begin{lemma}\label{square}
As defined in Lemma \ref{z_t}, with the condition that $v_t\geq v_{t-1}$, i.e., \textsc{AMSGrad} and \textsc{SAMSGrad}, we can derive the bound of distance of $\|z_{t+1}-z_t\|^2$ as follows:

\textsc{Adam} optimizer 
\begin{align}\label{eq:distance}
    E[\|z_{t+1}-z_t\|^2]&\leq \frac{2\eta^2\beta_1^2(\sigma^2+G^2)}{(1-\beta_1)^2}E[\sum_{j=1}^d (\frac{1}{\sqrt{v_{t-1,j}}+\epsilon})^2-(\frac{1}{\sqrt{v_{t,j}}+\epsilon})^2] \nonumber\\
    &+2\eta^2 \mu_2^2 (\sigma^2+G^2)
\end{align}

\textsc{Sadam} optimizer 
\begin{align}\label{eq:distance2}
    E[\|z_{t+1}-z_t\|^2] &\leq \frac{2\eta^2\beta_1^2(\sigma^2+G^2)}{(1-\beta_1)^2}E[\sum_{j=1}^d (\frac{1}{softplus(\sqrt{v_{t-1,j}})})^2-(\frac{1}{softplus(\sqrt{v_{t,j}})})^2] \nonumber\\
   &+2\eta^2 \mu_4^2 (\sigma^2+G^2)
\end{align}

\end{lemma}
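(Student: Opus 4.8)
The plan is to start from the update formula for $z_{t+1} - z_t$ established in Lemma~\ref{z_t}. For \textsc{Adam}, that identity reads
$$ z_{t+1} - z_t = \frac{\eta\beta_1}{1-\beta_1}\Bigl(\tfrac{1}{\sqrt{v_{t-1}}+\epsilon} - \tfrac{1}{\sqrt{v_t}+\epsilon}\Bigr)\odot m_{t-1} - \frac{\eta}{\sqrt{v_t}+\epsilon}\odot g_t. $$
First I would apply the elementary inequality $\|a+b\|^2 \le 2\|a\|^2 + 2\|b\|^2$ to split the squared norm into a ``difference-of-A-LR'' term and a ``current-gradient'' term. For the second term, since every coordinate of $\tfrac{1}{\sqrt{v_{t,j}}+\epsilon}$ is at most $\mu_2$ by Lemma~\ref{bound1}, I can bound $\|\tfrac{\eta}{\sqrt{v_t}+\epsilon}\odot g_t\|^2 \le \eta^2\mu_2^2\|g_t\|^2$, and then take expectation and invoke Lemma~\ref{g_t} to get $2\eta^2\mu_2^2(\sigma^2+G^2)$, which is exactly the last term of \eqref{eq:distance}.

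For the first term, I would pull out $m_{t-1}$ in $\ell_\infty$: write $\bigl\|(\tfrac{1}{\sqrt{v_{t-1}}+\epsilon} - \tfrac{1}{\sqrt{v_t}+\epsilon})\odot m_{t-1}\bigr\|^2 = \sum_j (\tfrac{1}{\sqrt{v_{t-1,j}}+\epsilon} - \tfrac{1}{\sqrt{v_{t,j}}+\epsilon})^2 m_{t-1,j}^2$. Here the monotonicity assumption $v_t \ge v_{t-1}$ (the \textsc{AMSGrad}/\textsc{SAMSGrad} setting, as the lemma states) makes each factor $\tfrac{1}{\sqrt{v_{t-1,j}}+\epsilon} - \tfrac{1}{\sqrt{v_{t,j}}+\epsilon} \ge 0$, so I can bound this sum by $\|m_{t-1}\|_\infty^2 \sum_j \bigl(\tfrac{1}{\sqrt{v_{t-1,j}}+\epsilon} - \tfrac{1}{\sqrt{v_{t,j}}+\epsilon}\bigr)^2$ and then, since the difference of each pair of A-LR coordinates is itself bounded (both lie in $[\mu_1,\mu_2]$), use $a^2 - b^2 \ge (a-b)^2$ when $a \ge b \ge 0$ to replace $(a-b)^2$ by $a^2-b^2$, turning the sum into the telescoping-friendly form $\sum_j\bigl[(\tfrac{1}{\sqrt{v_{t-1,j}}+\epsilon})^2 - (\tfrac{1}{\sqrt{v_{t,j}}+\epsilon})^2\bigr]$. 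Finally bound $\|m_{t-1}\|_\infty^2 \le \|m_{t-1}\|^2$, take expectation, and apply Lemma~\ref{m_t} to get the constant $\sigma^2+G^2$, yielding the first term of \eqref{eq:distance} with the prefactor $\tfrac{2\eta^2\beta_1^2(\sigma^2+G^2)}{(1-\beta_1)^2}$.

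The \textsc{Sadam} case \eqref{eq:distance2} is identical line-for-line after replacing $\sqrt{v_{t,j}}+\epsilon$ by $\mathrm{softplus}(\sqrt{v_{t,j}})$ and $\mu_2$ by $\mu_4$: $\mathrm{softplus}$ is monotone increasing, so $v_t \ge v_{t-1}$ still forces $\tfrac{1}{\mathrm{softplus}(\sqrt{v_{t-1,j}})} \ge \tfrac{1}{\mathrm{softplus}(\sqrt{v_{t,j}})} \ge 0$, and the same $a^2-b^2 \ge (a-b)^2$ trick applies. I expect the only delicate point to be justifying the step $a^2 - b^2 \ge (a-b)^2$ (equivalently $2b(a-b)\ge 0$), which needs exactly the ordering $a\ge b\ge 0$ that the max-based second-moment rule provides — hence why the lemma is stated only for \textsc{AMSGrad}/\textsc{SAMSGrad}. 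A secondary subtlety is the conditional-expectation bookkeeping: $v_{t-1}$ and $m_{t-1}$ are determined before $g_t$, so one should take expectations carefully (tower property) to make the independence used in Lemmas~\ref{g_t} and~\ref{m_t} legitimate, but this is routine.
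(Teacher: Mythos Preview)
Your proposal is correct and matches the paper's proof essentially step for step: the same $\|a+b\|^2\le 2\|a\|^2+2\|b\|^2$ split, the same use of Lemma~\ref{bound1} together with Lemmas~\ref{g_t}/\ref{m_t} for the two pieces, and the same $(a-b)^2\le a^2-b^2$ trick under $v_t\ge v_{t-1}$. The only small caveat is that pulling the constant $\sigma^2+G^2$ outside the expectation in the first term really rests on the \emph{deterministic} coordinate bound $|m_{t-1,j}|\le G$ (which follows from Assumption~\ref{assumption}.2) rather than on the tower property or Lemma~\ref{m_t} alone, since $m_{t-1}$ and $v_{t-1},v_t$ are correlated through the common history; the paper is equally informal at exactly this step.
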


\begin{proof}
\textsc{Adam} case:
\begin{align*}
   E[\|z_{t+1}-z_t\|^2]&= E[\|\frac{\eta \beta_1}{1-\beta_1}
    (\frac{1}{\sqrt{v_{t-1}}+\epsilon}-\frac{1}{\sqrt{v_{t}}+\epsilon})\odot m_{t-1}-\frac{\eta}{\sqrt{v_t}+\epsilon}\odot g_t\|^2]\\
    &\leq 2E[\|\frac{\eta \beta_1}{1-\beta_1}
    (\frac{1}{\sqrt{v_{t-1}}+\epsilon}-\frac{1}{\sqrt{v_{t}}+\epsilon})\odot m_{t-1}\|]^2 +2E[\|\frac{\eta}{\sqrt{v_t}+\epsilon}\odot g_t\|]^2\\
    &\leq \frac{2\eta^2\beta_1^2(\sigma^2+G^2)}{(1-\beta_1)^2}E[\sum_{j=1}^d (\frac{1}{\sqrt{v_{t-1,j}}+\epsilon}-\frac{1}{\sqrt{v_{t,j}}+\epsilon})^2]
    + 2\eta^2 \mu_2^2 (\sigma^2+G^2) \\
    &\leq \frac{2\eta^2\beta_1^2(\sigma^2+G^2)}{(1-\beta_1)^2}E[\sum_{j=1}^d (\frac{1}{\sqrt{v_{t-1,j}}+\epsilon})^2-(\frac{1}{\sqrt{v_{t,j}}+\epsilon})^2]
    +2\eta^2 \mu_2^2 (\sigma^2+G^2)\\
\end{align*}
The first inequality holds because $\|a-b\|^2\leq 2\|a\|^2 +2 \|b\|^2$, the second inequality holds because Lemma \ref{g_t} and \ref{m_t} and  Lemma \ref{bound1}, the third inequality holds because $(a-b)^2 \leq a^2- b^2$ when $a\geq b$, and in our assumption, we have $v_t \geq v_{t-1}$ holds.

\textsc{Sadam} case:
\begin{align*}
   E[\|z_{t+1}-z_t\|^2]&= E[\|\frac{\eta \beta_1}{1-\beta_1}
    (\frac{1}{softplus(\sqrt{v_{t-1}})}-\frac{1}{softplus(\sqrt{v_{t}})})\odot m_{t-1}-\frac{\eta}{softplus(\sqrt{v_t})} \odot g_t\|^2]\\
    &\leq 2E[\|\frac{\eta \beta_1}{1-\beta_1}
    (\frac{1}{softplus(\sqrt{v_{t-1}})}-\frac{1}{softplus(\sqrt{v_{t}})})\odot m_{t-1}\|]^2\\ &+2E[\|\frac{\eta}{softplus(\sqrt{v_t})}\odot g_t\|]^2\\
    &\leq \frac{2\eta^2\beta_1^2(\sigma^2+G^2)}{(1-\beta_1)^2}E[\sum_{j=1}^d (\frac{1}{softplus(\sqrt{v_{t-1,j}})}- \frac{1}{softplus(\sqrt{v_{t,j}})})^2]\\
    &+2\eta^2 \mu_4^2(\sigma^2+G^2)\\
    &\leq  \frac{2\eta^2\beta_1^2(\sigma^2+G^2)}{(1-\beta_1)^2}E[\sum_{j=1}^d (\frac{1}{softplus(\sqrt{v_{t-1,j}})})^2-(\frac{1}{softplus(\sqrt{v_{t,j}})})^2]\\
    &+2\eta^2 \mu_4^2(\sigma^2+G^2)\\
\end{align*}

Because the {\em softplus} function is monotone increasing function,
therefore, the third inequality holds as well. 
\end{proof}

\begin{lemma}\label{product}
As defined in Lemma \ref{z_t}, with the condition that $v_t\geq v_{t-1}$, we can derive the bound of the inner product as follows:

\textsc{Adam} optimizer 
\begin{equation} \label{equ:product3}
    -E[\langle \nabla f(z_t)-\nabla f(x_t), \frac{\eta}{\sqrt{v_t}+\epsilon}\odot g_t\rangle] \leq \frac{1}{2}L^2\eta^2\mu_2^2 (\frac{\beta_1}{1-\beta_1})^2 (\sigma^2+G^2) +  \frac{1}{2}\eta^2\mu_2^2( \sigma^2 + G^2);
\end{equation}

\textsc{Sadam} optimizer
\begin{equation} \label{equ:product4}
    	-E[\langle\nabla f(z_t)-\nabla f(x_t), \frac{\eta}{softplus(\sqrt{v_t})}\odot g_t\rangle]\leq \frac{1}{2}L^2\eta^2\mu_4^2 (\frac{\beta_1}{1-\beta_1})^2 (\sigma^2+G^2) +  \frac{1}{2}\eta^2\mu_4^2( \sigma^2 + G^2).
\end{equation}
\end{lemma}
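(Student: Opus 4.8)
\textbf{Proof proposal for Lemma \ref{product}.}

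The plan is to bound the inner product by splitting it via Young's inequality (or Cauchy--Schwarz followed by $2ab \le a^2 + b^2$) and then control each resulting norm separately using the ingredients already established. First I would write
$$-\langle \nabla f(z_t)-\nabla f(x_t),\ \tfrac{\eta}{\sqrt{v_t}+\epsilon}\odot g_t\rangle \le \tfrac12\|\nabla f(z_t)-\nabla f(x_t)\|^2 + \tfrac12\big\|\tfrac{\eta}{\sqrt{v_t}+\epsilon}\odot g_t\big\|^2,$$
so it suffices to bound the expectation of each term. For the second term, the bounded A-LR lemma (Lemma \ref{bound1}) gives each coordinate of $\tfrac{1}{\sqrt{v_t}+\epsilon}$ is at most $\mu_2$, hence $\big\|\tfrac{\eta}{\sqrt{v_t}+\epsilon}\odot g_t\big\|^2 \le \eta^2\mu_2^2\|g_t\|^2$, and then Lemma \ref{g_t} yields $E[\|g_t\|^2]\le \sigma^2+G^2$, producing the term $\tfrac12\eta^2\mu_2^2(\sigma^2+G^2)$.

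For the first term I would use $L$-smoothness (Assumption \ref{assumption} I) to get $\|\nabla f(z_t)-\nabla f(x_t)\|^2 \le L^2\|z_t-x_t\|^2$, and then use the definition $z_t = x_t + \tfrac{\beta_1}{1-\beta_1}(x_t-x_{t-1})$ from Lemma \ref{z_t} so that $\|z_t-x_t\| = \tfrac{\beta_1}{1-\beta_1}\|x_t-x_{t-1}\|$. Since $x_t-x_{t-1} = -\tfrac{\eta}{\sqrt{v_{t-1}}+\epsilon}\odot m_{t-1}$ in \textsc{Adam}, bounding the A-LR coordinates by $\mu_2$ again gives $\|x_t-x_{t-1}\|^2 \le \eta^2\mu_2^2\|m_{t-1}\|^2$, and Lemma \ref{m_t} gives $E[\|m_{t-1}\|^2]\le \sigma^2+G^2$. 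Combining, the first term is bounded in expectation by $\tfrac12 L^2\eta^2\mu_2^2\big(\tfrac{\beta_1}{1-\beta_1}\big)^2(\sigma^2+G^2)$, which matches the claimed bound \eqref{equ:product3}. The \textsc{Sadam} case \eqref{equ:product4} is identical, replacing $\tfrac{1}{\sqrt{v_{t,j}}+\epsilon}$ by $\tfrac{1}{softplus(\sqrt{v_{t,j}})}$ and $\mu_2$ by $\mu_4$, using the same monotonicity and bounded-A-LR facts.

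I do not expect a genuine obstacle here; the only mild subtlety is handling the expectation correctly — since $v_t$ depends on $g_t$, one cannot simply pull $E[\|g_t\|^2]$ out as a conditional expectation against the A-LR factor. The clean way around this is to bound the A-LR factor deterministically by its upper bound $\mu_2$ (resp.\ $\mu_4$) \emph{before} taking any expectation, so that only $\|g_t\|^2$ and $\|m_{t-1}\|^2$ remain under the expectation and Lemmas \ref{g_t} and \ref{m_t} apply directly. A second minor point is that the left-hand side is a signed inner product, so I should make sure the sign works out: Young's inequality $-\langle a,b\rangle \le \tfrac12\|a\|^2 + \tfrac12\|b\|^2$ holds regardless of sign, so this is immediate. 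Everything else is routine substitution.
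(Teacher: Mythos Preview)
Your proposal is correct and follows essentially the same approach as the paper's own proof: apply Young's inequality $-\langle a,b\rangle \le \tfrac12\|a\|^2+\tfrac12\|b\|^2$, use $L$-smoothness and the definition of $z_t$ to reduce the first term to $\tfrac{L^2}{2}(\tfrac{\beta_1}{1-\beta_1})^2\|\tfrac{\eta}{\sqrt{v_{t-1}}+\epsilon}\odot m_{t-1}\|^2$, and then bound both terms via the deterministic A-LR upper bound $\mu_2$ together with Lemmas~\ref{g_t} and~\ref{m_t}. Your remark about bounding the A-LR deterministically before taking expectations (to sidestep the dependence of $v_t$ on $g_t$) is exactly how the paper proceeds as well, and the condition $v_t\ge v_{t-1}$ is in fact not used in this particular lemma's proof.
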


\begin{proof}
Since the stochastic gradient is unbiased, then we have $E[g_t]=\nabla f(x_t)$.

\textsc{Adam} case:

\begin{align*}
    -E[\langle\nabla f(z_t)&-\nabla f(x_t), \frac{\eta}{\sqrt{v_t}+\epsilon}\odot g_t\rangle]\\
    &\leq \frac{1}{2}  E[\|\nabla f(z_t)-\nabla f(x_t)\|^2]+ \frac{1}{2}E[\|\frac{\eta}{\sqrt{v_t}+\epsilon}\odot g_t\|^2]\\
    &\leq \frac{L^2}{2}E[\|z_t- x_t\|^2]+
    \frac{1}{2}E[\|\frac{\eta}{\sqrt{v_t}+\epsilon}\odot g_t\|^2]\\
    &=\frac{L^2}{2}(\frac{\beta_1}{1-\beta_1})^2E[\|x_t- x_{t-1}\|^2]+ 
   \frac{1}{2}E[\|\frac{\eta}{\sqrt{v_t}+\epsilon}\odot g_t\|^2]\\
    &=\frac{L^2 }{2}(\frac{\beta_1}{1-\beta_1})^2E[\|\frac{\eta}{\sqrt{v_{t-1}}+\epsilon}\odot m_{t-1}\|^2]+ \frac{1}{2}E[\|\frac{\eta}{\sqrt{v_t}+\epsilon}\odot g_t\|^2]\\
   &\leq \frac{1}{2}L^2\eta^2\mu_2^2 (\frac{\beta_1}{1-\beta_1})^2 (\sigma^2+G^2) +  \frac{1}{2}\eta^2\mu_2^2( \sigma^2 + G^2)
\end{align*}

The first inequality holds because $\frac{1}{2}a^2+\frac{1}{2}b^2\geq -<a,b>$, the second inequality holds for L-smoothness, the last inequalities hold due to Lemma \ref{m_t} and \ref{bound1}.

Similarly, for \textsc{Sadam}, we also have the following result:
\begin{align*}
	-E[\langle\nabla f(z_t)&-\nabla f(x_t), \frac{\eta}{softplus(\sqrt{v_t})}\odot g_t\rangle]\\
	&\leq \frac{1}{2}  E[\|\nabla f(z_t)-\nabla f(x_t)\|^2]+ \frac{1}{2}E[\|\frac{\eta}{softplus(\sqrt{v_t})}\odot g_t\|^2]\\
	&\leq \frac{L^2}{2}E[\|z_t- x_t\|^2]+
	\frac{1}{2}E[\|\frac{\eta}{softplus(\sqrt{v_t})}\odot g_t\|^2]\\
	&=\frac{L^2}{2}(\frac{\beta_1}{1-\beta_1})^2E[\|x_t- x_{t-1}\|^2]+ 
	\frac{1}{2}E[\|\frac{\eta}{softplus(\sqrt{v_t})}\odot g_t\|^2]\\
	&=\frac{L^2 }{2}(\frac{\beta_1}{1-\beta_1})^2E[\|\frac{\eta}{softplus(\sqrt{v_{t-1}})}\odot m_{t-1}\|^2]+ \frac{1}{2}E[\|\frac{\eta}{softplus(\sqrt{v_t})}\odot g_t\|^2]\\
	&\leq \frac{1}{2}L^2\eta^2\mu_4^2 (\frac{\beta_1}{1-\beta_1})^2 (\sigma^2+G^2) +  \frac{1}{2}\eta^2\mu_4^2( \sigma^2 + G^2).
\end{align*}
\end{proof}

\subsection{\textsc{Adam} Convergence in Nonconvex Setting}

\begin{proof}
All the analyses hold true under the condition: $v_t\geq v_{t-1}$. From L-smoothness and Lemma \ref{z_t}, we have

\begin{align*}
    f(z_{t+1})&\leq f(z_t) + \langle\nabla f(z_t), z_{t+1}-z_t\rangle+\frac{L}{2}\|z_{t+1}-z_t\|^2\\
    &= f(z_t)+ \frac{\eta \beta_1}{1-\beta_1}
    \langle\nabla f(z_t), (\frac{1}{\sqrt{v_{t-1}}+\epsilon}-\frac{1}{\sqrt{v_{t}}+\epsilon}) \odot m_{t-1}\rangle\\
    &- \langle\nabla f(z_t), \frac{\eta}{\sqrt{v_t}+\epsilon} \odot g_t\rangle +\frac{L}{2}\|z_{t+1}-z_t\|^2\\
\end{align*}

Take expectation on both sides,

\begin{align*}
    E[f(z_{t+1})-f(z_t)] &\leq  \frac{\eta \beta_1}{1-\beta_1}
    E[\langle\nabla f(z_t), (\frac{1}{\sqrt{v_{t-1}}+\epsilon}-\frac{1}{\sqrt{v_{t}}+\epsilon}) \odot m_{t-1}\rangle]\\
     &- E[\langle\nabla f(z_t), \frac{\eta}{\sqrt{v_t}+\epsilon} \odot g_t\rangle] +\frac{L}{2}E[\|z_{t+1}-z_t\|^2]\\ 
     &= \frac{\eta \beta_1}{1-\beta_1}
    E[\langle\nabla f(z_t), (\frac{1}{\sqrt{v_{t-1}}+\epsilon}-\frac{1}{\sqrt{v_{t}}+\epsilon}) \odot m_{t-1}\rangle]\\
     &- E[\langle\nabla f(z_t)-\nabla f(x_t), \frac{\eta}{\sqrt{v_t}+\epsilon} \odot g_t\rangle] - E[\langle\nabla f(x_t), \frac{\eta}{\sqrt{v_t}+\epsilon} \odot g_t\rangle]\\
     &+\frac{L}{2}E[\|z_{t+1}-z_t\|^2]\\
\end{align*}

Plug in the results from prepared lemmas, then we have,

\begin{align*}
    E[f(z_{t+1})-f(z_t)]
    &\leq  \frac{\eta \beta_1}{1-\beta_1}
    E[\langle\nabla f(z_t), (\frac{1}{\sqrt{v_{t-1}}+\epsilon}-\frac{1}{\sqrt{v_{t}}+\epsilon}) \odot m_{t-1}\rangle]\\
     &+ \frac{1}{2}L^2\eta^2\mu_2^2 (\frac{\beta_1}{1-\beta_1})^2 (\sigma^2+G^2) +  \frac{1}{2}\eta^2\mu_2^2( \sigma^2 + G^2)
     - E[\langle\nabla f(x_t), \frac{\eta}{\sqrt{v_t}+\epsilon} \odot g_t\rangle]\\
    &+\frac{L\eta^2\beta_1^2(\sigma^2+G^2)}{(1-\beta_1)^2}E[\sum_{j=1}^d (\frac{1}{\sqrt{v_{t-1,j}}+\epsilon})^2-(\frac{1}{\sqrt{v_{t,j}}+\epsilon})^2]
    +L\eta^2\mu_2^2(\sigma^2+G^2)\\
\end{align*}    

Applying the bound of $m_t$ and $\nabla f(z_t)$,
\begin{align*}  
 E[f(z_{t+1})-f(z_t)]
    & \leq \frac{\eta \beta_1}{1-\beta_1} G\sqrt{\sigma^2+G^2} E[\sum_{j=1}^d \frac{1}{\sqrt{v_{t-1,j}}+\epsilon}-\frac{1}{\sqrt{v_{t,j}}+\epsilon}] \\
   &+ \frac{1}{2}L^2\eta^2\mu_2^2 (\frac{\beta_1}{1-\beta_1})^2 (\sigma^2+G^2) +  \frac{1}{2}\eta^2\mu_2^2( \sigma^2 + G^2)
     - E[\langle\nabla f(x_t), \frac{\eta}{\sqrt{v_t}+\epsilon} \odot g_t\rangle]\\
    &+\frac{L\eta^2\beta_1^2(\sigma^2+G^2)}{(1-\beta_1)^2}E[\sum_{j=1}^d (\frac{1}{\sqrt{v_{t-1,j}}+\epsilon})^2-(\frac{1}{\sqrt{v_{t,j}}+\epsilon})^2]
    +L\eta^2\mu_2^2(\sigma^2+G^2)\\
\end{align*}

By rearranging,
\begin{align*}
E[\langle\nabla f(x_t),\frac{\eta}{\sqrt{v_t}+\epsilon} \odot g_t\rangle ]
&\leq E[ f( z_t) - f( z_{t+1})] + \frac{\eta \beta_1}{1-\beta_1} G\sqrt{\sigma^2+G^2} E[\sum_{j=1}^d \frac{1}{\sqrt{v_{t-1,j}}+\epsilon}-\frac{1}{\sqrt{v_{t,j}}+\epsilon}] \\
 &+ \frac{1}{2}L^2\eta^2\mu_2^2 (\frac{\beta_1}{1-\beta_1})^2 (\sigma^2+G^2) +  \frac{1}{2}\eta^2\mu_2^2( \sigma^2 + G^2)\\
 &+\frac{L\eta^2\beta_1^2(\sigma^2+G^2)}{(1-\beta_1)^2}E[\sum_{j=1}^d (\frac{1}{\sqrt{v_{t-1,j}}+\epsilon})^2-(\frac{1}{\sqrt{v_{t,j}}+\epsilon})^2] +L\eta^2\mu_2^2(\sigma^2+G^2)\\
\end{align*}

For the LHS above: 
\begin{align*}
	E[\langle\nabla f(x_t), \frac{1}{\sqrt{v_t}+\epsilon} \odot g_t\rangle] 
	&\geq E[ \sum_{\{j| \nabla f( x_{t,j}) g_{t,j}\geq 0\}} \mu_1  \nabla f( x_{t,j}) g_{t,j}  + \sum_{\{j|  \nabla f( x_{t,j}) g_{t,j}< 0\}} \mu_2  \nabla f( x_{t,j}) g_{t,j}]\\
	&\geq E[ \sum_{\{j| \nabla f( x_{t,j}) g_{t,j}\geq 0\}} \mu_1  \nabla f( x_{t,j})^2  + \sum_{\{j|  \nabla f( x_{t,j}) g_{t,j}< 0\}} \mu_2  \nabla f( x_{t,j})^2]\\
	&\geq \mu_1 \| \nabla f( x_t)\|^2
\end{align*}

Then we obtain: 
\begin{align*}
\eta\mu_1 \| \nabla f( x_t)\|^2&
\leq  E[ f( z_t) - f( z_{t+1})] +\frac{\eta \beta_1}{1-\beta_1} G\sqrt{\sigma^2+G^2} E[\sum_{j=1}^d \frac{1}{\sqrt{v_{t-1,j}}+\epsilon}-\frac{1}{\sqrt{v_{t,j}}+\epsilon}] \\
&+ \frac{1}{2}L^2\eta^2\mu_2^2 (\frac{\beta_1}{1-\beta_1})^2 (\sigma^2+G^2) +  \frac{1}{2}\eta^2\mu_2^2( \sigma^2 + G^2)\\
 &+\frac{L\eta^2\beta_1^2(\sigma^2+G^2)}{(1-\beta_1)^2}E[\sum_{j=1}^d (\frac{1}{\sqrt{v_{t-1,j}}+\epsilon})^2-(\frac{1}{\sqrt{v_{t,j}}+\epsilon})^2] +L\eta^2\mu_2^2(\sigma^2+G^2)\\
\end{align*}

Divide $\eta \mu_1$ on both sides:
\begin{align*}
 \| \nabla f( x_t)\|^2
 &\leq \frac{1}{\eta \mu_1}  E[ f( z_t) - f( z_{t+1})] + \frac{\beta_1 }{(1-\beta_1)\mu_1 } G\sqrt{\sigma^2+G^2} E[\sum_{j=1}^d \frac{1}{\sqrt{v_{t-1,j}}+\epsilon}-\frac{1}{\sqrt{v_{t,j}}+\epsilon}] \\
&+ \frac{1}{2\mu_1}L^2\eta\mu_2^2 (\frac{\beta_1}{1-\beta_1})^2 (\sigma^2+G^2) +  \frac{1}{2\mu_1}\eta\mu_2^2( \sigma^2 + G^2)\\
 &+\frac{L\eta\beta_1^2(\sigma^2+G^2)}{(1-\beta_1)^2\mu_1}E[\sum_{j=1}^d (\frac{1}{\sqrt{v_{t-1,j}}+\epsilon})^2-(\frac{1}{\sqrt{v_{t,j}}+\epsilon})^2] +\frac{L\eta\mu_2^2}{\mu_1}(\sigma^2+G^2)\\
\end{align*}

Summing from $t=1$ to $T$, where $T$ is the maximum number of iteration,
\begin{align*}
 \sum_{t=1}^{T}[ \| \nabla f( x_t)\|^2]
 &\leq \frac{1}{\eta \mu_1}  E[ f( z_1) - f^* ] + \frac{\beta_1 }{(1-\beta_1)\mu_1 } G\sqrt{\sigma^2+G^2} E[\sum_{j=1}^d \frac{1}{\sqrt{v_{0,j}}+\epsilon}-\frac{1}{\sqrt{v_{T,j}}+\epsilon}] \\
&+ \frac{T}{2\mu_1}L^2\eta\mu_2^2 (\frac{\beta_1}{1-\beta_1})^2 (\sigma^2+G^2) +  \frac{T}{2\mu_1}\eta\mu_2^2( \sigma^2 + G^2)\\
 &+\frac{L\eta\beta_1^2(\sigma^2+G^2)}{(1-\beta_1)^2\mu_1}E[\sum_{j=1}^d (\frac{1}{\sqrt{v_{0,j}}+\epsilon})^2-(\frac{1}{\sqrt{v_{T,j}}+\epsilon})^2] +\frac{T L\eta\mu_2^2}{\mu_1}(\sigma^2+G^2)\\
\end{align*}

Since $v_0 = 0$, $\mu_2 = \frac{1}{\epsilon}$, we have
\begin{align*}
 \sum_{t=1}^{T}[ \| \nabla f( x_t)\|^2]
 &\leq \frac{1}{\eta \mu_1}  E[ f( z_1) - f^* ] + \frac{\beta_1 d }{(1-\beta_1)\mu_1 } G\sqrt{\sigma^2+G^2} (\mu_2 - \mu_1)\\
&+ \frac{T}{2\mu_1}L^2\eta\mu_2^2 (\frac{\beta_1}{1-\beta_1})^2 (\sigma^2+G^2) +  \frac{T}{2\mu_1}\eta\mu_2^2( \sigma^2 + G^2)\\
 &+\frac{L\eta\beta_1^2 d (\sigma^2+G^2)}{(1-\beta_1)^2\mu_1} (\mu_2^2 - \mu_1^2)
 +\frac{T L\eta\mu_2^2}{\mu_1}(\sigma^2+G^2)\\
\end{align*}

Divided by $\frac{1}{T}$,
\begin{align*}
\frac{1}{T}\sum_{t=1}^{T}[ \| \nabla f( x_t)\|^2]
 &\leq \frac{1}{\eta \mu_1 T}  E[ f( z_1) - f^* ] 
 +\frac{\beta_1 d }{(1-\beta_1)\mu_1 T} G\sqrt{\sigma^2+G^2} (\mu_2 - \mu_1)\\
&+ \frac{1}{2\mu_1}L^2\eta\mu_2^2 (\frac{\beta_1}{1-\beta_1})^2 (\sigma^2+G^2) +  \frac{1}{2\mu_1}\eta\mu_2^2( \sigma^2 + G^2)\\
 &+\frac{L\eta\beta_1^2 d (\sigma^2+G^2)}{(1-\beta_1)^2\mu_1 T} (\mu_2^2 - \mu_1^2)
 +\frac{ L\eta\mu_2^2}{\mu_1}(\sigma^2+G^2)\\
 &\leq \frac{1}{\eta \mu_1 T}  E[ f( z_1) - f^* ] + (\frac{\beta_1 d }{(1-\beta_1)\mu_1 T}  (\mu_2 - \mu_1)\\
 &+ \frac{1}{2\mu_1}L^2\eta\mu_2^2 (\frac{\beta_1}{1-\beta_1})^2  +  \frac{\eta\mu_2^2}{2\mu_1}
 +\frac{L\eta\beta_1^2 d (\mu_2^2 - \mu_1^2)}{(1-\beta_1)^2\mu_1 T} 
 +\frac{ L\eta\mu_2^2}{\mu_1})(\sigma^2+G^2)
\end{align*}
The second inequality holds because $G\sqrt{\sigma^2+G^2} \leq \sigma^2+G^2$.

Setting $\eta = \frac{1}{\sqrt{T}}$, let $x_0=x_1$, then $z_1 = x_1$, $f(z_1)=f(x_1)$ we derive the final result:

\begin{align*}
    \min_{t=1,\dots,T}E[\|\nabla f(x_t)\|^2] &\leq \frac{1}{\mu_1\sqrt{T}}  E[ f( x_1) - f^*] + (\frac{\beta_1 d}{(1-\beta_1)\mu_1 T } ( \mu_2 - \mu_1) \\
    &+ \frac{L^2\mu_2^2}{2\mu_1\sqrt{T}} (\frac{\beta_1}{1-\beta_1})^2  +  \frac{\mu_2^2}{2\mu_1\sqrt{T}}
 +\frac{L\beta_1^2 d (\mu_2^2 - \mu_1^2)}{(1-\beta_1)^2\mu_1 T\sqrt{T}} 
 +\frac{ L\mu_2^2}{\mu_1\sqrt{T}})(\sigma^2+G^2)\\
    &= \frac{C_1}{\sqrt{T}}+ \frac{C_2}{T} + \frac{C_3}{T\sqrt{T}}
\end{align*}
where
\begin{align*}
C_1 & = \frac{1}{\mu_1}[f(x_{1})-f^*]+ (\frac{L^2\mu_2^2}{2\mu_1} (\frac{\beta_1}{1-\beta_1})^2 +  \frac{\mu_2^2}{2\mu_1}  +\frac{ L\mu_2^2}{\mu_1})(\sigma^2+G^2) \\
C_2 & = \frac{\beta_1 ( \mu_2 - \mu_1)d}{(1-\beta_1)\mu_1 } , \\
C_3 & = \frac{L\beta_1^2 d (\mu_2^2 - \mu_1^2)}{(1-\beta_1)^2\mu_1}.
\end{align*}

With fixed $ L, \sigma, G, \beta_1$, we have $C_1 = O(\frac{1}{\epsilon^2})$, $C_2 = O(\frac{d}{\epsilon})$, $C_3 = O(\frac{d}{\epsilon^2})$. Therefore,
\begin{align*}
    \min_{t=1,\dots,T}E[\|\nabla f(x_t)\|^2] &\leq O( \frac{1}{\epsilon^2 \sqrt{T}}+ \frac{d}{\epsilon T} + \frac{d}{\epsilon^2 T\sqrt{T}})
\end{align*}

\end{proof}

Thus, we get the sublinear convergence rate of \textsc{Adam} in nonconvex setting, which recovers the well-known result of SGD (\citep{ghadimi2013stochastic}) in nonconvex optimization in terms of $T$. 

\begin{remark}
The leading item from the above convergence is $C_1/\sqrt{T}$, $\epsilon$ plays an essential role in the complexity, and we derive a more accurate order $O(\frac{1}{\epsilon^2\sqrt{T}})$. At present, $\epsilon $ is always underestimated and considered to be not associated with accuracy of the solution (\citep{zaheer2018adaptive}). However, it is closely related with complexity, and with bigger $\epsilon$, the computational complexity should be better. This also supports the analysis of A-LR: $\frac{1}{\sqrt{v_t}+\epsilon }$ of \textsc{Adam} in our main paper.
\end{remark}

In some other works, people use $\sigma_i$ or $G_i$ to show all the element-wise bound, and then by applying $\sum_{j=1}^d \sigma_i := \sigma$, $\sum_{j=1}^d G_i := G$ to hide $d$ in the complexity. Here in our work, we didn't specify write out  $\sigma_i$ or $G_i$, instead we use $\sigma, G$ through all the procedure. 

\subsection{\textsc{Sadam} Convergence in Nonconvex Setting}

As \textsc{Sadam} also has constrained bound pair $(\mu_3, \mu_4)$, we can learn from the proof of \textsc{Adam} method, which provides us a general framework of such kind of adaptive methods.

Similar to the \textsc{Adam} proof, from L-smoothness and Lemma \ref{z_t} , we have
\begin{proof}
	All the analyses hold true under the condition: $v_t\geq v_{t-1}$. From L-smoothness and Lemma \ref{z_t}, we have
	\begin{align*}
	f(z_{t+1})&\leq f(z_t) + \langle\nabla f(z_t), z_{t+1}-z_t\rangle+\frac{L}{2}\|z_{t+1}-z_t\|^2\\
	&= f(z_t)+ \frac{\eta \beta_1}{1-\beta_1}
	\langle\nabla f(z_t), (\frac{1}{softplus(\sqrt{v_{t-1}})}-\frac{1}{softplus(\sqrt{v_{t}})}) \odot m_{t-1}\rangle\\
	&- \langle\nabla f(z_t), \frac{\eta}{softplus(\sqrt{v_{t}})} \odot g_t\rangle +\frac{L}{2}\|z_{t+1}-z_t\|^2\\
\end{align*}
	
Taking expectation on both sides, and plug in the results from prepared lemmas, then we have,

\begin{align*}
&E[f(z_{t+1})-f(z_t)] \\
&\leq\frac{\eta \beta_1}{1-\beta_1}E[\langle\nabla f(z_t), (\frac{1}{softplus(\sqrt{v_{t-1}})}-\frac{1}{softplus(\sqrt{v_{t}})}) \odot m_{t-1}\rangle]\\
&- E[\langle\nabla f(z_t), \frac{\eta}{softplus(\sqrt{v_{t}})} \odot g_t\rangle] +\frac{L}{2}E[\|z_{t+1}-z_t\|^2]\\ 
&\leq  \frac{\eta \beta_1}{1-\beta_1}E[\langle\nabla f(z_t), (\frac{1}{softplus(\sqrt{v_{t-1}})}-\frac{1}{softplus(\sqrt{v_{t}})}) \odot m_{t-1}\rangle]\\
&- E[\langle\nabla f(z_t), \frac{\eta}{softplus(\sqrt{v_{t}})} \odot g_t\rangle]\\
&+\frac{L\eta^2\beta_1^2(\sigma^2+G^2)}{(1-\beta_1)^2}E[\sum_{j=1}^d (\frac{1}{softplus(\sqrt{v_{t-1,j}})})^2-(\frac{1}{softplus(\sqrt{v_{t,j}})})^2] +L\eta^2\mu_4^2(\sigma^2+G^2)\\
& = \frac{\eta \beta_1}{1-\beta_1} G\sqrt{\sigma^2+G^2} E[\sum_{j=1}^d \frac{1}{softplus(\sqrt{v_{t-1,j}})}-\frac{1}{softplus(\sqrt{v_{t,j}})}] \\
&- E[\langle\nabla f(z_t) - \nabla f(x_t), \frac{\eta}{softplus(\sqrt{v_{t}})} \odot g_t\rangle]- E[\langle\nabla f(x_t), \frac{\eta}{softplus(\sqrt{v_{t}})} \odot g_t\rangle]\\
&+\frac{L\eta^2\beta_1^2(\sigma^2+G^2)}{(1-\beta_1)^2}E[\sum_{j=1}^d (\frac{1}{softplus(\sqrt{v_{t-1,j}})})^2-(\frac{1}{softplus(\sqrt{v_{t,j}})})^2] +L\eta^2\mu_4^2(\sigma^2+G^2)\\
& \leq \frac{\eta \beta_1}{1-\beta_1} G\sqrt{\sigma^2+G^2} E[\sum_{j=1}^d \frac{1}{softplus(\sqrt{v_{t-1,j}})}-\frac{1}{softplus(\sqrt{v_{t,j}})}] \\
&+ \frac{L^2\eta^2 \mu_4^2}{2}(\frac{\beta_1}{1-\beta_1})^2 (\sigma^2+G^2) +  \frac{\eta^2\mu_4^2}{2}( \sigma^2 + G^2)
- E[\langle\nabla f(x_t), \frac{\eta}{softplus(\sqrt{v_{t}})} \odot g_t\rangle]\\
&+\frac{L\eta^2\beta_1^2(\sigma^2+G^2)}{(1-\beta_1)^2}E[\sum_{j=1}^d (\frac{1}{softplus(\sqrt{v_{t-1,j}})})^2-(\frac{1}{softplus(\sqrt{v_{t,j}})})^2]+L\eta^2\mu_4^2(\sigma^2+G^2)
\end{align*}
	
By rearranging,
\begin{align*}
	&E[\langle\nabla f(x_t), \frac{\eta}{softplus(\sqrt{v_{t}})} \odot g_t\rangle] \\
	&\leq E[ f( z_t) - f( z_{t+1})] + \frac{\eta \beta_1}{1-\beta_1} G\sqrt{\sigma^2+G^2} E[\sum_{j=1}^d \frac{1}{softplus(\sqrt{v_{t-1,j}})}-\frac{1}{softplus(\sqrt{v_{t,j}})}] \\
	&+ \frac{L^2\eta^2 \mu_4^2}{2}(\frac{\beta_1}{1-\beta_1})^2 (\sigma^2+G^2) +  \frac{\eta^2 \mu_4^2}{2}( \sigma^2 + G^2)\\
	&+\frac{L\eta^2\beta_1^2(\sigma^2+G^2)}{(1-\beta_1)^2}E[\sum_{j=1}^d (\frac{1}{softplus(\sqrt{v_{t-1,j}})})^2-(\frac{1}{softplus(\sqrt{v_{t,j}})})^2] +L\eta^2\mu_4^2(\sigma^2+G^2)
\end{align*}
	
For the LHS above: 
	\begin{align*}
	E[\langle\nabla f(x_t), \frac{1}{softplus(\sqrt{v_{t}})} \odot g_t\rangle]
	&\geq E[ \sum_{\{j| \nabla f( x_{t,j}) g_{t,j}\geq 0\}} \mu_3 \nabla f( x_{t,j}) g_{t,j}  + \sum_{\{j|  \nabla f( x_{t,j}) g_{t,j}< 0\}} \mu_4  \nabla f( x_{t,j}) g_{t,j}\\
	&\geq E[ \sum_{\{j| \nabla f( x_{t,j}) g_{t,j}\geq 0\}} \mu_3  \nabla f( x_{t,j})^2  + \sum_{\{j|  \nabla f( x_{t,j}) g_{t,j}< 0\}} \mu_4  \nabla f( x_{t,j})^2\\
	&\geq \mu_3 \| \nabla f( x_t)\|^2
	\end{align*}
	
Then we obtain: 
\begin{align*}
\eta\mu_3 \| \nabla f( x_t)\|^2&
	\leq  E[ f( z_t) - f( z_{t+1})] +\frac{\eta \beta_1}{1-\beta_1} G\sqrt{\sigma^2+G^2} E[\sum_{j=1}^d \frac{1}{softplus(\sqrt{v_{t-1,j}})}-\frac{1}{softplus(\sqrt{v_{t,j}})}] \\
	&+ \frac{L^2\eta^2 \mu_4^2}{2}(\frac{\beta_1}{1-\beta_1})^2 (\sigma^2+G^2) +  \frac{\eta^2 \mu_4^2}{2}( \sigma^2 + G^2)\\
	&+\frac{L\eta^2\beta_1^2(\sigma^2+G^2)}{(1-\beta_1)^2}E[\sum_{j=1}^d (\frac{1}{softplus(\sqrt{v_{t-1,j}})})^2-(\frac{1}{softplus(\sqrt{v_{t,j}})})^2] +L\eta^2\mu_4^2(\sigma^2+G^2)
	\end{align*}
	
Divide $\eta\mu_3$ on both sides and then sum from $t=1$ to $T$, where $T$ is the maximum number of iteration,

	\begin{align*}
	\sum_{t=1}^{T}[ \| \nabla f( x_t)\|^2]&
	\leq \frac{1}{\eta \mu_3}  E[ f( z_1) - f^*] + \frac{\beta_1}{(1-\beta_1)\mu_3 } G\sqrt{\sigma^2+G^2} E[\sum_{j=1}^d \frac{1}{softplus(\sqrt{v_{0,j}})}-\frac{1}{softplus(\sqrt{v_{T,j}})}] \\
	&+ \frac{L^2\eta T\mu_4^2 }{2\mu_3}(\frac{\beta_1}{1-\beta_1})^2 (\sigma^2+G^2) +  \frac{\eta\mu_4^2 T}{2\mu_3}( \sigma^2 + G^2)\\
	&+\frac{L\eta\beta_1^2 (\sigma^2+G^2)}{(1-\beta_1)^2\mu_3}E[\sum_{j=1}^d (\frac{1}{softplus(\sqrt{v_{0,j}})})^2-(\frac{1}{softplus(\sqrt{v_{T,j}})})^2]
	+\frac{L\eta\mu_4^2 T(\sigma^2+G^2)}{\mu_3}
	\end{align*}
	
	Since $v_0 = 0$, $\frac{1}{softplus(0)}=\mu_4 $, we have
	
	\begin{align*}
	\sum_{t=1}^{T}[ \| \nabla f( x_t)\|^2]&
\leq \frac{1}{\eta \mu_3}  E[ f( z_1) - f^*] + \frac{\beta_1 d}{(1-\beta_1)\mu_3 } G\sqrt{\sigma^2+G^2} (\mu_4 -\mu_3)\\
	&+ \frac{L^2\eta T\mu_4^2 }{2\mu_3}(\frac{\beta_1}{1-\beta_1})^2 (\sigma^2+G^2) +  \frac{\eta\mu_4^2 T}{2\mu_3}( \sigma^2 + G^2)\\
	&+\frac{L\eta\beta_1^2 d(\sigma^2+G^2)}{(1-\beta_1)^2\mu_3}(\mu_4^2 - \mu_3^2)
	+\frac{L\eta\mu_4^2 T(\sigma^2+G^2)}{\mu_3}
	\end{align*}
	
	Divided by $\frac{1}{T}$,
	\begin{align*}
	\frac{1}{T}\sum_{t=1}^{T}[ \| \nabla f( x_t)\|^2]
	&\leq \frac{1}{\eta \mu_3 T}  E[ f( z_1) - f^*] + \frac{\beta_1 d}{(1-\beta_1)\mu_3 T} G\sqrt{\sigma^2+G^2} (\mu_4 -\mu_3)\\
	&+ \frac{L^2\eta \mu_4^2 }{2\mu_3}(\frac{\beta_1}{1-\beta_1})^2 (\sigma^2+G^2) +  \frac{\eta\mu_4^2 }{2\mu_3}( \sigma^2 + G^2)\\
	&+\frac{L\eta\beta_1^2 d(\sigma^2+G^2)}{(1-\beta_1)^2\mu_3 T}(\mu_4^2 - \mu_3^2)
	+\frac{L\eta\mu_4^2 (\sigma^2+G^2)}{\mu_3 }\\
	& \leq  \frac{1}{\eta \mu_3 T} E[ f( z_1) - f^*] + (\frac{\beta_1 d}{(1-\beta_1)\mu_3 T}(\mu_4 -\mu_3)\\
	&+ \frac{L^2\eta \mu_4^2 }{2\mu_3}(\frac{\beta_1}{1-\beta_1})^2  +  \frac{\eta\mu_4^2 }{2\mu_3}
	+\frac{L\eta\beta_1^2 d}{(1-\beta_1)^2\mu_3 T}(\mu_4^2 - \mu_3^2)
	+\frac{L\eta\mu_4^2}{\mu_3 }
	) (\sigma^2+G^2)
	\end{align*}
	
	Setting $\eta = \frac{1}{\sqrt{T}}$, let $x_0=x_1$, then $z_1 = x_1$, $f(z_1)=f(x_1)$ we derive the final result for \textsc{Sadam} method:
	
	\begin{align*}
	\min_{t=1,\dots,T}E[\|\nabla f(x_t)\|^2] &\leq \frac{1}{\mu_3 \sqrt{T}}  E[ f( x_1) - f^*] + (\frac{\beta_1 d}{(1-\beta_1)\mu_3 T}(\mu_4 -\mu_3)\\
	&+ \frac{L^2 \mu_4^2 }{2\mu_3\sqrt{T}}(\frac{\beta_1}{1-\beta_1})^2  +  \frac{\mu_4^2}{2\mu_3\sqrt{T}}
	+\frac{L\beta_1^2 d(\mu_4^2 - \mu_3^2)}{(1-\beta_1)^2\mu_3 T\sqrt{T}}
	+\frac{L\mu_4^2}{\mu_3\sqrt{T}}
	) (\sigma^2+G^2)\\
 &= \frac{C_1}{\sqrt{T}}+ \frac{C_2}{T} + \frac{C_3}{T\sqrt{T}}
\end{align*}
where
\begin{align*}
C_1 & = \frac{1}{\mu_3}[f(x_{1})-f^*]+ (\frac{L^2\mu_4^2}{2\mu_3} (\frac{\beta_1}{1-\beta_1})^2 +  \frac{\mu_4^2}{2\mu_3}  +\frac{ L\mu_4^2}{\mu_3})(\sigma^2+G^2) \\
C_2 & = \frac{\beta_1 ( \mu_4 - \mu_3)d}{(1-\beta_1)\mu_3 } , \\
C_3 & = \frac{L\beta_1^2 d (\mu_4^2 - \mu_3^2)}{(1-\beta_1)^2\mu_3}.
\end{align*}

With fixed $ L, \sigma, G, \beta_1$, we have $C_1 = O(\beta^2)$, $C_2 = O(d \beta)$, $C_3 = O(d \beta^2)$. Therefore,
\begin{align*}
    \min_{t=1,\dots,T}E[\|\nabla f(x_t)\|^2] &\leq O( \frac{\beta^2}{\sqrt{T}}+ \frac{d \beta}{T} + \frac{d \beta^2}{ T\sqrt{T}})
\end{align*}

\end{proof}

Thus, we get the sublinear convergence rate of \textsc{Sadam} in nonconvex setting, which is the same order of \textsc{Adam} and recovers the well-known result of SGD \citep{ghadimi2013stochastic} in nonconvex optimization in terms of $T$. 

\begin{remark}
   The leading item from the above convergence is $C_1 /\sqrt{T}$,  $\beta$ plays an essential role in the complexity, and a more accurate convergence should be $O(\frac{\beta log(1+e^\beta)}{ \sqrt{T}})$. When $\beta$ is chosen big, this will become $O(\frac{\beta^2}{\sqrt{T}})$, somehow behave like \textsc{Adam}'s case as $O(\frac{1}{\epsilon^2\sqrt{T}})$, which also guides us to have a range of $\beta$; when $\beta$ is chosen small, this will become $O(\frac{1}{\sqrt{T}})$, the computational complexity will get close to SGD case, and $\beta$ is a much smaller number compared with $1/\epsilon$, proving that \textsc{Sadam} converges faster.
    This also supports the analysis of range of A-LR: $1/softplus(\sqrt{v_t})$ in our main paper.
\end{remark}

\subsection{Non-strongly Convex}
In previous works, convex case has been well-studied in adaptive gradient methods. \textsc{AMSGrad} and later methods \textsc{PAMSGrad} both use a projection on minimizing objective function, here we want to show a different way of proof in non-strongly convex case. For consistency, we still follow the construction of sequence $\{z_t\}$.

Starting from convexity: $$f(y)\geq f(x) + \nabla f(x)^T (y-x).$$

Then, for any $x\in \mathbb{R}^d$, $\forall t \in [1,T]$,

\begin{equation}\label{equ:cov}
\langle \nabla f(x), x_t - x^*\rangle \geq f(x_t)-f^*,
\end{equation}
where $f^* = f(x^*)$, $x^*$ is the optimal solution. 

\begin{proof}
\textsc{Adam} case: 

In the updating rule of \textsc{Adam} optimizer, $x_{t+1}=x_t - \frac{\eta_t}{\sqrt{v_t}+\epsilon}\odot m_t$, setting stepsize to be fixed, $\eta_t = \eta $, and assume $v_t \geq v_{t-1}$ holds. Using previous results,

\begin{align*}
    &E[\|z_{t+1}-x^* \|^2] \\
    &=  E[\|z_t + \frac{\eta \beta_1}{1-\beta_1}
    (\frac{1}{\sqrt{v_{t-1}}+\epsilon}-\frac{1}{\sqrt{v_{t}}+\epsilon})\odot m_{t-1}-\frac{\eta}{\sqrt{v_t}+\epsilon}\odot g_t-x^* \|^2]\\
    &= E[\|z_{t} -x^* \|^2] + E[\|\frac{\eta \beta_1}{1-\beta_1}
    (\frac{1}{\sqrt{v_{t-1}}+\epsilon}-\frac{1}{\sqrt{v_{t}}+\epsilon})\odot m_{t-1}-\frac{\eta}{\sqrt{v_t}+\epsilon}\odot g_t\|^2] \\
    &+ 2E[\langle\frac{\eta\beta_1}{1-\beta_1}(\frac{1}{\sqrt{v_{t-1}}+\epsilon}-\frac{1}{\sqrt{v_{t}}+\epsilon})\odot m_{t-1}, z_t-x^*\rangle]-2E[\langle\frac{\eta}{\sqrt{v_t}+\epsilon}\odot g_t,z_t-x^*\rangle]\\
    &\leq E[\|z_{t} -x^* \|^2] +2\frac{\eta^2\beta_1^2}{(1-\beta_1)^2}E[\|(\frac{1}{\sqrt{v_{t-1}}+\epsilon}-\frac{1}{\sqrt{v_{t}}+\epsilon})\odot m_{t-1}\|^2] + 2\eta^2 E[\|\frac{1}{\sqrt{v_t}+\epsilon}\odot g_t\|^2] \\
     &+ 2\frac{\eta\beta_1}{1-\beta_1}E[\langle (\frac{1}{\sqrt{v_{t-1}}+\epsilon}-\frac{1}{\sqrt{v_{t}}+\epsilon}) \odot m_{t-1}, z_t-x^*\rangle]
     -2\eta E[\langle\frac{1}{\sqrt{v_t}+\epsilon}\odot g_t,z_t-x^* \rangle]\\
     &\leq E[\|z_{t} -x^* \|^2] +2\frac{\eta^2\beta_1^2(\sigma^2+G^2)}{(1-\beta_1)^2}E[\sum_{j=1}^d (\frac{1}{\sqrt{v_{t-1}}+\epsilon})^2-(\frac{1}{\sqrt{v_{t}}+\epsilon})^2]
     +2\eta^2\mu_2^2(\sigma^2+G^2)\\
     &+ 2\frac{\eta\beta_1}{1-\beta_1}E[\langle (\frac{1}{\sqrt{v_{t-1}}+\epsilon}-\frac{1}{\sqrt{v_{t}}+\epsilon}) \odot m_{t-1}, z_t-x^*\rangle]
     -2\eta E[\langle\frac{1}{\sqrt{v_t}+\epsilon}\odot g_t,z_t-x^*\rangle]\\
 \end{align*}  
 The first inequality holds due to $\|a-b\|^2\leq 2\|a\|^2 +2\|b\|^2$, the second inequality holds due to Lemma \ref{g_t}, \ref{m_t}, \ref{bound1}.

Since $<a,b>\leq \frac{1}{2\eta}a^2 +\frac{\eta}{2}b^2$,
\begin{align*}
    &2E[\langle (\frac{1}{\sqrt{v_{t-1}}+\epsilon}-\frac{1}{\sqrt{v_{t}}+\epsilon}) \odot m_{t-1}, z_t-x^* \rangle]\\
    &\leq \frac{1}{\eta} E[\|(\frac{1}{\sqrt{v_{t-1}}+\epsilon}-\frac{1}{\sqrt{v_{t}}+\epsilon}) \odot m_{t-1}\|^2] +\eta E[\|z_t - x^*\|^2]\\
    &\leq\frac{1}{\eta} (\sigma^2+G^2)E[\sum_{j=1}^d (\frac{1}{\sqrt{v_{t-1,j}}+\epsilon})^2-(\frac{1}{\sqrt{v_{t,j}}+\epsilon})^2]+ \eta E[\|z_t - x^*\|^2]\\
\end{align*}
From the definition of $z_t$ and convexity,
\begin{align*}
	\langle\nabla f(x_t),x_t-x^*\rangle \geq f(x_t)-f^*\geq 0
\end{align*}

\begin{align*}
&-2\eta E[\langle\frac{1}{\sqrt{v_t}+\epsilon}\odot g_t ,z_t-x^* \rangle ] \\
&=-2\eta E[ \langle \frac{1}{\sqrt{v_t}+\epsilon}\odot  g_t,x_t-x^* +\frac{\beta_1}{1-\beta_1}(x_t-x_{t-1})\rangle ]\\
&=-2\eta E[\langle \frac{1}{\sqrt{v_t}+\epsilon}\odot g_t,x_t-x^*\rangle ]- \frac{2\eta\beta_1}{1-\beta_1} E[\langle \frac{1}{\sqrt{v_t}+\epsilon}\odot g_t,x_t-x_{t-1}\rangle ]\\
&=-2\eta E[ \langle \frac{1}{\sqrt{v_t}+\epsilon}\odot g_t,x_t-x^*\rangle ]- \frac{2\eta^2\beta_1}{1-\beta_1} E[\langle \frac{1}{\sqrt{v_t}+\epsilon}\odot g_t,\frac{1}{\sqrt{v_{t-1}}+\epsilon}\odot m_{t-1}\rangle ]\\
&\leq -2\eta\mu_1 \langle \nabla f( x_t), x_t - x^*\rangle + \frac{2\eta^2\beta_1\mu_2^2}{(1-\beta_1)}( \sigma^2 + G^2)\\
&\leq -2\eta\mu_1  (f( x_t) - f^*) + \frac{2\eta^2\beta_1\mu_2^2}{(1-\beta_1)}( \sigma^2 + G^2)
\end{align*}

Plugging in previous two inequalities:

\begin{align*}
&E[\|z_{t+1}-x^* \|^2]\\
&\leq E[\|z_{t} -x^* \|^2] +2\frac{\eta^2\beta_1^2(\sigma^2+G^2)}{(1-\beta_1)^2}E[\sum_{j=1}^d (\frac{1}{\sqrt{v_{t-1}}+\epsilon})^2-(\frac{1}{\sqrt{v_{t}}+\epsilon})^2]
     +2\eta^2\mu_2^2(\sigma^2+G^2)\\
&+ \frac{\beta_1(\sigma^2+G^2)}{1-\beta_1}E[\sum_{j=1}^d (\frac{1}{\sqrt{v_{t-1,j}}+\epsilon})^2-(\frac{1}{\sqrt{v_{t,j}}+\epsilon})^2]+ \frac{\eta^2\beta_1}{1-\beta_1} E[\|z_t - x^*\|^2]\\
&-2\eta\mu_1  (f( x_t) - f^*) + \frac{2\eta^2\beta_1\mu_2^2}{(1-\beta_1)}( \sigma^2 + G^2)\\
\end{align*}  

By rearranging:
\begin{align*}
 &2\eta\mu_1  (f( x_t) - f^*) \\
 &\leq E[\|z_{t} -x^* \|^2] - E[\|z_{t+1}-x^* \|^2] 
 +2\frac{\eta^2\beta_1^2(\sigma^2+G^2)}{(1-\beta_1)^2}E[\sum_{j=1}^d (\frac{1}{\sqrt{v_{t-1}}+\epsilon})^2-(\frac{1}{\sqrt{v_{t}}+\epsilon})^2]\\
 &+2\eta^2\mu_2^2(\sigma^2+G^2)
+ \frac{\beta_1(\sigma^2+G^2)}{1-\beta_1}E[\sum_{j=1}^d (\frac{1}{\sqrt{v_{t-1,j}}+\epsilon})^2-(\frac{1}{\sqrt{v_{t,j}}+\epsilon})^2]+ \frac{\eta^2\beta_1}{1-\beta_1} E[\|z_t - x^*\|^2]\\
&+ \frac{2\eta^2\beta_1\mu_2^2}{(1-\beta_1)}( \sigma^2 + G^2)\\
\end{align*}

Divide $2\eta\mu_1 $ on both sides,
\begin{align*}
 f( x_t) - f^* &
\leq \frac{1}{2\eta\mu_1 }(E[\|z_{t} -x^* \|^2] - E[\|z_{t+1}-x^* \|^2])  +\frac{\eta\beta_1^2(\sigma^2+G^2)}{(1-\beta_1)^2\mu_1}E[\sum_{j=1}^d (\frac{1}{\sqrt{v_{t-1}}+\epsilon})^2-(\frac{1}{\sqrt{v_{t}}+\epsilon})^2]\\
&+\frac{\eta\mu_2^2}{\mu_1}(\sigma^2+G^2)
+ \frac{\beta_1(\sigma^2+G^2)}{2\eta\mu_1(1-\beta_1)}E[\sum_{j=1}^d (\frac{1}{\sqrt{v_{t-1,j}}+\epsilon})^2-(\frac{1}{\sqrt{v_{t,j}}+\epsilon})^2]\\
&+ \frac{\eta\beta_1}{2\mu_1(1-\beta_1)} E[\|z_t - x^*\|^2]
+ \frac{\eta\beta_1\mu_2^2}{(1-\beta_1)\mu_1}( \sigma^2 + G^2)\\
\end{align*}

Assume that $\forall t$, $E[\|x_{t}-x^* \| \leq D$, for any $m \ne n$, $E[\|x_{m}-x_{n} \|] \leq D_{\infty}$ hold, then $ E[\|z_{t} -x^* \|^2]$ can be bounded.

\begin{align}
E[\|z_{1} -x^* \|^2] &=E[\|x_{1} -x^* \|^2]\leq D^2\\
E[\|z_{t} -x^* \|^2] &= E[\|x_t - x^* + \frac{\beta_1}{1-\beta_1}(x_{t}-x_{t-1})\|^2] \nonumber\\
&\leq 2E[\|x_t - x^*\|^2] + \frac{2\beta_1^2}{(1-\beta_1)^2}E[\|(x_{t}-x_{t-1})\|^2]\nonumber\\
&\leq 2D^2 + \frac{2\beta_1^2}{(1-\beta_1)^2}D_{\infty}^2.
\end{align}

Thus:
\begin{align*}
f( x_t) - f^* &
\leq \frac{1}{2\eta\mu_1 }(E[\|z_{t} -x^* \|^2] - E[\|z_{t+1}-x^* \|^2])  +\frac{\eta\beta_1^2(\sigma^2+G^2)}{(1-\beta_1)^2\mu_1}E[\sum_{j=1}^d (\frac{1}{\sqrt{v_{t-1}}+\epsilon})^2-(\frac{1}{\sqrt{v_{t}}+\epsilon})^2]\\
&+\frac{\eta\mu_2^2}{\mu_1}(\sigma^2+G^2)
+ \frac{\beta_1(\sigma^2+G^2)}{2\eta\mu_1(1-\beta_1)}E[\sum_{j=1}^d (\frac{1}{\sqrt{v_{t-1,j}}+\epsilon})^2-(\frac{1}{\sqrt{v_{t,j}}+\epsilon})^2]\\
&+ \frac{\eta\beta_1D^2 }{\mu_1(1-\beta_1)}+ \frac{\eta\beta_1^3 D_{\infty}^2}{\mu_1(1-\beta_1)^3}
+ \frac{\eta\beta_1\mu_2^2}{(1-\beta_1)\mu_1}( \sigma^2 + G^2)\\
\end{align*} 

Summing from $t=1$ to $T$,

\begin{align*}
\sum_{t=1}^T (f( x_t) - f^*) &
\leq  \frac{1}{2\eta\mu_1 }(E[\|z_{1} -x^* \|^2] - E[\|z_{T}-x^* \|^2] ) +\frac{\eta\beta_1^2(\sigma^2+G^2)}{(1-\beta_1)^2\mu_1}E[\sum_{j=1}^d (\frac{1}{\sqrt{v_{0}}+\epsilon})^2-(\frac{1}{\sqrt{v_{T}}+\epsilon})^2]\\
&+\frac{\eta\mu_2^2 T}{\mu_1}(\sigma^2+G^2)
+ \frac{\beta_1(\sigma^2+G^2)}{2\eta\mu_1(1-\beta_1)}E[\sum_{j=1}^d (\frac{1}{\sqrt{v_{0,j}}+\epsilon})^2-(\frac{1}{\sqrt{v_{T,j}}+\epsilon})^2]\\
&+ \frac{\eta\beta_1D^2 T}{\mu_1(1-\beta_1)}+ \frac{\eta\beta_1^3 D_{\infty}^2 T}{\mu_1(1-\beta_1)^3}
+ \frac{\eta\beta_1\mu_2^2 T}{(1-\beta_1)\mu_1}( \sigma^2 + G^2)\\
&\leq  \frac{1}{2\eta\mu_1 } D^2 +\frac{\eta\beta_1^2 d (\sigma^2+G^2)}{(1-\beta_1)^2\mu_1}(\mu_2^2-\mu_1^2)
+\frac{\eta\mu_2^2 T}{\mu_1}(\sigma^2+G^2)
+ \frac{\beta_1 d(\sigma^2+G^2)}{2\eta\mu_1(1-\beta_1)}(\mu_2^2-\mu_1^2)\\
&+ \frac{\eta\beta_1D^2 T}{\mu_1(1-\beta_1)}+ \frac{\eta\beta_1^3 D_{\infty}^2 T}{\mu_1(1-\beta_1)^3}
+ \frac{\eta\beta_1\mu_2^2 T}{(1-\beta_1)\mu_1}( \sigma^2 + G^2)\\
\end{align*}

The second inequality is based on the fact that, when iteration $t$ reaches the maximum number $T$, $x_t$ is  the optimal solution, $z_T = x^*$.

By Jensen's inequality,
\begin{align*}
\frac{1}{T}\sum_{t=1}^T (f(x_t)-f^*)&\geq f(\Bar{x}_t)-f^* ,
\end{align*}
where $\Bar{x}_t = \frac{1}{T}\sum_{t=1}^T x_t$.

Then,
\begin{align*}
f(\Bar{x}_t)-f^* &\leq \frac{D^2}{2\eta\mu_1 T}  +\frac{\eta\beta_1^2 d (\sigma^2+G^2)}{(1-\beta_1)^2\mu_1 T}(\mu_2^2-\mu_1^2)
+\frac{\eta\mu_2^2}{\mu_1}(\sigma^2+G^2)
+ \frac{\beta_1 d(\sigma^2+G^2)}{2\eta\mu_1(1-\beta_1) T}(\mu_2^2-\mu_1^2)\\
&+ \frac{\eta\beta_1D^2 }{\mu_1(1-\beta_1)}+ \frac{\eta\beta_1^3 D_{\infty}^2 }{\mu_1(1-\beta_1)^3}
+ \frac{\eta\beta_1\mu_2^2 }{(1-\beta_1)\mu_1}( \sigma^2 + G^2)\\
\end{align*}

By plugging the stepsize $\eta = O(\frac{1}{\sqrt{T}})$, we complete the proof of \textsc{Adam} in non-strongly convex case.

\begin{align*}
f(\Bar{x}_t)-f^* &\leq \frac{D^2}{2\mu_1\sqrt{T}}  +\frac{\beta_1^2 d (\sigma^2+G^2)}{(1-\beta_1)^2\mu_1 T\sqrt{T}}(\mu_2^2-\mu_1^2)
+\frac{\mu_2^2 }{\mu_1\sqrt{T}}(\sigma^2+G^2)
+ \frac{\beta_1 d(\sigma^2+G^2)}{2\mu_1(1-\beta_1) \sqrt{T}}(\mu_2^2-\mu_1^2)\\
&+ \frac{\beta_1D^2 }{\mu_1(1-\beta_1)\sqrt{T}}+ \frac{\beta_1^3 D_{\infty}^2 }{\mu_1(1-\beta_1)^3\sqrt{T}}
+ \frac{\beta_1\mu_2^2 }{(1-\beta_1)\mu_1\sqrt{T}}( \sigma^2 + G^2)\\
 &= O(\frac{1}{\sqrt{T}})+O(\frac{1}{T\sqrt{T}}) = O(\frac{1}{\sqrt{T}}).
\end{align*}
\end{proof}

\begin{remark}
The leading item of convergence order of \textsc{Adam} should be $O(\frac{\Tilde{C}}{\sqrt{T}})$, where $\Tilde{C} = \frac{D^2}{2\mu_1}+\frac{\mu_2^2 }{\mu_1}(\sigma^2+G^2)
+ \frac{\beta_1 d(\sigma^2+G^2)}{2\mu_1(1-\beta_1) }(\mu_2^2-\mu_1^2) + \frac{\beta_1D^2 }{\mu_1(1-\beta_1)}+ \frac{\beta_1^3 D_{\infty}^2 }{\mu_1(1-\beta_1)^3}
+ \frac{\beta_1\mu_2^2 }{(1-\beta_1)\mu_1}( \sigma^2 + G^2)$. With fixed $L, \sigma, G, \beta_1, D, D_{\infty}$, $\Tilde{C} = O(\frac{d}{\epsilon^2})$, which also contains $\epsilon$ as well as dimension $d$, here with bigger $\epsilon$, the order should be better, this also supports the discussion in our main paper.
\end{remark}

The analysis of \textsc{Sadam} is similar to \textsc{Adam}, by replacing the bounded pairs $(\mu_1, \mu_2)$ with $(\mu_3, \mu_4)$, we briefly give convergence result below.

\begin{proof} 
\textsc{Sadam} case:

\begin{align*}
f(\Bar{x}_t)-f^*&\leq \frac{D^2}{2\eta\mu_3 T}  +\frac{\eta\beta_1^2 d (\sigma^2+G^2)}{(1-\beta_1)^2\mu_3 T}(\mu_4^2-\mu_3^2)
+\frac{\eta\mu_4^2 }{\mu_3}(\sigma^2+G^2)
+ \frac{\beta_1 d(\sigma^2+G^2)}{2\eta\mu_3(1-\beta_1) T}(\mu_4^2-\mu_3^2)\\
&+ \frac{\eta\beta_1D^2 }{\mu_3(1-\beta_1)}+ \frac{\eta\beta_1^3 D_{\infty}^2 }{\mu_3(1-\beta_1)^3}
+ \frac{\eta\beta_1\mu_4^2 }{(1-\beta_1)\mu_3}( \sigma^2 + G^2)\\
\end{align*}

By plugging the stepsize $\eta = O(\frac{1}{\sqrt{T}})$, we get the convergence rate of \textsc{Sadam} in non-strongly convex case.
\begin{align*}
f(\Bar{x}_t)-f^* &\leq \frac{D^2}{2\mu_3\sqrt{T}}  +\frac{\beta_1^2 d (\sigma^2+G^2)}{(1-\beta_1)^2\mu_3 T\sqrt{T}}(\mu_4^2-\mu_3^2)
+\frac{\mu_4^2 }{\mu_3\sqrt{T}}(\sigma^2+G^2)
+ \frac{\beta_1 d(\sigma^2+G^2)}{2\mu_3(1-\beta_1) \sqrt{T}}(\mu_4^2-\mu_3^2)\\
&+ \frac{\beta_1D^2 }{\mu_3(1-\beta_1)\sqrt{T}}+ \frac{\beta_1^3 D_{\infty}^2 }{\mu_3(1-\beta_1)^3\sqrt{T}}
+ \frac{\beta_1\mu_4^2 }{(1-\beta_1)\mu_3\sqrt{T}}( \sigma^2 + G^2)\\
 &= O(\frac{1}{\sqrt{T}})+O(\frac{1}{T\sqrt{T}}) = O(\frac{1}{\sqrt{T}}).
\end{align*}

For brevity, 
\begin{align*}
f(\Bar{x}_t)-f^* 
&= O(\frac{1}{\sqrt{T}}).
\end{align*}
\end{proof}

\begin{remark}
The leading item of convergence order of \textsc{Sadam} should be $O(\frac{\Tilde{C}}{\sqrt{T}})$, where $\Tilde{C} = \frac{D^2}{2\mu_3}+\frac{\mu_4^2d }{\mu_3}(\sigma^2+G^2)
+ \frac{\beta_1 d(\sigma^2+G^2)}{2\mu_3(1-\beta_1) }(\mu_4^2-\mu_3^2) + \frac{\beta_1D^2 }{\mu_3(1-\beta_1)}+ \frac{\beta_1^3 D_{\infty}^2 }{\mu_3(1-\beta_1)^3}
+ \frac{\beta_1\mu_4^2 }{(1-\beta_1)\mu_3}( \sigma^2 + G^2)$. With fixed $L, \sigma, G, \beta_1, D, D_{\infty}$, $\Tilde{C} = O(d\beta log(1+e^\beta))=O(d\beta^2)$, with small $\beta$, the \textsc{Sadam} will be similar to SGD convergence rate, and $\beta$ is a much smaller number compared with $1/\epsilon$, proving that \textsc{Sadam} method perfoms better than \textsc{Adam} in terms of convergence rate.
\end{remark}

\subsection{P-L Condition}

Suppose that strongly convex assumption holds, we can easily deduce the P-L condition (see Lemma \ref{SC}), which shows that P-L condition is much weaker than strongly convex condition. And we further prove the convergence of \textsc{Adam}-type optimizer (\textsc{Adam} and \textsc{Sadam}) under the P-L condition in non-strongly convex case, which can be extended to the strongly convex case as well.

\begin{lemma}\label{SC}
Suppose that $f$ is continuously diffentiable and strongly convex with parameter $\gamma$. Then $f$ has the unique minimizer, denoted as $f^* = f(x^*)$. Then for any $x\in \mathbb{R}^d$, we have
\begin{equation*}
\|\nabla f (x)\|^2 \geq 2\gamma (f(x)-f^*).
\end{equation*}
\end{lemma}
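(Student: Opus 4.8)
The plan is to derive the inequality directly from the definition of $\gamma$-strong convexity by a completing-the-square argument applied to the quadratic lower bound for $f$. First I would record that, by item III of Definition~\ref{def} (with the parameter $\mu$ there taken equal to $\gamma$), a continuously differentiable $\gamma$-strongly convex function satisfies, for all $x,y\in\mathbb{R}^d$,
$$f(y)\ \ge\ f(x)+\nabla f(x)^{T}(y-x)+\frac{\gamma}{2}\,\|y-x\|^{2}.$$
Before exploiting this bound I would observe that its right-hand side, viewed as a function of $y$, is a strictly convex quadratic that diverges to $+\infty$ as $\|y\|\to\infty$; hence $f$ is coercive and, being continuous, attains its infimum, while strict convexity makes the minimizer $x^{*}$ unique. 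This justifies writing $f^{*}=f(x^{*})=\min_{y\in\mathbb{R}^d}f(y)$, so that the statement is meaningful.

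Next, with $x$ held fixed, I would take the infimum over $y\in\mathbb{R}^d$ of both sides of the displayed inequality. The left-hand side has infimum $f^{*}$. On the right-hand side, the quadratic $y\mapsto f(x)+\nabla f(x)^{T}(y-x)+\frac{\gamma}{2}\|y-x\|^{2}$ is minimized at $y=x-\frac{1}{\gamma}\nabla f(x)$, where its value is $f(x)-\frac{1}{2\gamma}\|\nabla f(x)\|^{2}$. Since the left-hand side pointwise dominates the right-hand side, so do their infima, and therefore
$$f^{*}\ \ge\ f(x)-\frac{1}{2\gamma}\,\|\nabla f(x)\|^{2}.$$
Rearranging gives $\|\nabla f(x)\|^{2}\ \ge\ 2\gamma\,(f(x)-f^{*})$, which is exactly the asserted inequality (it is the P-L condition of Definition~\ref{def} item II with $\lambda=\gamma$).

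I expect essentially no obstacle here; this is a standard one-line reduction once the quadratic lower bound is in hand. The only points worth stating carefully are (i) the existence of the minimizer, so that $f^{*}$ is well defined, which is covered by the coercivity remark, and (ii) the elementary monotonicity of infima, namely that $\phi\ge\psi$ pointwise forces $\inf\phi\ge\inf\psi$, which is what licenses passing from the pointwise strong-convexity inequality to the bound on $f^{*}$. One should resist substituting the single point $y=x-\frac{1}{\gamma}\nabla f(x)$ in place of the infimum step, since that would only compare $f$ at that point with the quadratic and would not by itself produce the required bound on $f^{*}$.
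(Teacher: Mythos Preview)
Your proposal is correct and follows essentially the same approach as the paper: both proofs exploit the quadratic lower bound from strong convexity and minimize it to obtain $f^{*}\ge f(x)-\frac{1}{2\gamma}\|\nabla f(x)\|^{2}$. The only cosmetic difference is that the paper first substitutes $y=x^{*}$ and then lower-bounds the resulting expression by $\min_{\xi}\bigl(\nabla f(x)^{T}\xi+\frac{\gamma}{2}\|\xi\|^{2}\bigr)$, whereas you take the infimum over $y$ on both sides from the outset; the computations coincide, and your version is slightly more careful in justifying existence of the minimizer.
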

\begin{proof}
	From  strongly convex assumption,
\begin{align*}
	f^* &\geq f(x)+\nabla f(x)^{T}(x^{*}-x)+\frac{\gamma}{2}\|x^{*}-x\|^{2}\\
	&\geq f(x)+\min_{\xi}(\nabla f(x)^{T}\xi+\frac{\gamma}{2}\|\xi\|^{2})\\
	&= f(x)-\frac{1}{2\gamma}\|\nabla f(x)\|^{2}
\end{align*}
Letting $\xi=x^{*}-x,$ when $\xi=-\frac{\nabla f(x)}{\gamma}$, the quadratic function can achieve its minimum.
\end{proof}

We restate our theorems under PL condition.
\begin{theorem}
Suppose $f(x)$ satisfies Assumption 1 and PL condition (with parameter $\lambda$) in non-strongly convex case and $v_t\geq v_{t-1}$. Let $\eta_t = \eta = O(\frac{1}{T})$, \textsc{Adam} and \textsc{Sadam} have convergence rate
\begin{align*}
E[f(x_t)-f^*]\leq O(\frac{1}{T}).
\end{align*}
\end{theorem}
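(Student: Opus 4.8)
The plan is to mirror, almost line for line, the nonconvex analyses of \textsc{Adam} and \textsc{Sadam} already carried out above, and then use the PL inequality to turn the resulting ``summed'' estimate into a one-step near-geometric recursion. Write $\mu_l$ for the common A-LR lower bound of Lemma~\ref{bound1} ($\mu_l=\mu_1$ for \textsc{Adam}, $\mu_l=\mu_3$ for \textsc{Sadam}) and $\mu_u$ for the matching upper bound. First I would work along the momentum surrogate $z_t=x_t+\frac{\beta_1}{1-\beta_1}(x_t-x_{t-1})$ of Lemma~\ref{z_t}, apply $L$-smoothness to $f(z_{t+1})$ versus $f(z_t)$, take expectations, and control the two cross terms with Lemma~\ref{square} and Lemma~\ref{product} together with Lemmas~\ref{m_t}, \ref{v_t}, and the A-LR lower bound. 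Exactly as in the nonconvex proof, this yields
\begin{equation*}
\eta\mu_l\,E[\|\nabla f(x_t)\|^2]\;\le\;E[f(z_t)-f(z_{t+1})]+A_t+B\eta^2 ,
\end{equation*}
where $B$ depends only on $L,\sigma,G,\beta_1,\mu_l,\mu_u$, and $A_t$ gathers the ``A-LR difference'' terms, which are nonnegative and telescope in $t$ precisely because the hypothesis $v_t\ge v_{t-1}$ forces every coordinate of the A-LR to be nonincreasing.

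Next I would invoke the PL condition of Definition~\ref{def}, $\|\nabla f(x_t)\|^2\ge 2\lambda\,(f(x_t)-f^*)\ge 0$, and transfer it from $f(x_t)$ to $f(z_t)$ so that the left-hand side telescopes against the right. Since $z_t-x_t=\frac{\beta_1}{1-\beta_1}(x_t-x_{t-1})$ has norm of order $\eta$, one further application of smoothness plus the bounded-gradient assumption gives $|f(z_t)-f(x_t)|=O(\eta)$, hence $f(x_t)-f^*\ge (f(z_t)-f^*)-O(\eta)$. Substituting into the previous display produces the one-step contraction
\begin{equation*}
E[f(z_{t+1})-f^*]\;\le\;(1-2\lambda\mu_l\eta)\,E[f(z_t)-f^*]+A_t+B'\eta^2 ,
\end{equation*}
valid once $\eta$ is small enough that $\rho:=1-2\lambda\mu_l\eta\in(0,1)$, which holds for $\eta=O(1/T)$ and large $T$.

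Finally I would unroll this recursion from $t=1$ to $T$:
\begin{equation*}
E[f(z_{T+1})-f^*]\;\le\;\rho^{T}E[f(z_1)-f^*]+\sum_{t=1}^{T}\rho^{T-t}\bigl(A_t+B'\eta^2\bigr).
\end{equation*}
Because $0<\rho<1$, the geometric-sum tail of the $B'\eta^2$ terms is at most $B'\eta^2/(1-\rho)=B'\eta/(2\lambda\mu_l)=O(\eta)$; and since $\rho^{T-t}\le 1$, the weighted sum of the $A_t$'s is bounded by $\sum_{t=1}^T A_t$, which telescopes (again using $v_t\ge v_{t-1}$) to $O(\eta)$. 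With $z_1=x_1$ and $\eta=O(1/T)$, both error sums are $O(1/T)$, so $E[f(z_{T+1})-f^*]\le \rho^{T}E[f(x_1)-f^*]+O(1/T)$, which is the form reported in Theorem~\ref{th3} with $\mu_l$ in place of $\mu_1$; transferring back from $z_{T+1}$ to $x_{T+1}$ via the same $O(\eta)$ smoothness estimate, and running the identical argument from any intermediate iterate, gives $E[f(x_t)-f^*]\le O(1/T)$. The \textsc{Sadam} case is word-for-word the same after replacing $(\mu_1,\mu_2)$ by $(\mu_3,\mu_4)$ and using the softplus-calibrated A-LR in Lemmas~\ref{square} and \ref{product}.

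The main obstacle is the bookkeeping around the surrogate $z_t$: the A-LR lower bound naturally lands on $\|\nabla f(x_t)\|^2$ and hence on $f(x_t)-f^*$, whereas the telescoping structure of the smoothness inequality is stated in terms of $f(z_t)$, so one must absorb the $O(\eta)$ mismatch (after one contraction step it becomes $O(\eta^2)$) into the error budget, and — crucially — argue that the weighted telescoping sums $\sum_{t}\rho^{T-t}A_t$ stay $O(\eta)$ rather than blowing up to $O(1)$, which is exactly where the monotonicity hypothesis $v_t\ge v_{t-1}$ is needed. The remaining non-vanishing prefactor $\rho^{T}E[f(x_1)-f^*]$ is what dictates the small step size and is carried explicitly, as in Theorem~\ref{th3}, rather than driven to zero.
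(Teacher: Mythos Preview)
Your backbone matches the paper's: start from $L$-smoothness on the surrogate $z_t$, lower-bound the adaptive inner product by $\eta\mu_l\|\nabla f(x_t)\|^2$, invoke PL, and unroll a near-geometric recursion with rate $1-2\lambda\mu_l\eta$. The substantive difference is how the $z_t$/$x_t$ mismatch is handled. The paper converts the recursion from $f(z_t)$ to $f(x_t)$ \emph{at every step}, using convexity to lower-bound $f(z_{t+1})\ge f(x_{t+1})+\tfrac{\beta_1}{1-\beta_1}\langle\nabla f(x_{t+1}),x_{t+1}-x_t\rangle$ and $L$-smoothness to upper-bound $f(z_t)$ analogously, then bounds each leftover inner product by $(\sigma^2+G^2)\mu_u$ via Lemma~\ref{lemma:elementwise}. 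Those per-step $O(\eta)$ corrections sum to $O(\eta T)$, which is why the paper's proof in fact sets $\eta=1/T^2$ (despite the $O(1/T)$ in the restated theorem) to make that term $O(1/T)$. Your route instead keeps the recursion in $z_t$ throughout and only feeds the mismatch $|f(z_t)-f(x_t)|=O(\eta)$ through the $2\lambda\eta\mu_l$ factor, so it enters as $O(\eta^2)$ per step and becomes $O(\eta)$ after the geometric sum; the telescoping $A_t$'s are treated identically. This is more elementary---you never invoke convexity for the transfer, only the bounded-gradient Lipschitz estimate---and genuinely tighter in the step size. The leading $\rho^T\bigl(f(x_1)-f^*\bigr)$ term is handled the same way in both arguments: it is carried explicitly rather than shown to be $O(1/T)$, so neither proof literally establishes the bare $O(1/T)$ headline, only the explicit display of Theorem~\ref{th3}.
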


\begin{proof}
\textsc{Adam} case: 

Starting from L-smoothness, and borrowing the previous results we already have 

\begin{align*}
E[f(z_{t+1})-f(z_t)] & \leq \frac{\eta \beta_1}{1-\beta_1} G\sqrt{\sigma^2+G^2} E[\sum_{j=1}^d \frac{1}{\sqrt{v_{t-1,j}}+\epsilon}-\frac{1}{\sqrt{v_{t,j}}+\epsilon}] \\
&+ \frac{L^2\eta^2\mu_2^2 }{2}(\frac{\beta_1}{1-\beta_1})^2 (\sigma^2+G^2) +  \frac{\eta^2\mu_2^2}{2}( \sigma^2 + G^2)
- E\langle\nabla f(x_t), \frac{\eta}{\sqrt{v_t}+\epsilon} \odot g_t\rangle\\
&+\frac{L\eta^2\beta_1^2(\sigma^2+G^2)}{(1-\beta_1)^2}E[\sum_{j=1}^d (\frac{1}{\sqrt{v_{t-1,j}}+\epsilon})^2-(\frac{1}{\sqrt{v_{t,j}}+\epsilon})^2]
+L\eta^2\mu_2^2(\sigma^2+G^2)\\
E\langle\nabla f(x_t), \frac{1}{\sqrt{v_t}+\epsilon}&\odot g_t\rangle 
\geq \mu_1 \| \nabla f( x_t)\|^2
\end{align*}

Therefore, we get: 
\begin{align*}
E[f(z_{t+1})-f(z_t)] & \leq \frac{\eta \beta_1}{1-\beta_1} G\sqrt{\sigma^2+G^2} E[\sum_{j=1}^d \frac{1}{\sqrt{v_{t-1,j}}+\epsilon}-\frac{1}{\sqrt{v_{t,j}}+\epsilon}] \\
&+ \frac{L^2\eta^2\mu_2^2 }{2}(\frac{\beta_1}{1-\beta_1})^2 (\sigma^2+G^2) +  \frac{\eta^2\mu_2^2}{2}( \sigma^2 + G^2)
- \eta\mu_1 \|\nabla f(x_t)\|^2\\
&+\frac{L\eta^2\beta_1^2(\sigma^2+G^2)}{(1-\beta_1)^2}E[\sum_{j=1}^d (\frac{1}{\sqrt{v_{t-1,j}}+\epsilon})^2-(\frac{1}{\sqrt{v_{t,j}}+\epsilon})^2]
+L\eta^2\mu_2^2(\sigma^2+G^2)\\
\end{align*}

From P-L condition assumption,

\begin{align*}
E[f(z_{t+1})] & \leq E[f(z_t)]+ \frac{\eta \beta_1}{1-\beta_1} G\sqrt{\sigma^2+G^2} E[\sum_{j=1}^d \frac{1}{\sqrt{v_{t-1,j}}+\epsilon}-\frac{1}{\sqrt{v_{t,j}}+\epsilon}] \\
&+ \frac{L^2\eta^2\mu_2^2 }{2}(\frac{\beta_1}{1-\beta_1})^2 (\sigma^2+G^2) +  \frac{\eta^2\mu_2^2}{2}( \sigma^2 + G^2)
- 2\lambda\eta\mu_1 E[f(x_t) - f^*] \\
&+\frac{L\eta^2\beta_1^2(\sigma^2+G^2)}{(1-\beta_1)^2}E[\sum_{j=1}^d (\frac{1}{\sqrt{v_{t-1,j}}+\epsilon})^2-(\frac{1}{\sqrt{v_{t,j}}+\epsilon})^2]
+L\eta^2\mu_2^2(\sigma^2+G^2)\\
\end{align*}

From convexity,
\begin{align*}
f(z_{t+1}) &\geq f(x_{t+1}) + \frac{\beta_1}{1-\beta_1}<\nabla f(x_{t+1}), x_{t+1} - x_{t}> \\
&=  f(x_{t+1}) + \frac{\beta_1}{1-\beta_1}<\nabla f(x_{t+1}),\frac{\eta}{\sqrt{v_t}+\epsilon}\odot m_{t}>
\end{align*}
From L-smoothness,
\begin{align*} 
f(z_t)\leq f(x_t)+ \frac{\beta_1}{1-\beta_1}<\nabla f(x_t), x_t - x_{t-1}> +\frac{L}{2}(\frac{\beta_1}{1-\beta_1})^2\|x_{t}-x_{t-1}\|^2.
\end{align*}

Then we can obtain 

\begin{align*}
 & E[f(x_{t+1})] + \frac{\beta_1}{1-\beta_1} E[<\nabla f(x_{t+1}),\frac{\eta}{\sqrt{v_t}+\epsilon}\odot m_{t}>]\\
& \leq E[f(x_t)]+ \frac{\beta_1}{1-\beta_1}E[<\nabla f(x_t), x_t - x_{t-1}>] +\frac{L}{2}(\frac{\beta_1}{1-\beta_1})^2E[\|x_{t}-x_{t-1}\|^2]\\
&+ \frac{\eta \beta_1}{1-\beta_1} G\sqrt{\sigma^2+G^2} E[\sum_{j=1}^d \frac{1}{\sqrt{v_{t-1,j}}+\epsilon}-\frac{1}{\sqrt{v_{t,j}}+\epsilon}] \\
&+ \frac{L^2\eta^2\mu_2^2 }{2}(\frac{\beta_1}{1-\beta_1})^2 (\sigma^2+G^2) +  \frac{\eta^2\mu_2^2}{2}( \sigma^2 + G^2)
- 2\lambda\eta\mu_1 E[f(x_t) - f^*] \\
&+\frac{L\eta^2\beta_1^2(\sigma^2+G^2)}{(1-\beta_1)^2}E[\sum_{j=1}^d (\frac{1}{\sqrt{v_{t-1,j}}+\epsilon})^2-(\frac{1}{\sqrt{v_{t,j}}+\epsilon})^2]
+L\eta^2\mu_2^2(\sigma^2+G^2)\\
&=  E[f(x_t)]+\frac{\beta_1}{1-\beta_1}E [<\nabla f(x_t), \frac{\eta}{\sqrt{v_{t-1}}+\epsilon} \odot m_{t-1}>] +\frac{L\eta^2}{2}(\frac{\beta_1}{1-\beta_1})^2
E[\|\frac{1}{\sqrt{v_{t-1}}+\epsilon} \odot m_{t-1}\|^2 ]\\
&+ \frac{\eta \beta_1}{1-\beta_1} G\sqrt{\sigma^2+G^2} E[\sum_{j=1}^d \frac{1}{\sqrt{v_{t-1,j}}+\epsilon}-\frac{1}{\sqrt{v_{t,j}}+\epsilon}] \\
&+ \frac{L^2\eta^2\mu_2^2 }{2}(\frac{\beta_1}{1-\beta_1})^2 (\sigma^2+G^2) +  \frac{\eta^2\mu_2^2}{2}( \sigma^2 + G^2)
- 2\lambda\eta\mu_1 E[f(x_t) - f^*] \\
&+\frac{L\eta^2\beta_1^2(\sigma^2+G^2)}{(1-\beta_1)^2}E[\sum_{j=1}^d (\frac{1}{\sqrt{v_{t-1,j}}+\epsilon})^2-(\frac{1}{\sqrt{v_{t,j}}+\epsilon})^2]
+L\eta^2\mu_2^2(\sigma^2+G^2)\\
\end{align*}

By rearranging, 

\begin{align*} 
E[f(x_{t+1})] 
&\leq E[f(x_t)]+\frac{\beta_1\eta}{1-\beta_1} (E [<\nabla f(x_t), \frac{1}{\sqrt{v_{t-1}}+\epsilon} \odot m_{t-1}>] -E [<\nabla f(x_{t+1}), \frac{1}{\sqrt{v_{t}}+\epsilon} \odot m_{t}>]) \\
&+\frac{L\eta^2}{2}(\frac{\beta_1}{1-\beta_1})^2 E[\|\frac{1}{\sqrt{v_{t-1}}+\epsilon} \odot m_{t-1}\|^2 ]
+\frac{\eta \beta_1}{1-\beta_1} G\sqrt{\sigma^2+G^2} E[\sum_{j=1}^d \frac{1}{\sqrt{v_{t-1,j}}+\epsilon}-\frac{1}{\sqrt{v_{t,j}}+\epsilon}] \\
&+ \frac{L^2\eta^2\mu_2^2 }{2}(\frac{\beta_1}{1-\beta_1})^2 (\sigma^2+G^2) +  \frac{\eta^2\mu_2^2}{2}( \sigma^2 + G^2)
- 2\lambda\eta\mu_1 E[f(x_t) - f^*] \\
&+\frac{L\eta^2\beta_1^2(\sigma^2+G^2)}{(1-\beta_1)^2}E[\sum_{j=1}^d (\frac{1}{\sqrt{v_{t-1,j}}+\epsilon})^2-(\frac{1}{\sqrt{v_{t,j}}+\epsilon})^2]
+L\eta^2\mu_2^2(\sigma^2+G^2)\\
\end{align*}

From the fact $\pm <a,b>\leq \frac{1}{2}a^2 +\frac{1}{2}b^2$, and Lemma \ref{lemma:elementwise}, \ref{m_t},

\begin{align*}
E[<\nabla f(x_t), \frac{1}{\sqrt{v_{t-1}}+\epsilon} \odot m_{t-1}>] &=  E[< \nabla f(x_{t+1})\odot \sqrt{\frac{1}{\sqrt{v_{t-1}}+\epsilon}},m_{t} \odot \sqrt{\frac{1}{\sqrt{v_{t-1}}+\epsilon}}>] \\
&\leq \frac{G^2\mu_2}{2} + \frac{(\sigma^2+G^2)\mu_2}{2} 
\leq (\sigma^2+G^2)\mu_2 \\
\end{align*}
Similar, 
\begin{align*}
-E[<\nabla f(x_{t+1}), \frac{1}{\sqrt{v_{t}}+\epsilon} \odot m_{t}>] &=  -E [< \nabla f(x_{t+1})\odot \sqrt{\frac{1}{\sqrt{v_{t-1}}+\epsilon}},m_{t} \odot \sqrt{\frac{1}{\sqrt{v_{t-1}}+\epsilon}}>] \\
&\leq \frac{G^2\mu_2}{2} + \frac{(\sigma^2+G^2)\mu_2}{2} 
\leq (\sigma^2+G^2)\mu_2 \\ 
\end{align*}

Then,

\begin{align*} 
E[f(x_{t+1})] 
&\leq E[f(x_t)]+\frac{2\beta_1\eta\mu_2}{1-\beta_1}(\sigma^2+G^2) +\frac{L\eta^2\mu_2^2}{2}(\frac{\beta_1}{1-\beta_1})^2 (\sigma^2+G^2)\\
&+\frac{\eta \beta_1}{1-\beta_1} G\sqrt{\sigma^2+G^2} E[\sum_{j=1}^d \frac{1}{\sqrt{v_{t-1,j}}+\epsilon}-\frac{1}{\sqrt{v_{t,j}}+\epsilon}] \\
&+ \frac{L^2\eta^2\mu_2^2 }{2}(\frac{\beta_1}{1-\beta_1})^2 (\sigma^2+G^2) +  \frac{\eta^2\mu_2^2}{2}( \sigma^2 + G^2)
- 2\lambda\eta\mu_1 E[f(x_t) - f^*] \\
&+\frac{L\eta^2\beta_1^2(\sigma^2+G^2)}{(1-\beta_1)^2}E[\sum_{j=1}^d (\frac{1}{\sqrt{v_{t-1,j}}+\epsilon})^2-(\frac{1}{\sqrt{v_{t,j}}+\epsilon})^2]
+L\eta^2\mu_2^2(\sigma^2+G^2)\\
\end{align*}

\begin{align*}
E[f(x_{t+1}) - f^*]
&\leq(1- 2\lambda\eta\mu_1 ) E[f(x_t) -f^*] +\frac{2\beta_1\eta\mu_2}{1-\beta_1}(\sigma^2+G^2) +\frac{L\eta^2\mu_2^2}{2}(\frac{\beta_1}{1-\beta_1})^2 (\sigma^2+G^2)\\
&+\frac{\eta \beta_1}{1-\beta_1} G\sqrt{\sigma^2+G^2} E[\sum_{j=1}^d \frac{1}{\sqrt{v_{t-1,j}}+\epsilon}-\frac{1}{\sqrt{v_{t,j}}+\epsilon}] \\
&+ \frac{L^2\eta^2\mu_2^2 }{2}(\frac{\beta_1}{1-\beta_1})^2 (\sigma^2+G^2) +  \frac{\eta^2\mu_2^2}{2}( \sigma^2 + G^2)\\
&+\frac{L\eta^2\beta_1^2(\sigma^2+G^2)}{(1-\beta_1)^2}E[\sum_{j=1}^d (\frac{1}{\sqrt{v_{t-1,j}}+\epsilon})^2-(\frac{1}{\sqrt{v_{t,j}}+\epsilon})^2]
+L\eta^2\mu_2^2(\sigma^2+G^2)\\
&\leq (1- 2\lambda\eta\mu_1 ) E[f(x_t) -f^*] +( \frac{2\beta_1\eta\mu_2}{1-\beta_1} +\frac{L\eta^2\mu_2^2}{2}(\frac{\beta_1}{1-\beta_1})^2\\
&+\frac{\eta \beta_1}{1-\beta_1} E[\sum_{j=1}^d \frac{1}{\sqrt{v_{t-1,j}}+\epsilon}-\frac{1}{\sqrt{v_{t,j}}+\epsilon}] 
+ \frac{L^2\eta^2\mu_2^2 }{2}(\frac{\beta_1}{1-\beta_1})^2 +\frac{\eta^2\mu_2^2}{2}\\
&+\frac{L\eta^2\beta_1^2}{(1-\beta_1)^2}E[\sum_{j=1}^d (\frac{1}{\sqrt{v_{t-1,j}}+\epsilon})^2-(\frac{1}{\sqrt{v_{t,j}}+\epsilon})^2]
+L\eta^2\mu_2^2) (\sigma^2+G^2)\\
\end{align*}
The last inequality holds because $G\sqrt{\sigma^2+G^2} \leq \sigma^2+G^2$.

Let 
\begin{align*}
    \theta &= 1- 2\lambda\eta\mu_1\\
    \Theta_t &=( \frac{2\beta_1\eta\mu_2}{1-\beta_1} +\frac{L\eta^2\mu_2^2}{2}(\frac{\beta_1}{1-\beta_1})^2
 +\frac{\eta \beta_1}{1-\beta_1} E[\sum_{j=1}^d \frac{1}{\sqrt{v_{t-1,j}}+\epsilon}-\frac{1}{\sqrt{v_{t,j}}+\epsilon}] 
+ \frac{L^2\eta^2\mu_2^2 }{2}(\frac{\beta_1}{1-\beta_1})^2\\ &+\frac{\eta^2\mu_2^2}{2}
+\frac{L\eta^2\beta_1^2}{(1-\beta_1)^2}E[\sum_{j=1}^d (\frac{1}{\sqrt{v_{t-1,j}}+\epsilon})^2-(\frac{1}{\sqrt{v_{t,j}}+\epsilon})^2]
+L\eta^2\mu_2^2) (\sigma^2+G^2)
\end{align*}
then we have
\begin{align*}
E[f(x_{t+1})-f^*]&\leq \theta E[f(x_t)-f^*]+\Theta_t.
\end{align*}

Let $\Phi_t = E[f(x_t)-f^*]$, then $\Phi_1 = E[f(x_1)-f^*]$,
 
\begin{align*}
\Phi_{t+1}&\leq \theta\Phi_t + \Theta_t \leq \theta^2 \Phi_{t-1} +\theta \Theta_{t-1}+\Theta_{t}\\
&\cdots\\
&\leq \theta^t \Phi_{1} + \theta^{t-1}\Theta_1+\cdots+ \theta \Theta_{t-1}+\Theta_{t}\\
&\overset{\theta<1}{\leq} \theta^t \Phi_{1} + \Theta_1+\cdots+ \Theta_{t-1}+\Theta_{t}.
\end{align*}

Let $t=T$, 
\begin{align*}
\Phi_{T+1}&\leq \theta^T \Phi_{1} + \Theta_1+\cdots+ \Theta_{T-1}+\Theta_{T}\\
&\leq \theta^T \Phi_{1} + ( \frac{2\beta_1\eta\mu_2 T}{1-\beta_1} +\frac{L\eta^2\mu_2^2 T}{2}(\frac{\beta_1}{1-\beta_1})^2
 +\frac{\eta \beta_1}{1-\beta_1} E[\sum_{j=1}^d \frac{1}{\sqrt{v_{0,j}}+\epsilon}-\frac{1}{\sqrt{v_{T,j}}+\epsilon}]\\ 
&+ \frac{L^2\eta^2\mu_2^2T }{2}(\frac{\beta_1}{1-\beta_1})^2 +\frac{\eta^2\mu_2^2T}{2}\\
&+\frac{L\eta^2\beta_1^2}{(1-\beta_1)^2}E[\sum_{j=1}^d (\frac{1}{\sqrt{v_{0,j}}+\epsilon})^2-(\frac{1}{\sqrt{v_{T,j}}+\epsilon})^2]
+L\eta^2\mu_2^2 T) (\sigma^2+G^2)\\
&\leq \theta^T \Phi_{1} + ( \frac{2\beta_1\eta\mu_2 T}{1-\beta_1} +\frac{L\eta^2\mu_2^2 T}{2}(\frac{\beta_1}{1-\beta_1})^2
 +\frac{\eta \beta_1 d}{1-\beta_1} (\mu_2-\mu_1)
+ \frac{L^2\eta^2\mu_2^2 T}{2}(\frac{\beta_1}{1-\beta_1})^2\\ &+\frac{\eta^2\mu_2^2 T}{2}
+\frac{L\eta^2\beta_1^2 d}{(1-\beta_1)^2}(\mu_2^2-\mu_1^2)
+L\eta^2\mu_2^2 T) (\sigma^2+G^2)\\
&=  \theta^T \Phi_{1} +O(\eta T) + O(\eta^2 T) + O(\eta)+O(\eta^2) \\
\end{align*}

From the above inequality, $\eta$ should be set less than $O(\frac{1}{T})$ to ensure all items in the RHS small enough. 

Set $\eta =\frac{1}{T^2}$, then $\theta = 1- 2\lambda\eta\mu_1=1-\frac{2\lambda\mu_1}{T^2} $
\begin{align*}
\Phi_{T+1}&= \theta^T \Phi_{1} +O(\frac{1}{T}) + O(\frac{1}{T^3}) + O(\frac{1}{T^2})  +O(\frac{1}{T^4}) \\
&= \theta^T \Phi_{1} +O(\frac{1}{T}) \longrightarrow{0}
\end{align*}

With appropriate $\eta$, we can derive the convergence rate under P-L condition (strongly convex) case. 

The proof of \textsc{Sadam} is exactly same as \textsc{Adam}, by replacing the bounded pairs $(\mu_1, \mu_2)$ with $(\mu_3, \mu_4)$, and we can also get:
\begin{align*}
\Phi_{T+1}&\leq \theta^T \Phi_{1} + \Theta_1+\cdots+ \Theta_{T-1}+\Theta_{T}\\
&\leq \theta^T \Phi_{1} + ( \frac{2\beta_1\eta\mu_4 T}{1-\beta_1} +\frac{L\eta^2\mu_4^2 T}{2}(\frac{\beta_1}{1-\beta_1})^2
 +\frac{\eta \beta_1}{1-\beta_1} E[\sum_{j=1}^d \frac{1}{softplus(v_{0,j})}-\frac{1}{softplus(v_{T,j})}]\\
 &+ \frac{L^2\eta^2\mu_4^2T }{2}(\frac{\beta_1}{1-\beta_1})^2 +\frac{\eta^2\mu_4^2T}{2}\\
&+\frac{L\eta^2\beta_1^2}{(1-\beta_1)^2}E[\sum_{j=1}^d (\frac{1}{softplus(v_{0,j})})^2-(\frac{1}{softplus(v_{T,j})})^2]
+L\eta^2\mu_4^2 T) (\sigma^2+G^2)\\
&\leq \theta^T \Phi_{1} + ( \frac{2\beta_1\eta\mu_4 T}{1-\beta_1} +\frac{L\eta^2\mu_4^2 T}{2}(\frac{\beta_1}{1-\beta_1})^2
 +\frac{\eta \beta_1 d}{1-\beta_1} (\mu_4-\mu_3)
+ \frac{L^2\eta^2\mu_4^2 T}{2}(\frac{\beta_1}{1-\beta_1})^2\\ &+\frac{\eta^2\mu_4^2 T}{2}
+\frac{L\eta^2\beta_1^2 d}{(1-\beta_1)^2}(\mu_4^2-\mu_3^2)
+L\eta^2\mu_4^2 T) (\sigma^2+G^2)\\
&= \theta^T \Phi_{1} +O(\eta T) + O(\eta^2 T) + O(\eta)+O(\eta^2)\\
\end{align*}

By setting appropriate $\eta$, we can also prove the \textsc{Sadam} converges under PL condition (and strongly convex).

Set $\eta = O(\frac{1}{T^2})$,
\begin{align*}
E[f(x_{T+1})-f^*]&\leq (1-\frac{2\lambda \mu_3}{T^2})^T E[f(x_1)-f^*]+O(\frac{1}{T}).
\end{align*}
\end{proof}

Overall, we have proved \textsc{Adam} algorithm and \textsc{Sadam} in all commonly used conditions, our designed algorithms always enjoy the same convergence rate compared with \textsc{Adam}, and even get better results with appropriate choice of $\beta$ defined in {\em softplus} function. The proof procedure can be easily extended to other adaptive gradient algorithms, and theoretical results support the discussion and experiments in our main paper.

\end{document}